\pgfplotsset{every axis/.append style={tick label style={/pgf/number format/fixed},font=\scriptsize,ylabel near ticks,xlabel near ticks,grid=major}}
\newcommand{\matr}[1]{\mathbf{#1}} 
\renewcommand{\vec}[1]{\bm{#1}}
\newcommand{\rvec}[1]{\bm{#1}}
\DeclarePairedDelimiterX{\infdivx}[2]{(}{)}{%
  #1\;\delimsize\|\;#2%
}
\newcommand{\infdiv}{\infdivx}
\newcommand{\Cov}{\mathrm{Cov}}
\title{Text to Image Synthesis\\[1ex]     
       Using Generative Adversarial Networks }   
\author{Cristian Bodnar\\[1cm]{\small Supervisor: Dr Jon Shapiro}} 
\renewcommand\nomgroup[1]{%
  \item[\bfseries
  \ifstrequal{#1}{A}{Numbers and Arrays}{%
  \ifstrequal{#1}{B}{Sets}{%
  \ifstrequal{#1}{C}{Functions}{%
  \ifstrequal{#1}{D}{Calculus}{%
  \ifstrequal{#1}{E}{Distributions and Information Theory}{%
  }}}}}%
]}
\begin{document}

\baselineskip=16pt plus1pt
\setlength{\parskip}{0.1 \baselineskip}

\setcounter{secnumdepth}{3}
\setcounter{tocdepth}{3}

\maketitle                  
\begin{dedication}
This report is dedicated to my brother Andrei
\end{dedication}        
\begin{acknowledgements}
I would like to thank my supervisor Dr Jon Shapiro for his guidance and constructive suggestions during the planning and development of this research work. I also want to express my gratitude to Teodora Stoleru for her assistance, patience, and useful critique of this report. Finally, I wish to thank my family for their support and encouragement throughout my studies.
\end{acknowledgements}   
\begin{abstract}
Generating images from natural language is one of the primary applications of recent conditional generative models. Besides testing our ability to model conditional, highly dimensional distributions, text to image synthesis has many exciting and practical applications such as photo editing or computer-aided content creation. Recent progress has been made using Generative Adversarial Networks (GANs). This material starts with a gentle introduction to these topics and discusses the existent state of the art models. Moreover, I propose Wasserstein GAN-CLS, a new model for conditional image generation based on the Wasserstein distance which offers guarantees of stability. Then, I show how the novel loss function of Wasserstein GAN-CLS can be used in a Conditional Progressive Growing GAN. In combination with the proposed loss, the model boosts by $7.07\%$ the best Inception Score (on the Caltech birds dataset) of the models which use only the sentence-level visual semantics. The only model which performs better than the Conditional Wasserstein Progressive growing GAN is the recently proposed AttnGAN which uses word-level visual semantics as well.
\end{abstract}          

\begin{romanpages}          
\tableofcontents            
\listoffigures              
\listoftables               
\printnomenclature[2.5cm]
\end{romanpages}            

\chapter{Background}

This chapter contains a short introduction to the problem of text to image synthesis, generative models and Generative Adversarial Networks. An overview of these subjects is needed for the reader to understand the more complex ideas introduced in the later chapters.

\section{Text to Image Synthesis}\label{sec:text-to-image}

One of the most common and challenging problems in Natural Language Processing and Computer Vision is that of image captioning: given an image, a text description of the image must be produced. Text to image synthesis is the reverse problem: given a text description, an image which matches that description must be generated. 

From a high-level perspective, these problems are not different from language translation problems. In the same way similar semantics can be encoded in two different languages, images and text are two different ``languages'' to encode related information. 

Nevertheless, these problems are entirely different because text-image or image-text conversions are highly multimodal problems. If one tries to translate a simple sentence such as ``This is a beautiful red flower'' to French, then there are not many sentences which could be valid translations. If one tries to produce a mental image of this description, there is a large number of possible images which would match this description. Though this multimodal behaviour is also present in image captioning problems, there the problem is made easier by the fact that language is mostly sequential. This structure is exploited by conditioning the generation of new words on the previous (already generated) words. Because of this, text to image synthesis is a harder problem than image captioning.

The generation of images from natural language has many possible applications in the future once the technology is ready for commercial applications. People could create customised furniture for their home by merely describing it to a computer instead of spending many hours searching for the desired design. Content creators could produce content in tighter collaboration with a machine using natural language.

\subsection{Datasets}

\begin{table}[hb]
    \centering
    \begin{tabular}{| l | l | l | l |}
    \hline
    Dataset/Number of images & Train & Test & Total \\ \hline
    Flowers & 7,034 & 1,155 & 8,192 \\ \hline
    Augmented flowers (256x256) & 675,264 & 110,880 & 786,432 \\ \hline
    Birds & 8,855 & 2,933 & 11,788 \\ \hline
    Augmented birds (256x256) & 850,080 & 281,568 & 1,131,648 \\ \hline

    \end{tabular}
    \caption{Summary statistics of the datasets.}
    \label{tab:datasets}
\end{table}

The publicly available datasets used in this report are the Oxford-102 flowers dataset \cite{Nilsback08} and the Caltech CUB-200 birds dataset \cite{WahCUB_200_2011}. These two datasets are the ones which are usually used for research on text to image synthesis. Oxford-102 contains 8,192 images from 102 categories of flowers. The CUB-200 dataset includes 11,788 pictures of 200 types of birds. These datasets include only photos, but no descriptions. Nevertheless, I used the publicly available captions collected by Reed et al.\cite{reed2016learning} for these datasets using Amazon Mechanical Turk. Each of the images has five descriptions. They are at least ten words in length, they do not describe the background, and they do not mention the species of the flower or bird (Figure \ref{fig:dataset-sample}). 

\begin{figure}[h]
\centering
\includegraphics[width=0.9\textwidth]{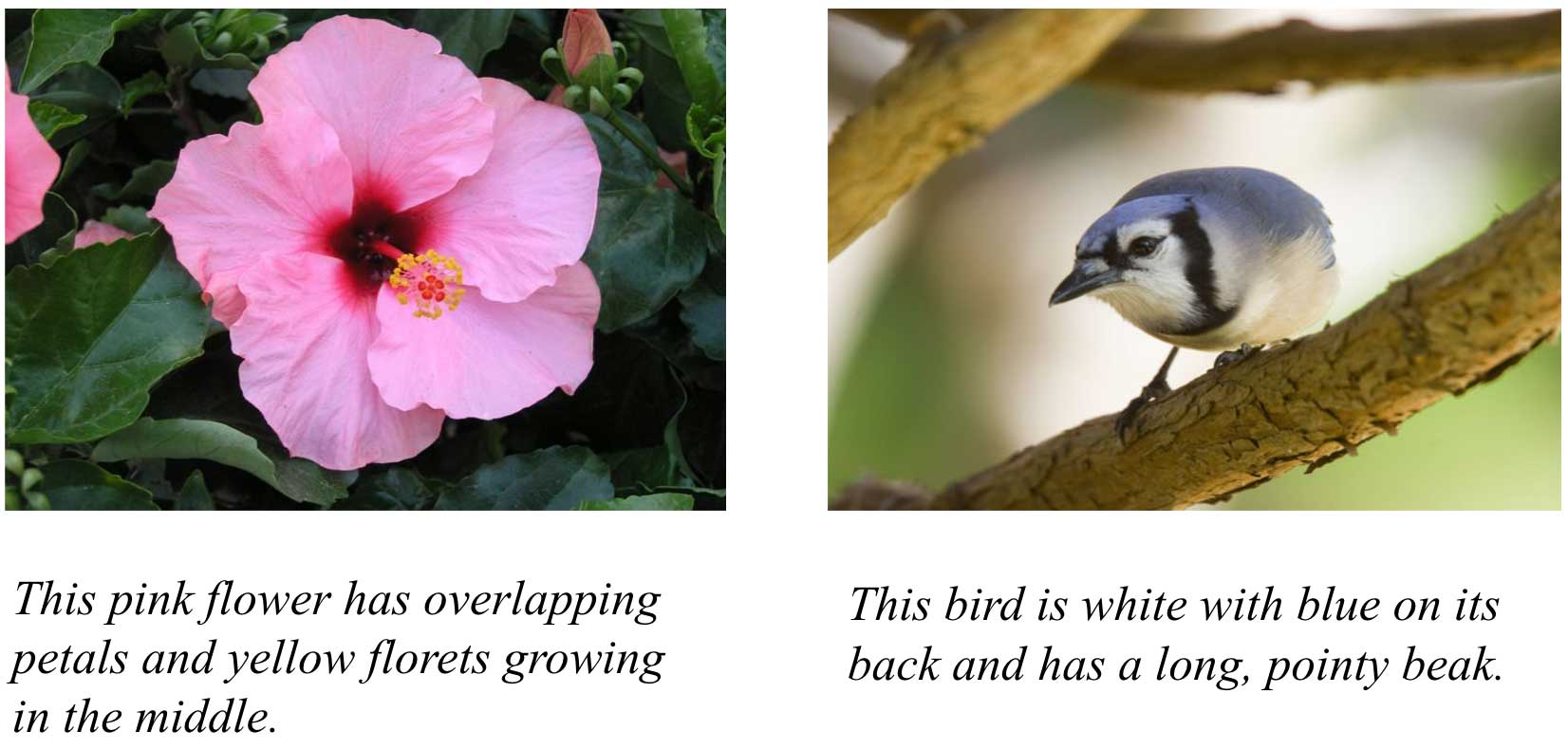}
\caption[Dataset samples]{A sample from the Oxford-102 dataset (left) and CUB-200 dataset (right), together with one of their associated descriptions collected by Reed et al. \cite{reed2016learning}}
\label{fig:dataset-sample}
\end{figure}

Because the number of images is small, I perform data augmentation. I apply random cropping and random left-right flipping of the images. I split the datasets into train and test datasets such that they contain disjoint classes of images. The datasets are summarised in Table \ref{tab:datasets}.

The report is focused on the flowers dataset for practical reasons detailed in Chapter \ref{sec:conclusions} where I discuss this decision. Nevertheless, a small number of experiments were run on the birds dataset as well.

\section{Generative Models}

The task of text to image synthesis perfectly fits the description of the problem generative models attempt to solve. The current best text to image results are obtained by Generative Adversarial Networks (GANs), a particular type of generative model. Before introducing GANs, generative models are briefly explained in the next few paragraphs.

Before defining them, I will introduce the necessary notation. Consider a dataset $\mathbb{X} = \{\vec{x}^{(1)}, \dots, \vec{x}^{(m)}\}$ composed of $m$ samples where $\vec{x}^{(i)}$ is a vector. In the particular case of this report, $\vec{x}^{(i)}$ is an image encoded as a vector of pixel values. The dataset is produced by sampling the images from an unknown data generating distribution $\mathbb{P}_{r}$, where $r$ stands for real. One could think of the data generating distribution as the hidden distribution of the Universe which describes a particular phenomenon. A generative model is a model which learns to generate samples from a distribution $\mathbb{P}_{g}$ which estimates $\mathbb{P}_{r}$. The model distribution, $\mathbb{P}_{g}$, is a hypothesis about the true data distribution $\mathbb{P}_r$.

Most generative models explicitly learn a distribution $\mathbb{P}_{g}$ by maximising the expected log-likelihood $\mathbb{E}_{\rvec{X} \sim \mathbb{P}_r}\log(\mathbb{P}_g(\vec{x} \vert \vec{\theta}))$ with respect to $\vec{\theta}$, the parameters of the model. Intuitively, maximum likelihood learning is equivalent to putting more probability mass around the regions of $\mathcal{X}$ with more examples from $\mathbb{X}$ and less around the regions with fewer examples. It can be shown that the log-likelihood maximisation is equivalent to minimising the Kullback-Leibler divergence $KL\infdiv{\mathbb{P}_{r}}{\mathbb{P}_{g}} = \int_{\mathcal{X}} \mathbb{P}_r \log \frac{\mathbb{P}_r}{\mathbb{P}_g}dx$ assuming $\mathbb{P}_{r}$ and $\mathbb{P}_{g}$ are densities. One of the valuable properties of this approach is that no knowledge of the unknown $\mathbb{P}_r$ is needed because the expectation can be approximated with enough samples according to the weak law of large numbers. 

Generative Adversarial Networks (GANs) \cite{NIPS2014_5423} are another type of generative model which takes a different approach based on game theory. The way they work and how they compare to other models is explained in the next section.

\section{Generative Adversarial Networks}

Generative Adversarial Networks (GANs) solve most of the shortcomings of the already existent generative models:

\begin{itemize}
    \item The quality of the images generated by GANs is better than the one of the other models.
    \item GANs do not need to learn an explicit density $\mathbb{P}_g$ which for complex distributions might not even exist as it will later be seen.
    \item GANs can generate samples efficiently and in parallel. For example, they can generate an image in parallel rather than pixel by pixel. 
    \item GANs are flexible, both regarding the loss functions and the topology of the network which generates samples.
    \item When GANs converge, $\mathbb{P}_g = \mathbb{P}_r$. This equality does not hold for other types of models which contain a biased estimator in the loss they optimise.
\end{itemize}

Nevertheless, these improvements come at the expense of two new significant problems: the instability during training and the lack of any indication of convergence. GAN training is relatively stable on specific architectures and for carefully chosen hyper-parameters, but this is far from ideal. Progress has been made to address these critical issues which I will discuss in Chapter \ref{sec:research}.

\begin{figure}[h]
    \centering
    \includegraphics[width=0.9\textwidth]{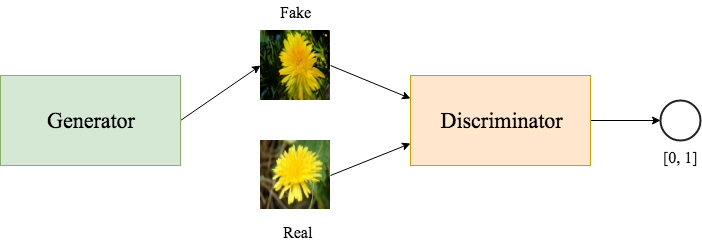}
    \caption[The GAN framework]{High-level view of the GAN framework. The generator produces synthetic images. The discriminator takes images as input and outputs the probability it assigns to the image of being real. A common analogy is that of an art forger (the generator) which tries to forge paintings and an art investigator (the discriminator) which tries to detect imitations.}
    \label{fig:gan_framework}
\end{figure}

The GAN framework is based on a game played by two entities: the discriminator (also called the critic) and the generator. Informally, the game can be described as follows. The generator produces images and tries to trick the discriminator that the generated images are real. The discriminator, given an image, seeks to determine if the image is real or synthetic. The intuition is that by continuously playing this game, both players will get better which means that the generator will learn to generate realistic images (Figure \ref{fig:gan_framework}). 

I will now show how this intuition can be modelled mathematically. Let $\mathbb{X}$ be a dataset of samples $\vec{x}^{(i)}$ belonging to a compact metric set $\mathcal{X}$ such as the space of images $[-1,1]^n$. The discriminator learns a parametric function $D_{\vec{\omega}}: \mathcal{X} \rightarrow [0, 1]$ which takes as input an image $\vec{x}$ and outputs the probability it assigns to the image of being real. Let $\mathcal{Z}$ be the range of a random vector $\bm{Z}$ with a simple and fixed distribution such as $p_{\rvec{Z}}=\mathcal{N}(\vec{0}, \vec{I})$. The generator learns a parametric function $G_{\vec{\theta}}: \mathcal{Z} \rightarrow \mathcal{X}$ which maps the states of the random vector $\rvec{Z}$ to the states of a random vector $\rvec{X}$. The states of $\rvec{X} \sim \mathbb{P}_g$ correspond to the images the generator creates. Thus, the generator learns to map a vector of noise to images.

The easiest way to define and analyse the game is as a zero-sum game where $D_{\vec{\omega}}$ and $G_{\vec{\theta}}$ are the strategies of the two players. Such a game can be described by a value function $V(D, G)$ which in this case represents the payoff of the discriminator. The discriminator wants to maximise $V$ while the generator wishes to minimise it. The payoff described by $V$ must be proportional to the ability of $D$ to distinguish between real and fake samples. The value function from equation \ref{eq:1} which was originally proposed in \cite{NIPS2014_5423} comes as a natural choice.
\begin{equation} \label{eq:1}
    V(D, G) = \mathbb{E}_{\bm{X} \sim \mathbb{P}_{r}}[\log(D(\vec{x}))] + \mathbb{E}_{\bm{Z} \sim p_{\rvec{Z}}}[\log(1 - D(G(\vec{z})))]
\end{equation}
On the one hand, note that $V(D, G)$ becomes higher when the discriminator can distinguish between real and fake samples. On the other hand, $V$ becomes lower when the generator is performing well, and the critic cannot distinguish well between real and fake samples.  

\begin{figure}[h]
    \centering
    \includegraphics[width=0.5\textwidth]{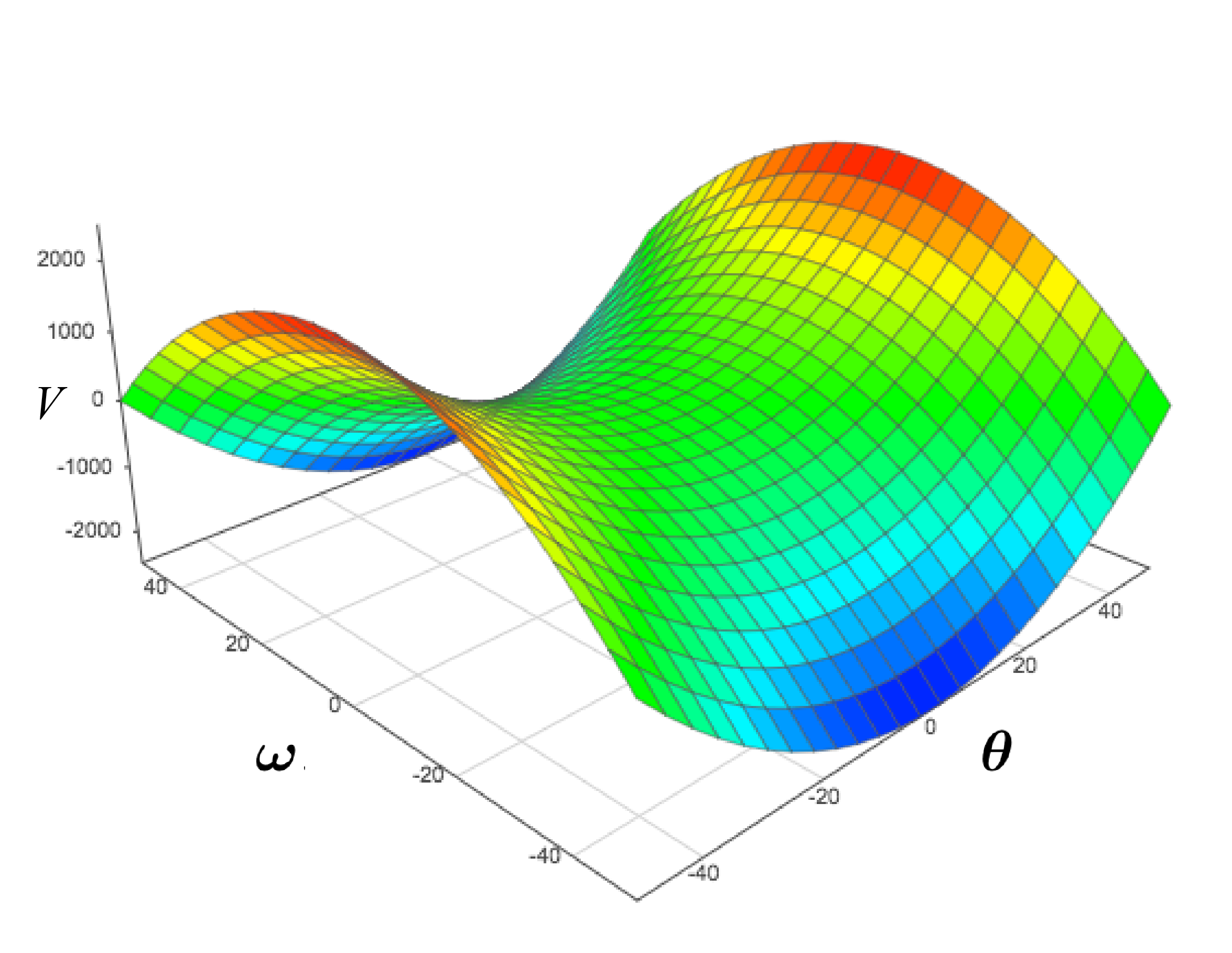}
    \caption[Saddle point example]{The Nash Equilibrium corresponds to a saddle point in space where $V$ is at a minimum with respect to $\vec{\theta}$ and at a maximum with respect to $\vec{\omega}$. Neither of the networks has any interest to change their parameters.}
    \label{fig:saddle_point}
\end{figure}

The difference from maximum likelihood models is that samples generated by the generator have an implicit distribution $\mathbb{P}_{g}$ determined by the particular value of $\vec{\theta}$. $\mathbb{P}_{g}$ cannot be explicitly evaluated. The discriminator forces the generator to bring $\mathbb{P}_g$ close to $\mathbb{P}_r$.

Zero-sum games are minimax games so the optimal parameters of the generator can be described as in Equation \ref{eq:2}.
\begin{equation} \label{eq:2}
    \vec{\theta}^* = \arg\!\min_{\vec{\theta}} \max_{\vec{\omega}} V(D, G)
\end{equation}
The solution of the minimax optimisation is a Nash Equilibrium (Figure \ref{fig:saddle_point}). Theorem \ref{theo:1} describes the exciting properties of the equilibrium for non-parametric functions. The theorem confirms the intuition that as the two players play this game, the generator will improve at producing realistic samples. 

\begin{restatable}{theorem}{fta}
\label{theo:1}
The Nash equilibrium of the (non-parametric) GAN game occurs when: 
\begin{enumerate}
    \item The strategy of the discriminator is $D = \frac{\mathbb{P}_{r}}{\mathbb{P}_{r} + \mathbb{P}_{g}}$
    \item The strategy $G$ of the generator makes $\mathbb{P}_{g}=\mathbb{P}_{r}$.
\end{enumerate}
\end{restatable}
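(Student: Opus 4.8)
The plan is to prove the two claims in sequence by a standard variational argument: first fix the generator and solve for the optimal discriminator, then substitute this optimum back into the value function and minimise over the generator. The key observation is that for a \emph{non-parametric} discriminator, the inner maximisation in Equation \ref{eq:2} can be performed pointwise, and that the resulting generator objective coincides (up to an additive constant) with a Jensen--Shannon divergence, whose minimiser is known.

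First I would rewrite $V(D,G)$ as a single integral over $\mathcal{X}$. The second expectation is taken over the noise distribution $p_{\rvec{Z}}$, but since $\mathbb{P}_g$ is by definition the pushforward of $p_{\rvec{Z}}$ under $G$, the law of the unconscious statistician gives $\mathbb{E}_{\bm{Z} \sim p_{\rvec{Z}}}[\log(1 - D(G(\vec{z})))] = \mathbb{E}_{\bm{X} \sim \mathbb{P}_g}[\log(1 - D(\vec{x}))]$, so that
\[
V(D,G) = \int_{\mathcal{X}} \Big[\, \mathbb{P}_r(\vec{x})\log D(\vec{x}) + \mathbb{P}_g(\vec{x})\log\big(1 - D(\vec{x})\big) \Big]\, dx .
\]
Because the discriminator may take any value in $[0,1]$ independently at each point, maximising the integral reduces to maximising the integrand pointwise. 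For fixed $a,b > 0$ the map $y \mapsto a\log y + b\log(1-y)$ on $[0,1]$ is strictly concave and its first-order condition yields the unique maximiser $y = a/(a+b)$. Setting $a = \mathbb{P}_r(\vec{x})$ and $b = \mathbb{P}_g(\vec{x})$ gives the optimal strategy $D^*(\vec{x}) = \tfrac{\mathbb{P}_r(\vec{x})}{\mathbb{P}_r(\vec{x}) + \mathbb{P}_g(\vec{x})}$, which is part 1.

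Next I would substitute $D^*$ back to obtain the generator's objective $C(G) := \max_{D} V(D,G)$. Writing $\log\tfrac{\mathbb{P}_r}{\mathbb{P}_r+\mathbb{P}_g} = \log\tfrac{\mathbb{P}_r}{(\mathbb{P}_r+\mathbb{P}_g)/2} - \log 2$ (and symmetrically for the $\mathbb{P}_g$ term) reconstructs the midpoint distribution and produces
\[
C(G) = KL\infdiv{\mathbb{P}_r}{\tfrac{\mathbb{P}_r+\mathbb{P}_g}{2}} + KL\infdiv{\mathbb{P}_g}{\tfrac{\mathbb{P}_r+\mathbb{P}_g}{2}} - \log 4 = 2\,JS\infdiv{\mathbb{P}_r}{\mathbb{P}_g} - \log 4 .
\]
Since the Jensen--Shannon divergence is non-negative and vanishes if and only if its arguments coincide, $C(G)$ is minimised precisely when $\mathbb{P}_g = \mathbb{P}_r$, attaining the value $-\log 4$. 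At this point $D^* \equiv \tfrac{1}{2}$, so neither player can improve its payoff unilaterally, confirming that this is the Nash equilibrium and establishing part 2.

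The step I expect to be the main obstacle is the pointwise maximisation: it is valid only under the non-parametric assumption that $D$ ranges over all measurable functions $\mathcal{X}\to[0,1]$, so that the optimum of the integral is achieved by optimising the integrand at each $\vec{x}$ separately. This must be stated carefully, as it silently fails for the parametric networks used in practice. The change of variables turning the noise expectation into an expectation over $\mathbb{P}_g$ relies on the pushforward interpretation of $\mathbb{P}_g$, and the final non-negativity of the divergence follows from applying Gibbs' inequality to each Kullback--Leibler term.
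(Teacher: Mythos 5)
Your proposal is correct and follows the same overall route as the paper's proof: fix $G$, solve for the optimal discriminator, substitute it back, and identify the generator's objective as a Jensen--Shannon divergence plus the constant $-\log 4$, which is minimised exactly when $\mathbb{P}_g = \mathbb{P}_r$. The one genuine difference is how the optimal discriminator is obtained. The paper treats $V(D,G)$ as a functional and invokes the Euler--Lagrange equation (which degenerates to $\pdv{L}{D} = 0$ since the integrand contains no $D^{\prime}$), whereas you maximise the integrand pointwise, noting that $y \mapsto a\log y + b\log(1-y)$ is strictly concave on $[0,1]$ with unique maximiser $y = a/(a+b)$. Your version is more elementary and in fact slightly stronger: Euler--Lagrange supplies only a first-order stationarity condition, while your concavity argument gives global optimality and uniqueness of $D^{*}$ directly, and it foregrounds the hypothesis that actually licenses the step --- that $D$ ranges over all measurable functions $\mathcal{X}\to[0,1]$, which is precisely the ``non-parametric'' caveat in the theorem statement. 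Two further minor points in your favour: with the standard normalisation of the Jensen--Shannon divergence you correctly get $C(G) = 2\,JS\infdiv{\mathbb{P}_r}{\mathbb{P}_g} - \log 4$, whereas the paper writes the sum of the two KL terms as $JS\infdiv{\mathbb{P}_r}{\mathbb{P}_g}$, i.e.\ it implicitly uses a definition scaled by a factor of $2$ (the conclusion is unaffected); and where the paper justifies non-negativity and identity of indiscernibles by calling $JS$ ``a metric'' (strictly, only its square root is), your appeal to Gibbs' inequality applied to each KL term is the cleaner justification.
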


For completeness, I offer a slightly different and more detailed proof than the one from \cite{NIPS2014_5423} in appendix \ref{appendix:A}. I encourage the reader to go through it. Note that when the equilibrium is reached, $D = \frac{1}{2}$. The interpretation of Theorem \ref{theo:1} is that, at the end of the learning process, the generator learned $\mathbb{P}_{r}$ and the discriminator is not able to distinguish between real and synthetic data, so its best strategy is to output $\frac{1}{2}$ for any input it receives. At equilibrium, the discriminator has a payoff of $-\log(4)$ and the generator $\log(4)$, respectively.

In practice, the generator does not minimise $V(D, G)$. This would be equivalent to minimising $\mathbb{E}_{\rvec{Z} \sim p_{\rvec{Z}}}[\log(1 - D_{\vec{\omega}}(G_{\vec{\theta}}(\vec{z})))]$, but the function $\log(1 - D_{\vec{\omega}}(G_{\vec{\theta}}(\vec{z})))$ has saturating gradient when the discriminator is performing well and the function approaches $\log(1)$. This makes it difficult for the generator to improve when it is not performing well. Instead, the generator is minimising $L_G$ from \ref{eq:cg} whose gradient saturates only when the generator is already performing well. The loss $L_{D}$ of the critic is also included in \ref{eq:cg}.
\begin{equation} \label{eq:cg}
\begin{split}
        L_G &= -\mathbb{E}_{\rvec{Z} \sim \mathbb{P}_{\vec{z}}}[\log(D_{\vec{\omega}}(G_{\vec{\theta}}(\vec{z}))) \\
        L_D &= -\mathbb{E}_{\rvec{X} \sim \mathbb{P}_{r}}[\log(D_{\vec{\omega}}(x))] - \mathbb{E}_{\rvec{Z} \sim p_{\rvec{Z}}}[\log(1 - D_{\vec{\omega}}(G_{\vec{\theta}}(\vec{z})))]
\end{split}
\end{equation}
With this generator loss function, the game is no longer a zero-sum game. Nevertheless, this loss works better in practice.

Based on the min-max expression \ref{eq:2}, one might expect that the discriminator is trained until optimality for a fixed generator, the generator is trained afterwards, and the process repeats. This approach does not work for reasons which will become clear in Section \ref{sec:wgan}. In practice, the networks are trained alternatively, for only one step each.

\subsection{Conditional Generative Adversarial Networks} \label{sec:cond-gans}

The original GAN paper \cite{NIPS2014_5423} describes how one can trivially turn GANs into a conditional generative model. To generate data conditioned on some condition vector $\vec{c}$, $\vec{c}$ is appended to both the generator and the discriminator in any of their layers. The networks will learn to adapt and adjust their parameters to these additional inputs. 

\begin{figure}[h]
\centering
\includegraphics[width=0.4\textwidth]{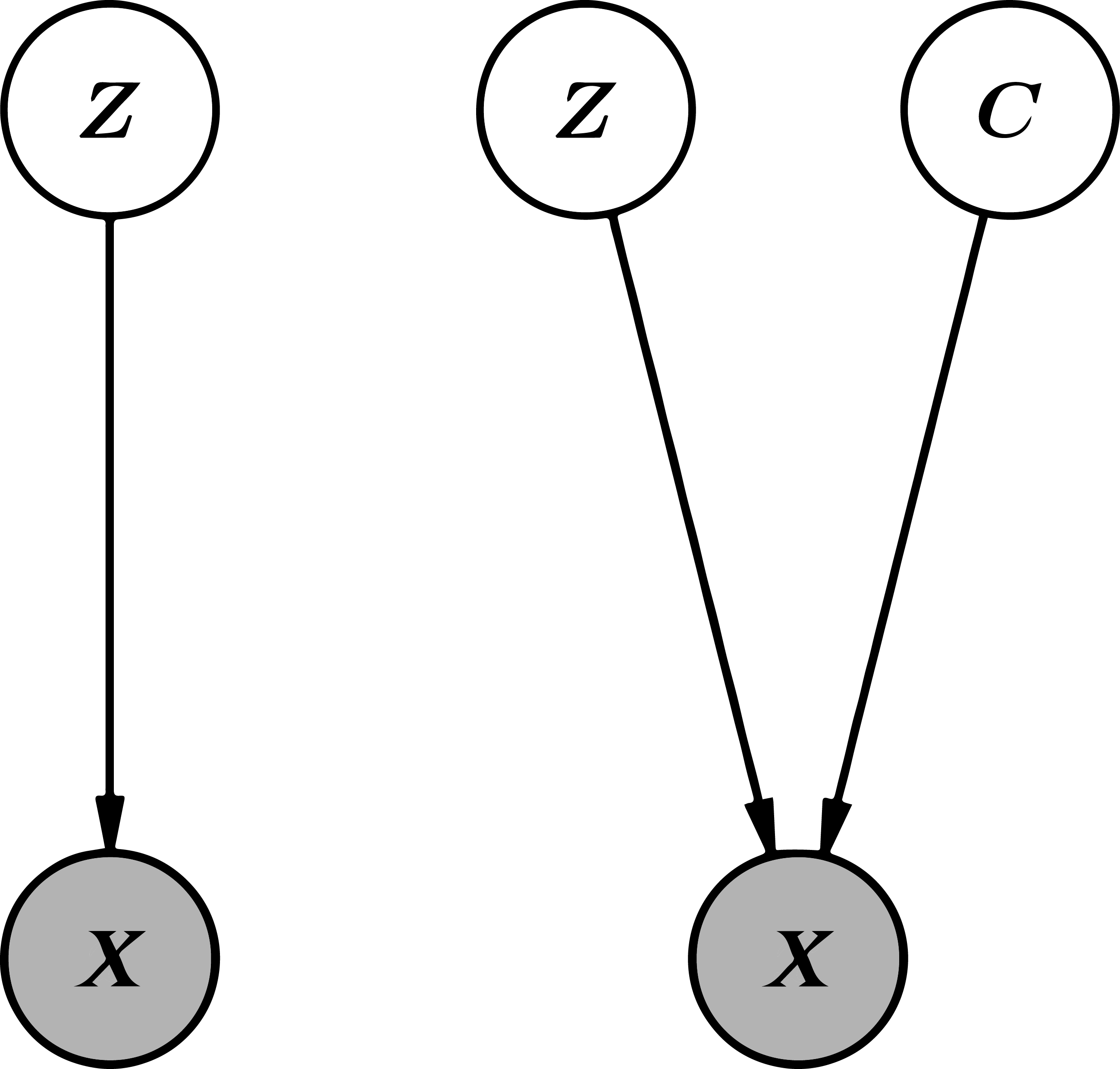}
\caption[GANs as probabilistic graphical models]{Probabilistic graphical model view of regular GANs (left) and conditional GANs (right).}
\label{fig:gan-graphical-model}
\end{figure}

Conditional GANs can also be seen from the perspective of a probabilistic graphical model (Figure \ref{fig:gan-graphical-model}). In the case of regular GANs, the noise $\rvec{Z}$ influences the observable $\rvec{X}$. For conditional $GANs$, both $\rvec{Z}$ and $\rvec{C}$ influence $\rvec{X}$. 
In the particular case of text to image synthesis, the states $\vec{c}$ of $\rvec{C}$ are vectors encoding a text description. How this encoding of a sentence into a vector is computed can vary, and it is discussed in Section \ref{sec:text-to-image}. 

\subsection{Text Embeddings} \label{sec:text-processing}

The text descriptions must be vectorised before they can be used in any model. These vectorisations are commonly referred to as text embeddings. Text embedding models were not the focus of this work, and that is why the already computed vectorisations by Reed et al. \cite{reed2016learning} are used. Other state of the art models \cite{reed2016generative, han2017stackgan} use the same embeddings and their usage makes comparisons between models easier.

\begin{figure}[h]
\centering
\includegraphics[width=0.9\textwidth]{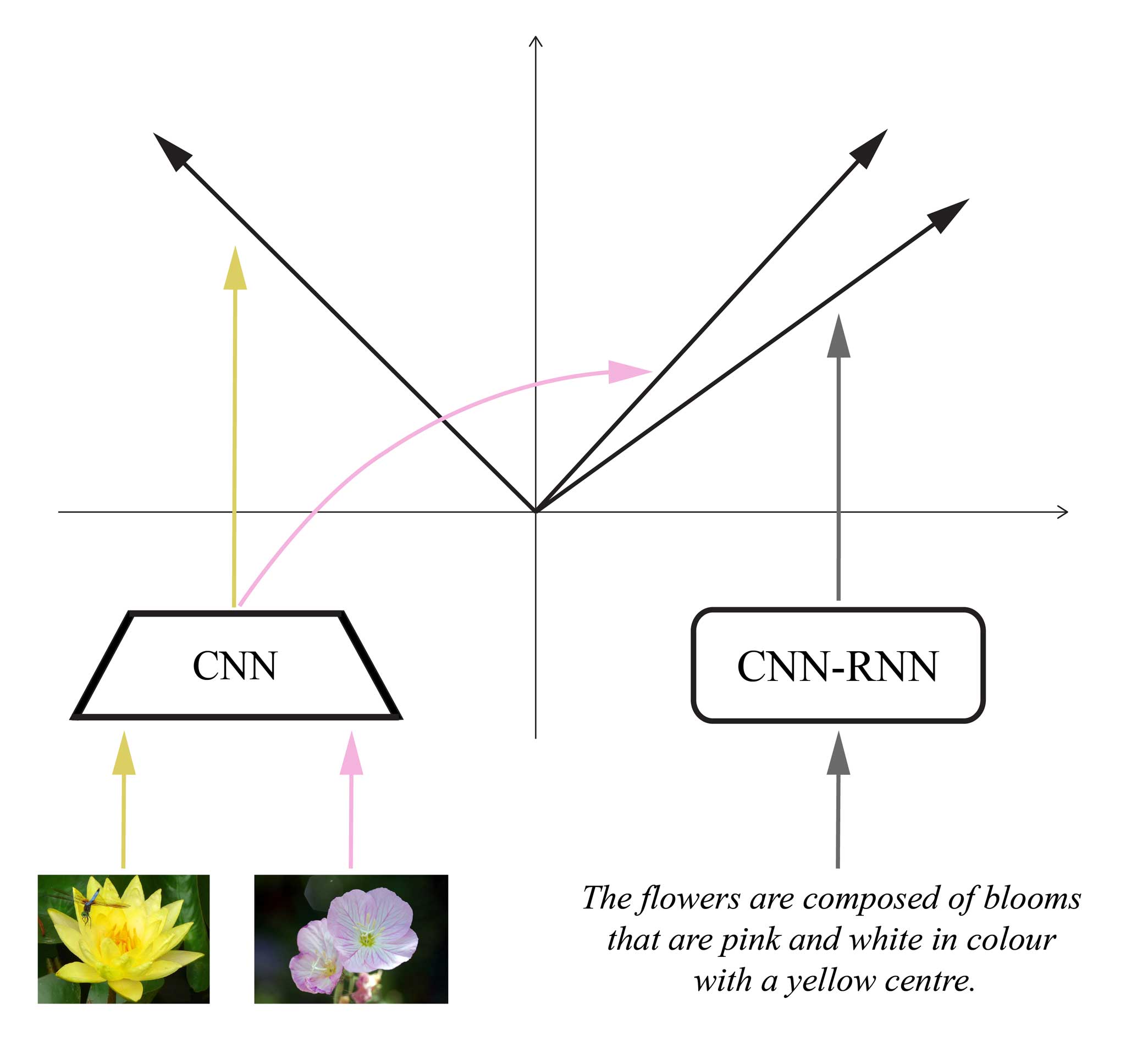}
\caption[Char-CNN-RNN encoder]{The char-CNN-RNN encoder maps images to a common embedding space. Images and descriptions which match are closer to each other. Here the embedding space is $\mathbb{R}^2$ to make visualisation easier. In practice, the preprocessed descriptions are in $\mathbb{R}^{1024}$.}
\label{fig:embeddings}
\end{figure}

The text embeddings are computed using the char-CNN-RNN encoder proposed in \cite{reed2016learning}. The encoder maps the images and the captions to a common embedding space such that images and descriptions which match are mapped to vectors with a high inner product. For this mapping, a Convolutional Neural Network (CNN) processes the images, and a hybrid Convolutional-Recurrent Neural Network (RNN) transforms the text descriptions (Figure \ref{fig:embeddings}).

A common alternative is Skip-Thought Vectors \cite{Kiros:2015:SV:2969442.2969607} which is a pure language-based model. The model maps sentences with similar syntax and semantics to similar vectors. Nevertheless, the char-CNN-RNN encoder is better suited for vision tasks as it uses the corresponding images of the descriptions as well. The embeddings are similar to the convolutional features of the images they correspond to, which makes them visually discriminative. This property reflects in a better performance when the embeddings are employed inside convolutional networks.

\chapter{State of the Art Models}

In this chapter, I first discuss in Section \ref{sec:method} the technical details of my implementation. Then, in Sections \ref{sec:gan-cls} and \ref{sec:stackgan} I present in depth the state of the art models introduced by September 2017. Section \ref{sec:other-models} briefly mentions other state of the art models which have been proposed since I started this project.

\section{Method}\label{sec:method}

All the images included in this report generated by the described models are produced by my implementation of these models. All the models which are discussed are implemented in python 3.6 using the GPU version of TensorFlow \cite{tensorflow2015-whitepaper}. 

TensorFlow is an open-source library developed by researchers from Google Brain and designed for high performance numerical and scientific computations. It is one of the most widely used libraries for machine learning research. TensorFlow offers both low level and high-level APIs which make development flexible and allow fast iteration. Moreover, TensorFlow makes use of the capabilities of modern GPUs for parallel computations to execute operations on tensors efficiently. 

I run all the experiments on a Nvidia 1080Ti which I acquired for the scope of this project. 

\section{GAN-CLS (Conditional Latent Space)} \label{sec:gan-cls}

Reed et al. \cite{reed2016generative} were the first to propose a solution with promising results for the problem of text to image synthesis. The problem can be divided into two main subproblems: finding a visually discriminative representation for the text descriptions and using this representation to generate realistic images.

In Section \ref{sec:text-processing} I briefly described how good representations for the text descriptions could be computed using the char-CNN-RNN encoder proposed in \cite{reed2016learning}. In section \ref{sec:cond-gans} I also explained how Conditional GANs could be used to generate images conditioned on some vector $\vec{c}$. GAN-CLS puts these two ideas together.

The functions $G(\vec{z})$ and $D(\vec{x})$ encountered in regular GANs become in the context of conditional GANs, $G(\vec{z}, \phi(\vec{t}))$ and $D(\vec{x}, \phi(\vec{t}))$, where $\phi: \Sigma^{*} \rightarrow \mathbb{R}^{N_{\phi}}$ is the char-CNN-RNN encoder, $\Sigma$ is the alphabet of the text descriptions, $\vec{t}$ is a text description treated as a vector of characters and $N_{\phi}$ is the number of dimensions of the embedding. The text embedding $\phi(\vec{t})$ is used as the conditional vector $\vec{c}$.

Before reading further, the reader is encouraged to have a look at Appendix \ref{appendix:nn} which includes a brief introduction to deep learning and explains the terms used to describe the architecture of the models. 

\subsection{Model Architecture} \label{sec:gan-cls-arch}

GAN-CLS uses a deep convolutional architecture for both the generator and the discriminator, similar to DC-GAN (Deep Convolutional-GAN) \cite{Radford2015UnsupervisedRL}.

In the generator, a noise vector $\vec{z}$ of dimension 128, is sampled from $\mathcal{N}(\vec{0}, \matr{I})$. The text $\vec{t}$ is passed through the function $\phi$ and the output $\phi(\vec{t})$ is then compressed to dimension 128 using a fully connected layer with a leaky ReLU activation. The result is then concatenated with the noise vector $\vec{z}$. The concatenated vector is transformed with a linear projection and then passed through a series of deconvolutions with leaky ReLU activations until a final tensor with dimension 64$\times$64$\times$3 is obtained. The values of the tensor are passed through a $\tanh$ activation to bring the pixel values in the range $[-1, 1]$.

\begin{figure}
    \centering
    \includegraphics[width=\textwidth]{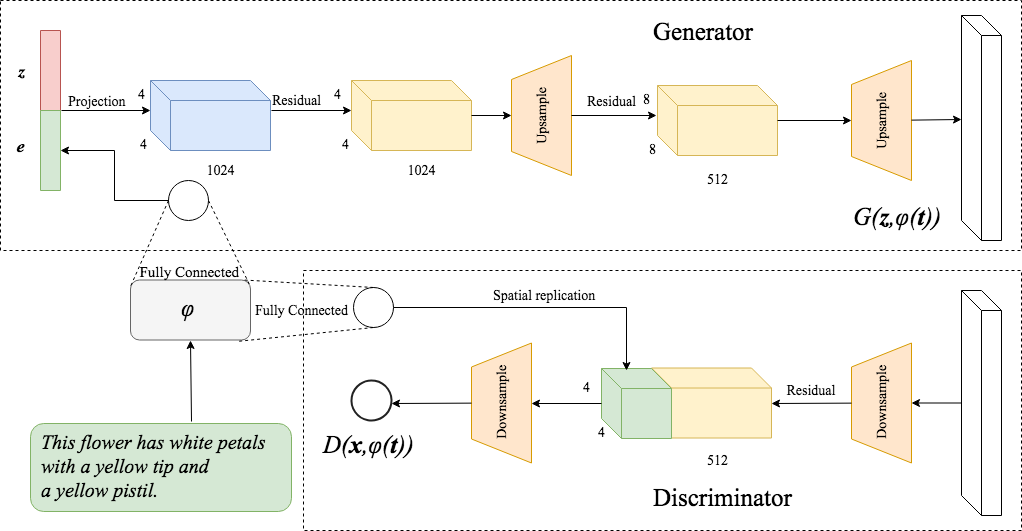}
    \caption[Architecture of Customised GAN-CLS]{Architecture of the customised GAN-CLS. Two fully connected layers compress the text embedding $\phi(\vec{t})$ and append it both in the generator and the discriminator. In the discriminator, the compressed embeddings are spatially replicated (duplicated) before being appended in depth.}
    \label{fig:gan-cls}
\end{figure}

In the discriminator, the input image is passed through a series of convolutional layers. When the spatial resolution becomes 4$\times$4, the text embeddings are compressed to a vector with 128 dimensions using a fully connected layer with leaky ReLU activations as in the generator. These compressed embeddings are then spatially replicated and concatenated in depth to the convolutional features of the network. The concatenated tensor is then passed through more convolutions until a scalar is obtained. To this scalar, a sigmoid activation function is applied to bring the value of the scalar in the range $[0, 1]$ which corresponds to a valid probability. 

The focus of the GAN-CLS paper is not on the details of the architecture of the discriminator and the generator. Thus, to obtain better results, I deviated slightly from the DC-GAN architecture, and I added one residual layer \cite{7780459} in the discriminator and two residual layers in the generator. These modifications increase the capacity of the networks and lead to more visually pleasant images. Figure \ref{fig:gan-cls} shows the architecture of the customised GAN-CLS.

\subsection{Adapting GAN Loss to Text-Image Matching}

The GAN-CLS critic has a slightly different loss function from the one presented in Equation \ref{eq:cg}. The goal of the modification is to better enforce the text-image matching by making the discriminator to be text-image matching aware. The critic cost function from Equation \ref{eq:cost-d-gan-cls} is used.
\begin{equation} \label{eq:cost-d-gan-cls}
\begin{split}
    L_D &= -\mathbb{E}_{(\rvec{X}, \rvec{E})\sim \mathbb{P}_{r-mat}}[\log(D(\vec{x}, \vec{e}))] - \frac{1}{2}(\mathbb{E}_{(\rvec{X}, \rvec{E})\sim \mathbb{P}_{ge}}[\log(1-D(\vec{x}, \vec{e}))] 
    \\ &+ \mathbb{E}_{(\vec{X}, \vec{E})\sim \mathbb{P}_{r-mis}}[\log(1 - D(\vec{x}, \vec{e}))])
\end{split}
\end{equation}
where $\mathbb{P}_{r-mat}$ is the joint distribution of image and text embeddings which match, $\mathbb{P}_{r-mis}$ is the joint distribution of image and text embeddings which mismatch, $\mathbb{P}_{ge}$ is the joint distribution of generated images and the text embeddings they were generated from and $\vec{e}=\phi(\vec{t})$ is a text embedding. In this way, the discriminator has a double functionality: distinguishing real and fake images but also distinguishing between the text-image pairs which match and those which mismatch. 

\begin{figure}[h]
    \centering
    \includegraphics[width=\textwidth]{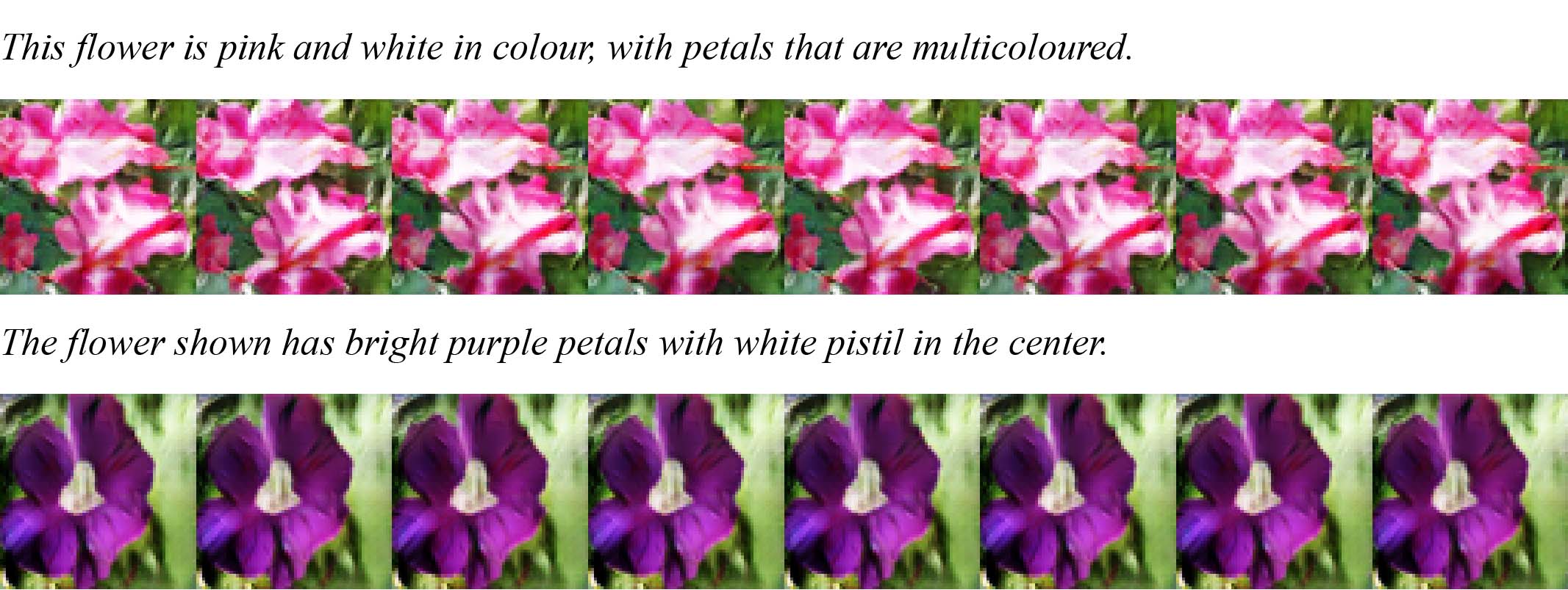}
    \caption[GAN-CLS Samples]{Samples generated from text descriptions from the test dataset. For each text description, the model generates multiple samples, each using a different input noise vector. As it can be seen the model ignores the input noise and the resulting images are extremely similar. The disappearance of the stochastic behaviour is a current research problem in Conditional GANs.}
    \label{fig:gan-cls-samples}
\end{figure}

\subsection{Training}

Adam optimiser \cite{journals/corr/KingmaB14} is used for training. Adam maintains a separate learning rate for each of the parameters of the model and uses a moving average of the first and second moment of the gradients in the computation of the parameter update. The usage of the gradient statistics makes the algorithm robust to gradient scaling and well suited for problems with noisy gradients such as this one. The parameters $\beta_1$ and $\beta_2$ control the decay rate of these moving averages. The learning rate is set to 0.0002 for both networks, $\beta_1 = 0.5$ and $\beta_2 = 0.9$. The model is trained for a total of 600 epochs with a batch size of 64. 

\subsection{Results}

Figure \ref{fig:gan-cls-samples} shows samples generated for the flowers dataset. All the shown samples are produced from descriptions from the test dataset.

\begin{figure}[h]
    \centering
    \includegraphics[width=\textwidth,height=0.8\textheight,keepaspectratio]{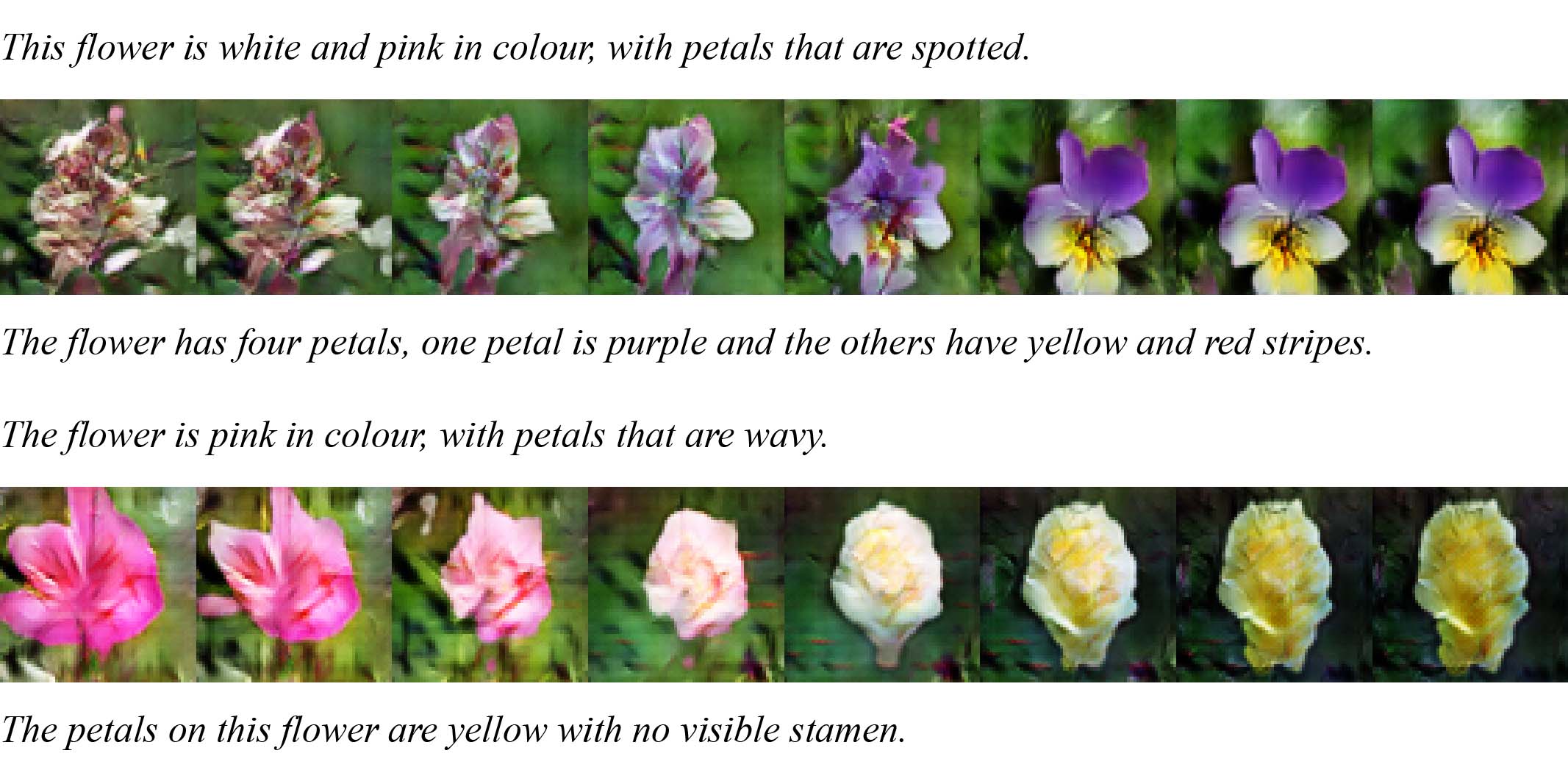}
    \caption[GAN-CLS Interpolations]{Interpolations in the conditional embedding space while maintaining the noise vector constant. The top description for each image corresponds to the image on the left and the bottom description corresponds to the image on the right. The images in between are generated from interpolations between these two descriptions.}
    \label{fig:gan-cls-interp}
\end{figure}

A common way to test that the models learns the visual semantics of the text descriptions and it does not merely memorise the description-image mappings is to generate images $G(\vec{z}, (1 - t) \vec{e}_1 + t \vec{e}_2)$ from interpolations between two text embeddings $\vec{e}_1$ and $\vec{e}_2$ where $t$ is increased from $0$ to $1$. If the model works, these transitions should be smooth. Figure \ref{fig:gan-cls-interp} shows images produced by GAN-CLS from such interpolations. 

\section{Stacked GANs} \label{sec:stackgan}

One would rarely see an artist producing a painting in full detail directly from the first attempt. By analogy, this is how GAN-CLS described in section \ref{sec:gan-cls} generates images. These architectures do not usually scale up well to higher resolutions. It would be desired to have a network architecture which is closer to the analogy of a painter who starts with the main shapes, colours, textures and then gradually adds details. 

StackGAN \cite{han2017stackgan} is such an architecture, and it uses two GANs. The first GAN, called Stage I, generates images from captions at a lower resolution of 64$\times$64 in a similar manner to GAN-CLS. The second GAN, called Stage II, has a generator which takes as input the image generated by the Stage I generator and produces a higher resolution 256$\times$256 image with more fine-grained details and better text-image matching. 

\subsection{Text Embedding Augmentation} \label{sec:cond-aug}

Besides this generation of images at multiple scales, the StackGAN paper proposes the augmentation of the conditional space. Because the number of text embeddings is small, they cover tiny, sparse regions in the embedding space clustered around their corresponding images. The model hardly understands the visual semantics of the embeddings at test time because these embeddings are usually far in the embedding space from the ones seen in training. Moreover, the low number of embeddings is likely to cause the model to overfit.

\begin{figure}
    \centering
    \includegraphics[width=\textwidth]{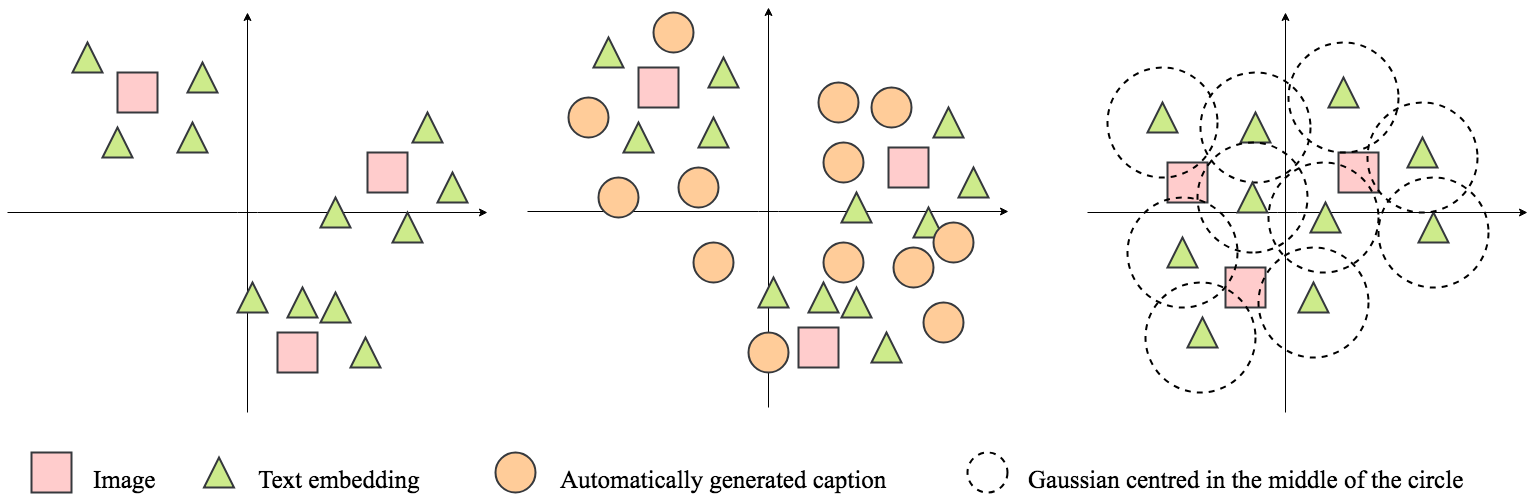}
    \caption[Augmentation in the conditional space]{Illustration of a simplified 2D conditional space before augmentation (left) and after adding two different augmentation strategies (middle and right). The image captioning system from \cite{Dong2017I2T2ILT} fills the embedding space with synthetic captions (middle). The conditional augmentation from StackGAN \cite{han2017stackgan} ensures a smooth conditional manifold by sampling from a Gaussian distribution for each text embedding (right)}
    \label{fig:cond_augl}
\end{figure}

Dong et al. \cite{Dong2017I2T2ILT} have also independently recognised this problem. They propose an image captioning system to fill the embedding space. Nevertheless, this is far from an ideal solution. The curse of dimensionality takes effect, and it is unfeasible to fill the space in such a manner. Moreover, the image captioning system adds significant computational costs.

StackGAN uses another approach inspired by another generative model, Variational Autoencoders (VAE) \cite{journals/corr/KingmaW13}. For a given text embedding $\phi(\vec{t})$, augmented embeddings can be sampled from a distribution $\mathcal{N}(\mu(\phi(\vec{t})), \Cov(\phi(\vec{t}))$. As in VAEs, to ensure that the conditional space remains smooth and the model does not overfit, a regularisation term enforces a standard normal distribution over the normal distributions of the embeddings. The regularisation term with hyper-parameter $\rho$ (Equality \ref{eq:cond-aug-reg}) consists of the $KL$ divergence between the normal distribution of the embeddings and the standard normal distribution.
\begin{equation} \label{eq:cond-aug-reg}
    L_G = - \mathbb{E}_{\rvec{X} \sim \mathbb{P}_g, \rvec{T} \sim \mathbb{P}_r}[\log(D(\vec{x})) + \rho KL\infdiv{\mathcal{N}(\vec{0}, \matr{I})}{\mathcal{N}(\mu(\phi(\vec{t})), \Sigma(\phi(\vec{t}))}] \text{, }
\end{equation}

\begin{figure}
    \centering
    \includegraphics[width=\textwidth]{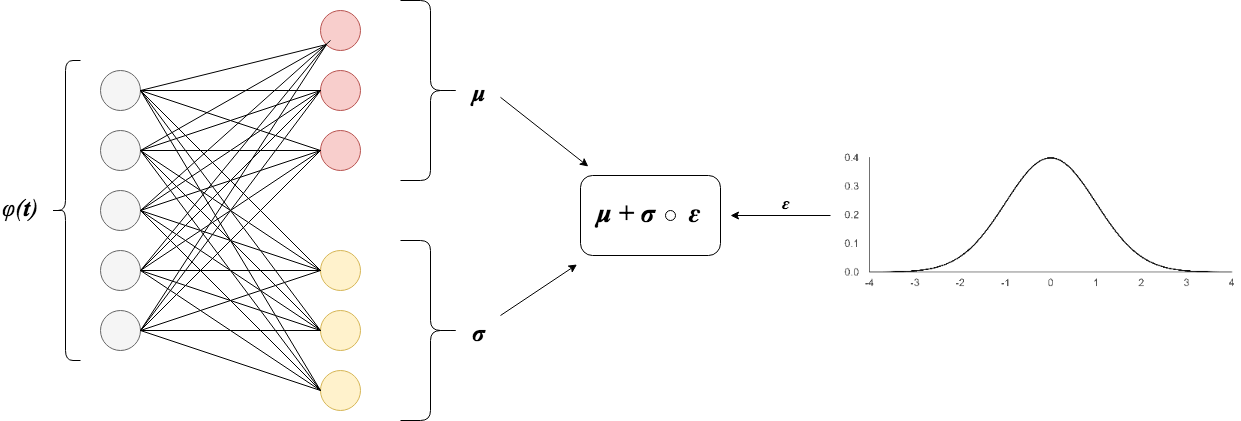}
    \caption[VAE reparametrisation trick]{The VAE reparametrisation trick. The network learns $\vec{\mu}$ and $\vec{\sigma}$ and uses a sampled $\vec{\epsilon}$ to compute an augmented embedding. The associated sampling noise causes improved image variation as the model generates different images for different samples of the same embedding.}
    \label{fig:cond-sample}
\end{figure}

The reparametrisation trick from VAEs is used to perform the sampling. With this trick, the network has the independence to learn the mean $\vec{\mu}$ and the standard deviation $\vec{\sigma}$ of the embedding. For an embedding $\vec{e} = \phi(\vec{t})$, a fully connected layer with leaky ReLU activations computes $\vec{\mu}$ and another computes $\vec{\sigma}$ and the sampled vector $\vec{\hat{e}}$ is obtained as shown in Equation \ref{eq:sample-cond} (Figure \ref{fig:cond-sample}).
\begin{equation} \label{eq:sample-cond}
    \vec{\hat{e}} = \vec{\mu} + \vec{\sigma} \circ \vec{\epsilon} \text{ , where } \epsilon \sim \mathcal{N}(\vec{0}, \matr{I}) \text{ and} \circ \text{is element-wise multiplication}
\end{equation}

\subsection{Model Architecture}

Figure \ref{fig:stack-gan} shows the full architecture of StackGAN. The architecture of the Stage I generator is identical to the one of the customised GAN-CLS (described in Section \ref{sec:gan-cls-arch}) with the addition of the conditioning augmentation (CA) module previously discussed. 

The Stage II generator starts by down-sampling the input 64$\times$64 image until it reaches a spatial resolution of 4$\times$4. To this 4$\times$4 block, the corresponding augmented text embedding is concatenated in depth to improve the text-image matching of Stage I. The concatenated block is passed through three residual layers and then up-sampled until a final tensor of 256$\times$256$\times$3 is obtained. In the end, $\tanh$ activation is applied to bring the output in $[-1, 1]$.

\begin{figure}[h]
    \centering
    \includegraphics[width=\textwidth]{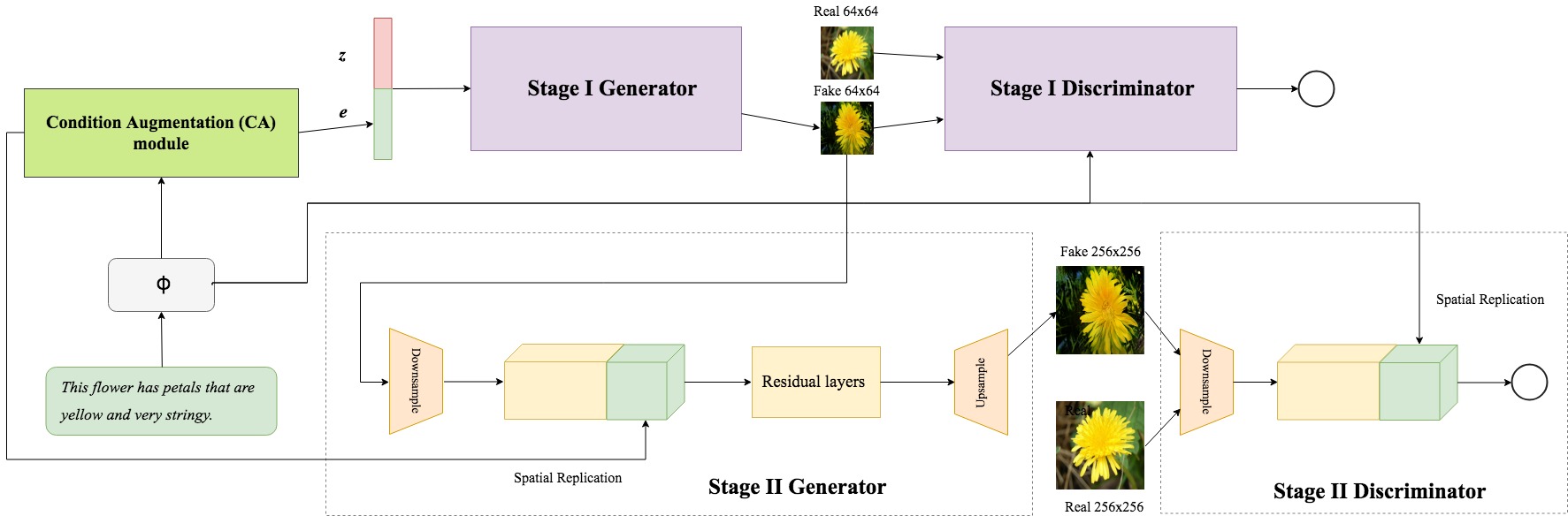}
    \caption[The architecture of StackGAN]{The architecture of StackGAN. The architecture of Stage I is identical to the customised GAN-CLS presented in the previous section. The Stage II generator takes as input and fine-tunes the image generated by Stage I. The generators of both stages use the augmented embeddings.}
    \label{fig:stack-gan}
\end{figure}

The Stage I discriminator is identical to the customised GAN-CLS discriminator previously discussed. The Stage II discriminator is also similar, with the exception that more down-sampling convolutional layers are used to accommodate for the higher resolution of the input.

As in the paper, ReLU activations are used for the generator and leaky ReLU activations for the discriminator. Batch normalisation is applied both in the generator and the discriminator.

\begin{figure}[h]
    \centering
    \includegraphics[width=\textwidth]{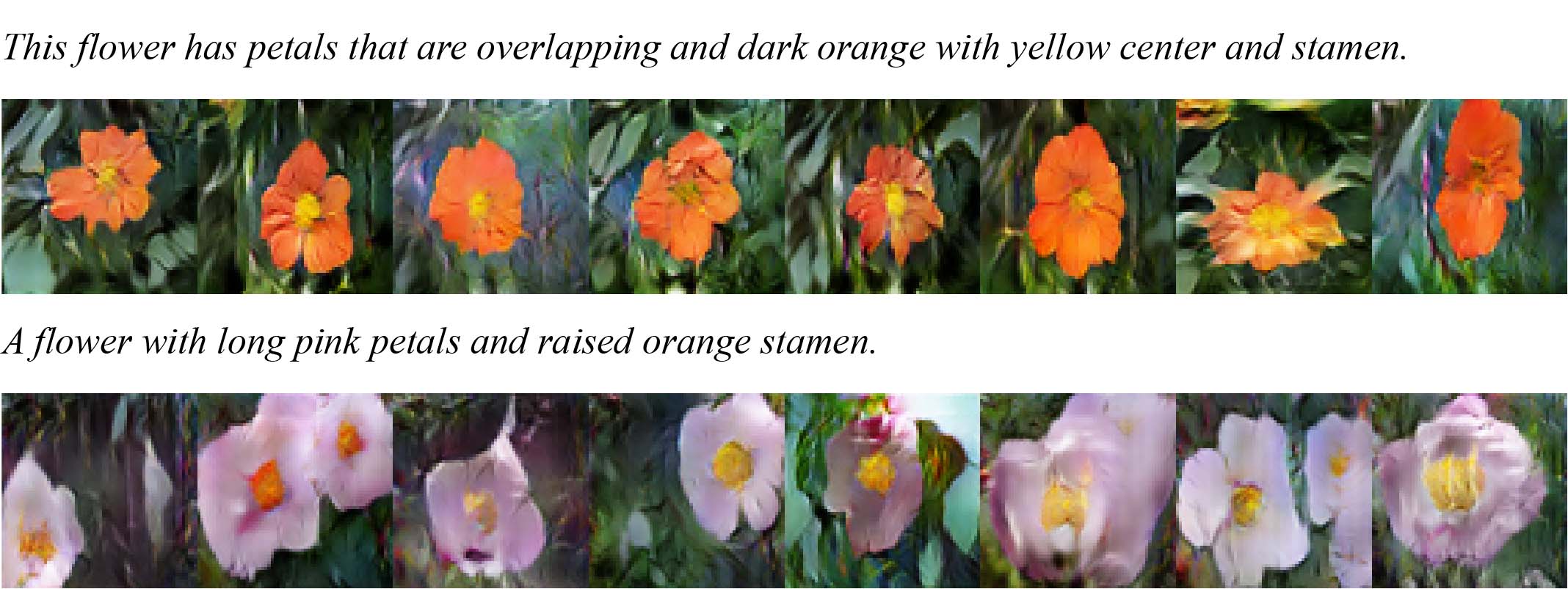}
    \caption[StackGAN Stage I Samples]{Samples generated by Stage I of StackGAN.}
    \label{fig:stageI-samples}
\end{figure}
\begin{figure}[h]
    \centering
    \includegraphics[width=\textwidth,height=0.9\textheight,keepaspectratio]{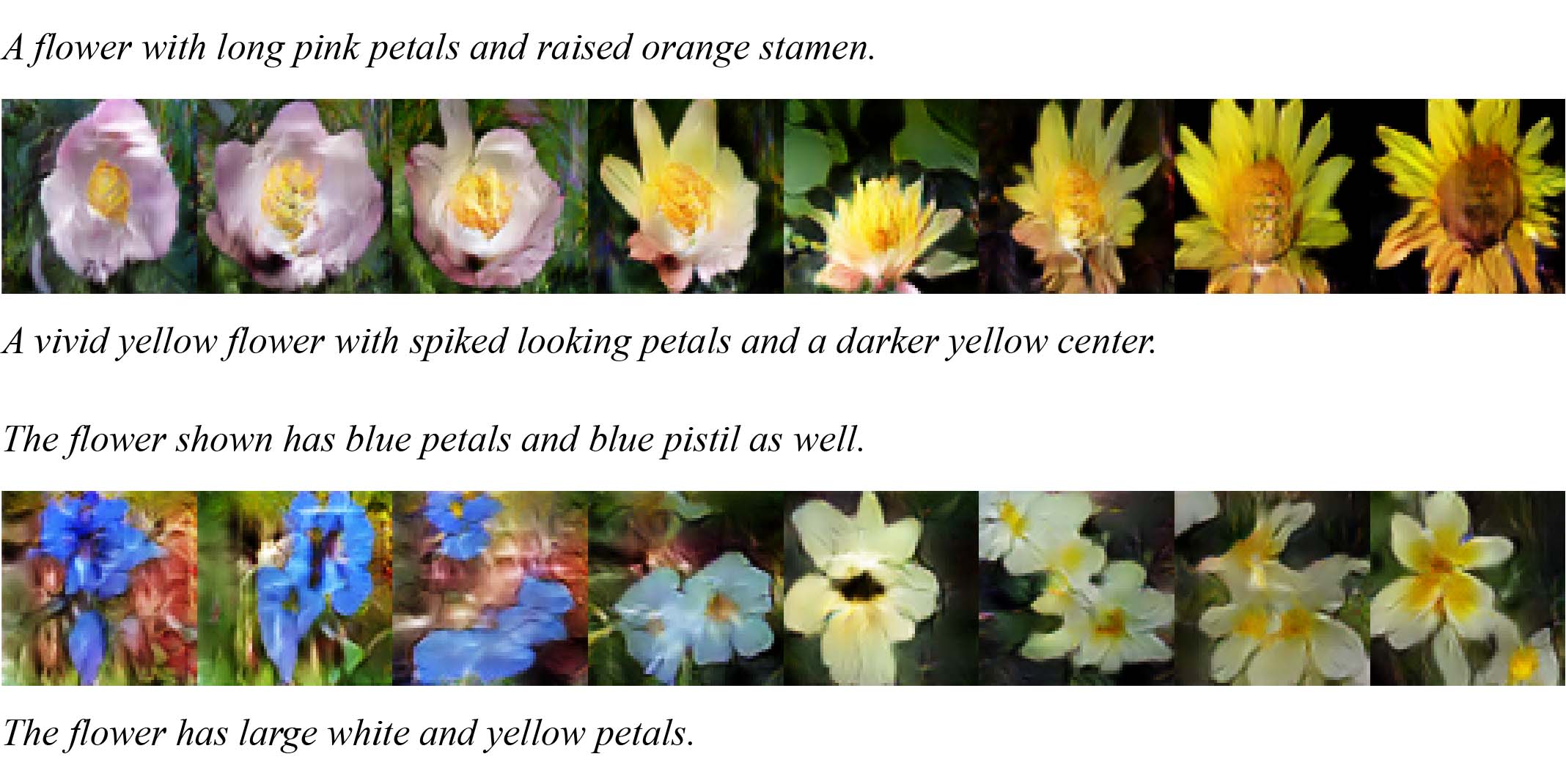}
    \caption[StackGAN Stage I interpolations]{Samples generated by Stage I of StackGAN from text embedding interpolations.}
    \label{fig:stage1-interp}
\end{figure}

\subsection{Training}

StackGAN uses the same discriminator loss function as the one in \ref{eq:cg} and the generator loss from \ref{eq:cond-aug-reg}.

For training, I used the Adam optimiser with a learning rate of 0.0002 for both networks, $\beta_1 = 0.5$ and $\beta_2 = 0.9$. I trained each of the stages for 600 epochs using a batch size of 64 for Stage I and a batch size of 32 for Stage II. When training Stage II, the parameters of Stage I are no longer trained. The learning rate is halved every 100 epochs as recommended in the paper. 

\begin{figure}[h]
    \centering
    \includegraphics[width=\textwidth]{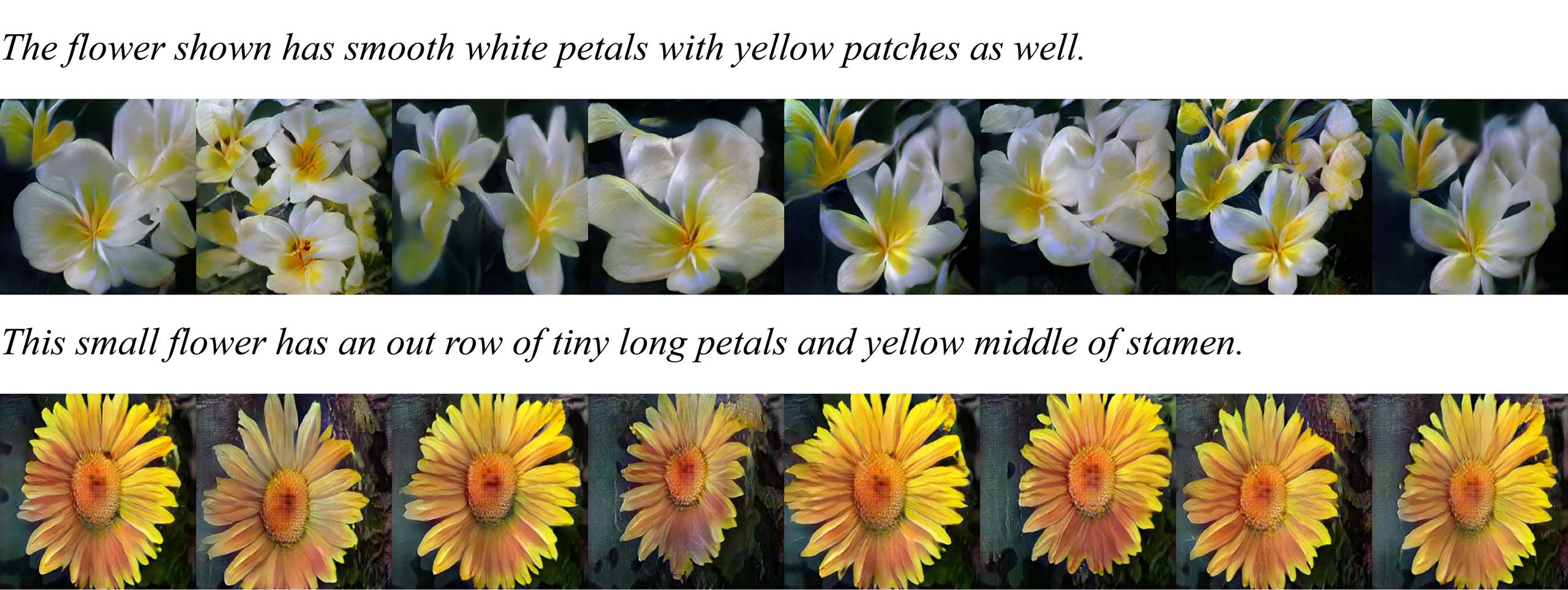}
    \caption[StackGAN Stage II Samples]{Samples generated by Stage II of StackGAN.}
    \label{fig:stageII-samples}
\end{figure}
\begin{figure}[h]
    \centering
    \includegraphics[width=\textwidth,height=0.9\textheight,keepaspectratio]{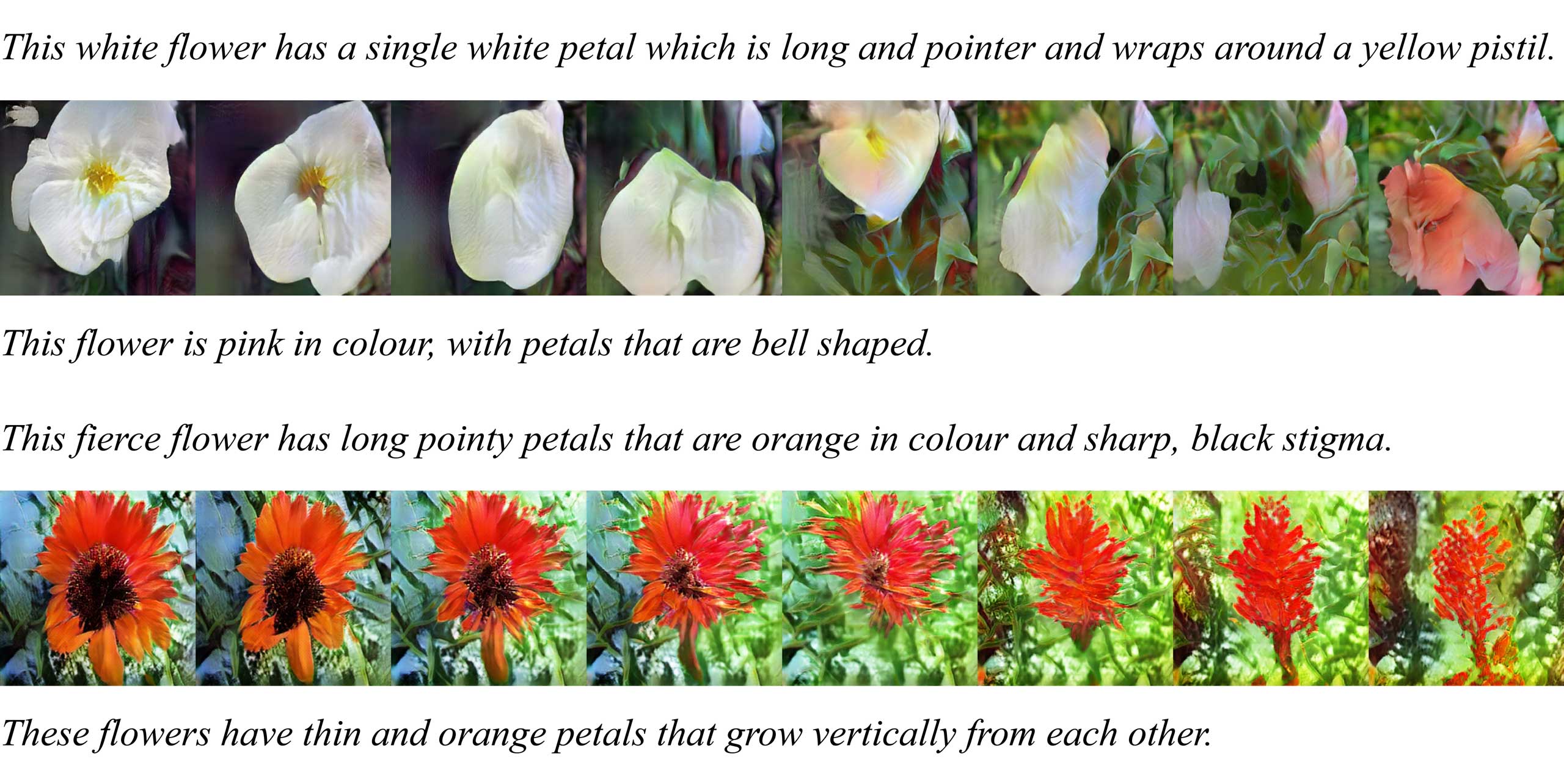}
    \caption[StackGAN Stage II interpolations]{Samples generated by Stage II of StackGAN from text embedding interpolations.}
    \label{fig:stage2-interp}
\end{figure}

\subsection{Results}

Figure \ref{fig:stageI-samples} shows samples generated by Stage I and Figure \ref{fig:stageII-samples} includes samples created by Stage II . The stochastic behaviour introduced by the augmentation of the text embeddings reflects in the higher image diversity of the generated images. Conditional interpolations for Stage I and Stage II are shown in Figures \ref{fig:stage1-interp} and \ref{fig:stage2-interp}. Figure \ref{fig:stackgan-stages} shows the images produced by the two stages for the same descriptions. Images generated by StackGAN on the birds dataset are included in Appendix \ref{appendix:birds}.

\begin{figure}[h]
    \centering
    \includegraphics[width=\textwidth,height=0.9\textheight,keepaspectratio]{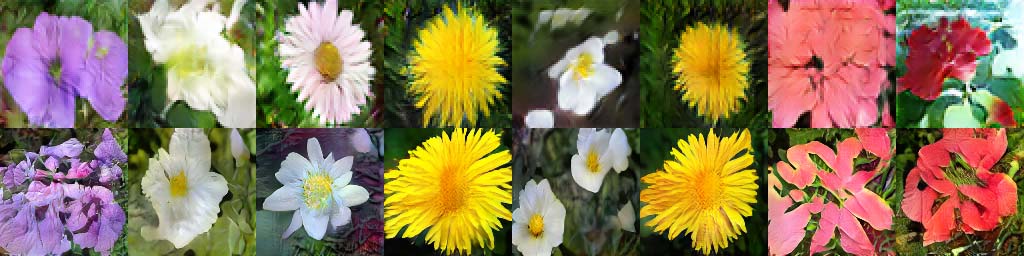}
    \caption[StackGAN Stages]{Images generated by Stage I (first row) and Stage II (second row) for the same text descriptions (one for each column). Stage II fine-tunes the images generated by Stage I.}
    \label{fig:stackgan-stages}
\end{figure}

\section{Other Models} \label{sec:other-models}

Other two state of the art models have been proposed since the start of this project: StackGAN-v2 \cite{DBLP:journals/corr/abs-1710-10916}, and more recently, AttnGAN \cite{DBLP:journals/corr/abs-1711-10485} developed by Microsoft Research in collaboration with other universities. StackGAN-v2, as the name suggests, is an improved version of StackGAN which uses multiple generators and discriminators in a tree-like structure. AttnGAN consists of an architecture similar to StackGAN-v2, but with an attention model \cite{DBLP:journals/corr/BaMK14, DBLP:journals/corr/MnihHGK14} on top of it. The attention model replicates the human attention mechanism and allows the network to focus on a single word from a sentence or a specific region of the image at a time. This ensures a granular image-word matching and not just a sentence level matching as it is the case with the other models discussed in this work.

\chapter{Research} \label{sec:research}

In this chapter, I propose new models which try to address some of the current research problems. In Section \ref{sec:wgan-cls} I propose Wasserstein GAN-CLS, a conditional Wasserstein GAN based on the recently introduced Wasserstein distance which offers convergence and stability guarantees. This model uses a novel loss function which achieves the text-image conditioning using a Wasserstein distance. In Section \ref{sec:pggan} I propose a conditional Progressive GAN inspired from \cite{pggan} which learns to generate images at iteratively increasing scale, and I show how the conditional Wasserstein loss improves this model.

\section{Wasserstein GAN-CLS} \label{sec:wgan-cls}

This section is more mathematical, in line with the firm theoretical arguments behind Wasserstein GANs. Additional explanations are included in Appendix \ref{appendix:wgan}.

The main problem of GANs is their instability during training. Perhaps counter-intuitively, as the discriminator becomes better, the generator's updates get worse. Arjovsky et al.\cite{Arjovsky2017TowardsPM} show that this problem is related to how the distances $d(\mathbb{P}_r, \mathbb{P}_g)$, which GANs commonly optimise, behave when the support of $\mathbb{P}_r$ and $\mathbb{P}_g$ are disjoint or lie on low dimensional manifolds. When that is the case, a perfect discriminator which separates them always exists. As the discriminator approaches optimality, the gradient of the generator becomes unstable if the generator uses the loss function $L_G$ from \ref{eq:cg}.

In many situations, it is likely that the two distributions lie on low dimensional manifolds. In the case of natural images, there is substantial evidence that the support of $\mathbb{P}_r$ lies on a low dimensional manifold \cite{NIPS2010_3958}. Moreover, Arjovsky et al. \cite{Arjovsky2017TowardsPM} prove that this is the case with $\mathbb{P}_g$ in the case of GANs. Thus, the choice of the distance $d(\mathbb{P}_r, \mathbb{P}_g)$ is crucial. One would like this function to be continuous and provide non-vanishing gradients that can be used for backpropagation even when this situation occurs.

Maximum likelihood models implicitly optimise $KL\infdiv{\mathbb{P}_r}{\mathbb{P}_g}$ (which is not a distance in the formal sense). GANs implicitly optimise the Jensen-Shannon divergence $JS\infdiv{\mathbb{P}_r}{\mathbb{P}_g}$, as shown in the proof of Theorem \ref{theo:1} from Appendix \ref{appendix:A}. Both of them are problematic. A simple example of two distributions whose supports are parallel lines \cite{Arjovsky2017TowardsPM} shows not only that these divergences (and others) are not differentiable, but they are not even continuous. In the next section, I discuss the Wasserstein distance which was proposed as a better choice for $d(\mathbb{P}_r, \mathbb{P}_g)$.

\subsection{Wasserstein GAN} \label{sec:wgan}

The Wasserstein distance, also known as the Earth Mover's (EM) distance, is theoretically proposed and analysed in GANs for the first time in \cite{Arjovsky2017TowardsPM}. In \cite{DBLP:conf/icml/ArjovskyCB17} it is shown in practice that a GAN which optimises the Wasserstein distance offers convergence and stability guarantees while producing good looking and more diverse images. This distance is given in Equation \ref{eq:wass}.
\begin{equation} \label{eq:wass}
    W(\mathbb{P}_r, \mathbb{P}_g) = \inf_{\gamma \in \Pi(\mathbb{P}_r, \mathbb{P}_g)} \mathbb{E}_{(\rvec{X},\rvec{Y}) \sim \gamma}[\norm{\vec{x} - \vec{y}}] \text{, }
\end{equation}
where $\Pi(\mathbb{P}_r, \mathbb{P}_g)$ is the set of joint distributions which have $\mathbb{P}_r$ and $\mathbb{P}_g$ as marginals and $\gamma_{\rvec{X}, \rvec{Y}}(\vec{x}, \vec{y}) \in \Pi(\mathbb{P}_r, \mathbb{P}_g)$ is one such distribution (see Figure \ref{fig:wdist} for an intuitive explanation). 

\begin{figure}[h]
    \centering
    \includegraphics[width=\textwidth]{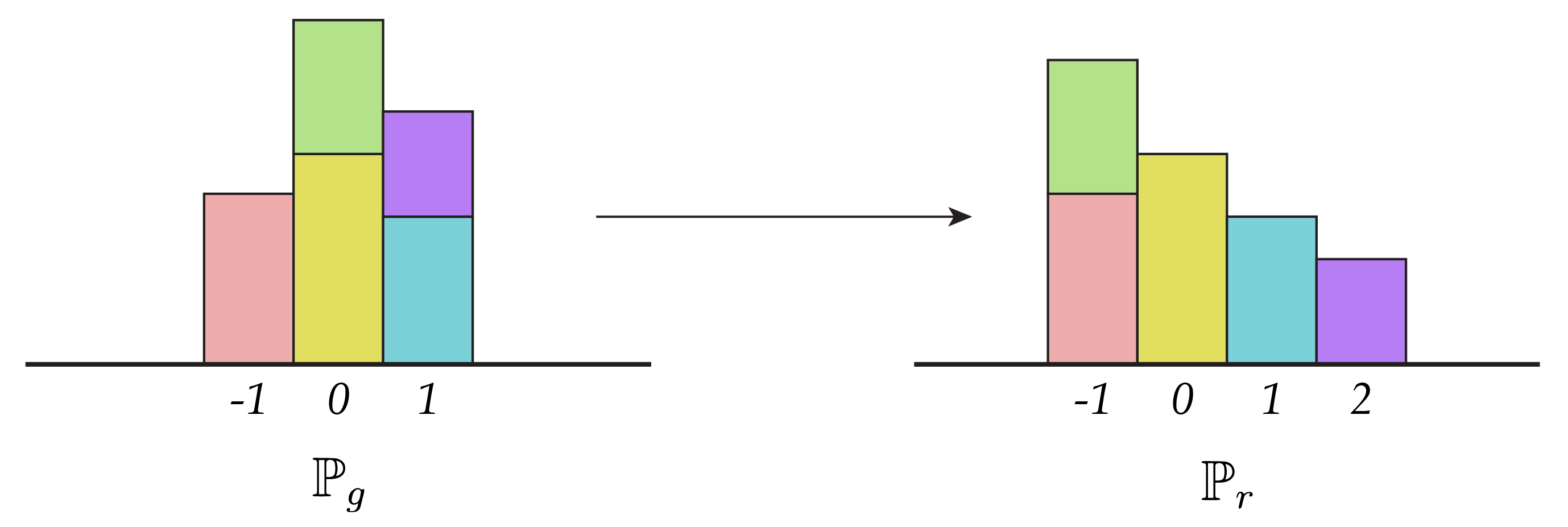}
    \caption[Wasserstein distance]{A better intuition for the Wasserstein distance can be developed by analysing a small discrete case. Given two discrete distributions $\mathbb{P}_r$ and $\mathbb{P}_g$, the Earth Mover's distance is the cost of the optimal plan to transport blocks of $\mathbb{P}_g$ to obtain $\mathbb{P}_r$ (or the other way around). A transport plan is optimal if it has minimum effort. The effort is proportional to the size of the blocks which are moved and the distance on which they have to be moved.}
    \label{fig:wdist}
\end{figure}

Of course, computing the Wasserstein distance in the form \ref{eq:wass} is intractable. Nevertheless, its dual form is tractable. This form is given by the Kantorovich-Rubinstein duality \cite{Villani2009} presented in Equation \ref{eq:wass-dual}.
\begin{equation} \label{eq:wass-dual}
    W(\mathbb{P}_r, \mathbb{P}_g) = \sup_{\norm{f}_{L} \leq 1} \mathbb{E}_{\rvec{X} \sim \mathbb{P}_r} [f(\vec{x})] - \mathbb{E}_{\rvec{X} \sim \mathbb{P}_g} [f(\vec{x})] \text{, }
\end{equation}
where the supremum is taken over the set of functions $f:\mathcal{X} \to \mathbb{R}$ which are 1-Lipschitz continuous (explained in Appendix \ref{appendix:wgan}). Instead of optimising relation \ref{eq:wass-dual} in function space, it can be optimised in the space of parametric functions using a neural network $D_{\vec{\omega}}: \mathcal{X} \to \mathbb{R}$. Equality \ref{eq:wass-dual} can be rewritten as:

\begin{equation}
    W(\mathbb{P}_r, \mathbb{P}_g) = \max_{\vec{\omega}, \norm{D_{\vec{\omega}}}_{L} \leq 1} \mathbb{E}_{\rvec{X} \sim \mathbb{P}_r} [D_{\vec{\omega}}(\vec{x})] - \mathbb{E}_{\rvec{X} \sim \mathbb{P}_g} [D_{\vec{\omega}}(\vec{x})]
\end{equation}

The only question which remains is how to enforce the Lipschitz constraint. It was originally suggested \cite{DBLP:conf/icml/ArjovskyCB17} to keep the weights $\vec{\omega} \in \mathcal{W}$, where $\mathcal{W}$ is a compact space such as $\mathcal{W} = [0.01, 0.01]^{N_{w}}$ and $N_{w}$ is the number of weights. Keeping the weights in such a small range indirectly constraints the rate of growth of the function which remains K-Lipschitz continuous over the course of training. The exact value of $K$ depends only on the choice of $\mathcal{W}$ and is independent of the values of $\vec{\omega}$. Nevertheless, this does not fully solve the problem because the small weights diminish the capacity of the neural network and also cause the training time to increase. A better solution was proposed \cite{gulrajani+al-2017-wasserstein-arxiv} which softens the constraint by appending it to the loss function as a regularisation term.

\begin{figure}[h] 
    \centering
    \includegraphics[width=\textwidth]{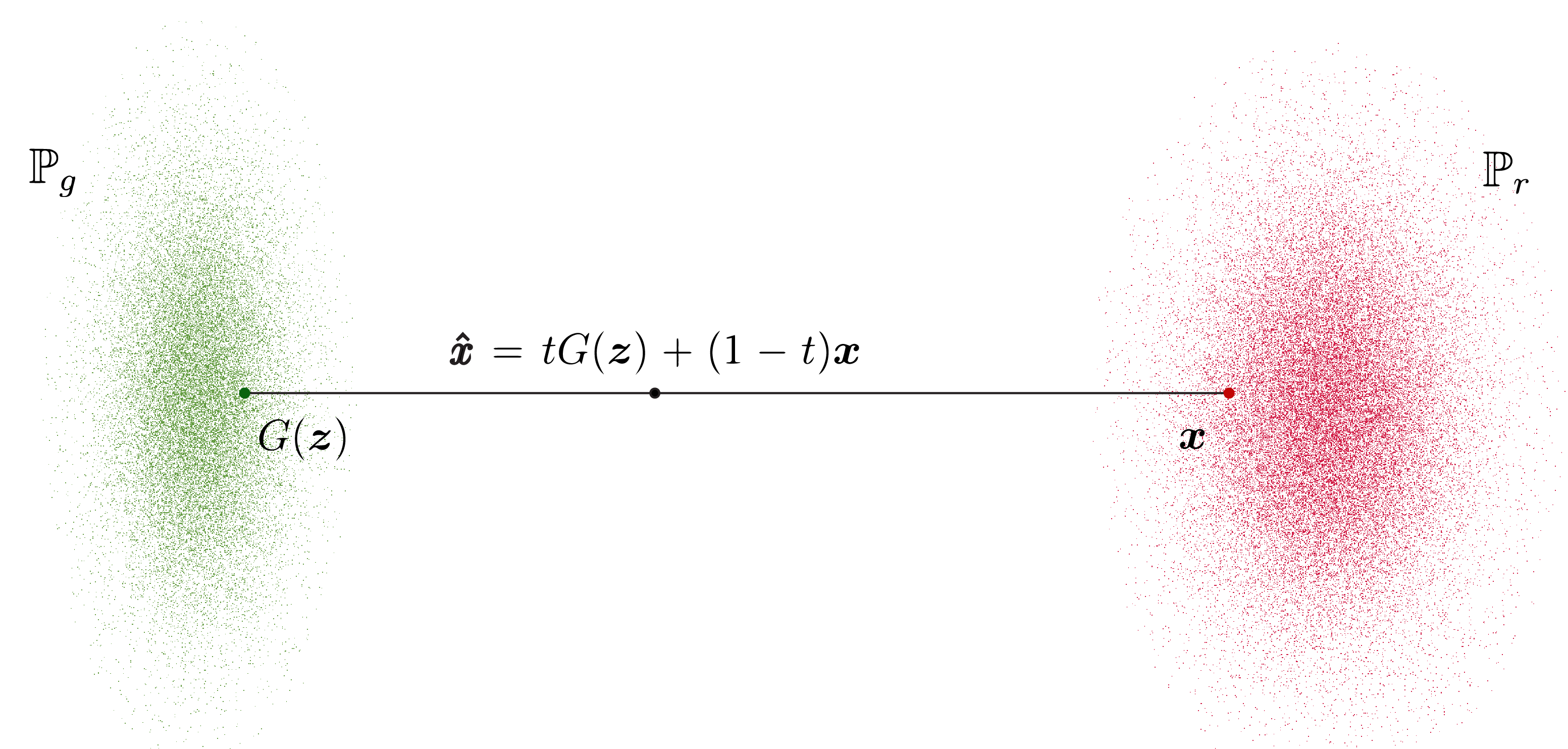}
    \caption[WGAN-GP linear interpolations]{Linear interpolation of a point from the dataset and a generated image. The gradient penalty ensures that the gradient norm remains close to one for such points between the two distributions.}
    \label{fig:wgangp_interp}
\end{figure}

It can be shown that a differentiable function is 1-Lipschitz if and only if its gradient norm is at most one almost everywhere. This motivates the loss function from Equation \ref{eq:wgan-gp} which adds a gradient penalty ($L_{GP}$) to penalise the network when the gradient norm goes far from one. 
\begin{equation} \label{eq:wgan-gp}
\begin{split}
        L_D &= \mathbb{E}_{\rvec{X} \sim \mathbb{P}_g} [D(\vec{x})] - \mathbb{E}_{\rvec{X} \sim \mathbb{P}_r}[D(\vec{x})] + \lambda L_{GP}
    \\ &= \mathbb{E}_{\rvec{X} \sim \mathbb{P}_g} [D(\vec{x})] - \mathbb{E}_{\rvec{X} \sim \mathbb{P}_r}[D(\vec{x})] + \lambda \mathbb{E}_{\hat{\rvec{X}} \sim \mathbb{P}_{\hat{\rvec{X}}}} [(\norm{\nabla D(\hat{\vec{x}})} - 1)^2] \text{, }
\end{split}
\end{equation}
where $\hat{\vec{x}}$ is a linear interpolation between a real and generated image: $\hat{\vec{x}} = t G(\vec{z}) + (1 - t) \vec{x}$ and $t$ is sampled from $U[0, 1]$. The model uses these interpolations because it is intractable to enforce the gradient constraint over the whole space $\mathcal{X}$. Instead, it is enforced only over the region between the two manifolds of the two distributions (Figure \ref{fig:wgangp_interp}). 

On the one hand, the discriminator is trained to better approximate the Wasserstein distance. The generator, on the other hand, tries to minimise $W(\mathbb{P}_r, \mathbb{P}_g)$, so the loss function from \ref{eq:wgan-min} is employed.
\begin{equation} \label{eq:wgan-min}
    L_G = - \mathbb{E}_{\rvec{X} \sim \mathbb{P}_g}[D(\vec{x})] \text{, }
\end{equation}

\subsection{Conditioning Wasserstein GAN} 

The loss function \ref{eq:wgan-gp} makes the discriminator distinguish between real and fake samples, but for text to image synthesis, it must also be text-image matching aware. By making the discriminator to also approximate $W(\mathbb{P}(\vec{x}, \vec{e})_{r-mat}, \mathbb{P}(\vec{x}, \vec{e})_{r-mis})$ between the joint distributions of matching and mismatching text-image pairs, the discriminator becomes matching aware. Based on this insight, I propose the loss function from \ref{eq:cond-wgan} for the discriminator. 
\begin{equation} \label{eq:cond-wgan}
\begin{split}
    L_D &= \{\mathbb{E}_{(\rvec{X}, \rvec{E}) \sim \mathbb{P}_{ge}} [D(\vec{x}, \vec{e})] - \mathbb{E}_{(\rvec{X}, \rvec{E}) \sim \mathbb{P}_{r-mat}}[D(\vec{x}, \vec{e})]\}
    \\ &+  \alpha\{\mathbb{E}_{(\rvec{X}, \rvec{E}) \sim \mathbb{P}_{r-mis}} [D(\vec{x}, \vec{e})] - \mathbb{E}_{(\rvec{X}, \rvec{E}) \sim \mathbb{P}_{r-mat}}[D(\vec{x}, \vec{e})]\}
    \\ &= \mathbb{E}_{(\rvec{X}, \rvec{E}) \sim \mathbb{P}_{ge}} [D(\vec{x}, \vec{e})] + \alpha \mathbb{E}_{(\rvec{X}, \rvec{E}) \sim \mathbb{P}_{r-mis}} [D(\vec{x}, \vec{e})] 
    \\ &- (1 + \alpha) \mathbb{E}_{(\rvec{X}, \rvec{E}) \sim \mathbb{P}_{r-mat}}[D(\vec{x}, \vec{e})] + \lambda L_{LP}
\end{split}
\end{equation}
where the parameter $\alpha$ controls the level of text-image matching.

Note that another regularisation term ($L_{LP}$), different from $L_{GP}$, is used to enforce the Lipschitz constraint. A potential problem of this loss function is that it can take values with a high magnitude on some datasets or architectures. Because nothing minimises $W(\mathbb{P}_{r-mat}, \mathbb{P}_{r-mis})$ as it is the case with $W(\mathbb{P}_{r-mat}, \mathbb{P}_{ge})$ which is being minimised by the generator, $W(\mathbb{P}_{r-mat}, \mathbb{P}_{r-mis})$ can theoretically take very high values. High values of this distance can damage the gradient penalty term whose proportion in the loss function will become so small that the gradient norm will get out of control. Theoretically, this can be fixed by simply increasing $\lambda$, but the regularisation of WGAN-GP from \ref{eq:wgan-gp} ($L_{GP}$) is not so robust \cite{petzka2018on} to changes in the values of the parameter $\lambda$. To address this, I use instead the regularisation term recently proposed in \cite{petzka2018on} which is called $L_{LP}$ (LP - Lipschitz Penalty). This term which does not penalise gradient norms less than one allows for larger values of $\lambda$ without harming the model. Moreover, empirical and theoretical evidence \cite{petzka2018on} shows that, under this softer regularisation term, convergence is faster and more stable. $L_{LP}$ is given by:
\begin{equation} \label{eq:L_LP}
    L_{LP} = \mathbb{E}_{(\hat{\rvec{X}}, \rvec{E}) \sim \mathbb{P}_{\eta}} [\max(0, \norm{\nabla_{\hat{\vec{x}}} D(\bar{\vec{x}}, \vec{e})} - 1)^2 + \max(0, \norm{\nabla_{\vec{e}} D(\hat{\vec{x}}, \vec{e})} - 1)^2]
\end{equation}
where I use $\mathbb{P}_{\eta}$ to denote the joint distribution of image text pairs $(\hat{\vec{x}}, \vec{e})$. $\hat{\vec{x}} = t G(\vec{z}, \vec{e}) + (1 - t) \vec{x}$ is a linear interpolation with $t$ sampled from $U(0, 1)$ and $\vec{e}$ is a matching text embedding of the image $\vec{x}$. Note that because $D(\hat{\vec{x}}, \vec{e})$ is also a function of the text embeddings $\vec{e}$ in this case, the Lipschitz constraint needs to be enforced with respect to the input $\vec{e}$ as well, not only $\hat{\vec{x}}$, hence the second term of the summation. 

Regarding the generator, I use the same cost function as the one in \ref{eq:wgan-min} with the addition of the text augmentation loss which softly maintains the standard normal distribution over the conditional latent space as described in Section \ref{sec:text-processing}. Thus, the loss of the generator is:
\begin{equation}
    L_G = - \mathbb{E}_{(\rvec{X}, \rvec{E}) \sim \mathbb{P}_g}[D(\vec{x}, \vec{e})] + \mathbb{E}_{\rvec{T} \sim \mathbb{P}_r}[\rho KL\infdiv{\mathcal{N}(\vec{0}, \matr{I})}{\mathcal{N}(\mu(\phi(\vec{t})), \Sigma(\phi(\vec{t}))}]
\end{equation}

\subsection{Architecture}

To make comparisons simpler, I keep the architecture of the generator identical to that of Stage I of StackGAN. In the case of the discriminator, I remove the batch normalisation for the gradient penalty to work. The gradient penalty assumes a unique gradient for each sample, and this assumption no longer holds in the presence of batch normalisation \cite{gulrajani+al-2017-wasserstein-arxiv}. 

Because, when the Wasserstein distance is used, the discriminator no longer needs to be crippled to keep the training balanced, I add one more convolutional layer in the discriminator after the text embedding concatenation, and I use the same number of convolutional filters as in the generator. 

\subsection{Training}

In the case of Wasserstein GANs, the closer the discriminator gets to optimality for a fixed $G$, the better the approximation of $W(\mathbb{P}_{r-mat}, \mathbb{P}_{ge})$ and  $W(\mathbb{P}_{r-mat}, \mathbb{P}_{r-mis})$ is. The generator's updates will also be better. That is why the discriminator is trained for $n_{critic}$ times for every generator update, where $n_{critic}$ is a hyper-parameter to be set at training time. A common value is $n_{critic} = 5$.

\begin{figure}[h]
    \centering
    \includegraphics[width=\textwidth]{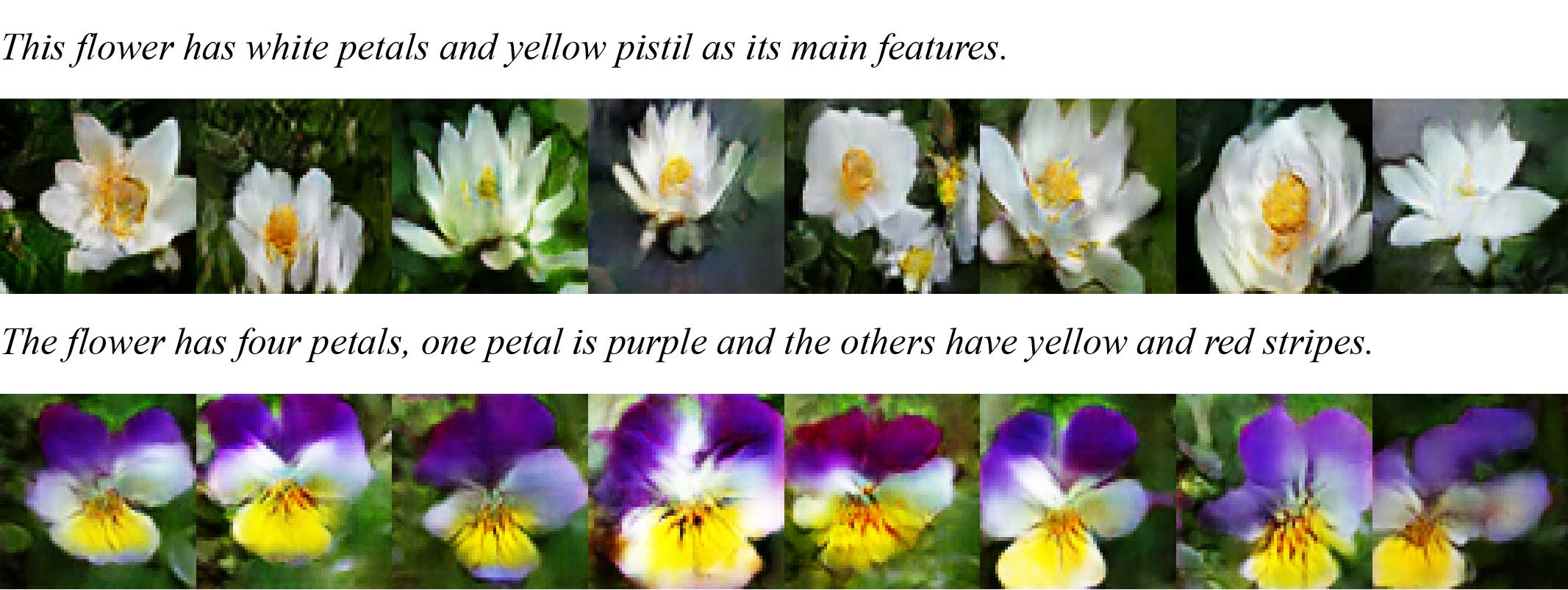}
    \caption[Conditional Wasserstein GAN samples]{Samples generated by WGAN-CLS from the same text descriptions but different noise vectors}
    \label{fig:wgan_samples}
\end{figure}

For faster training, I take a slightly different approach by setting $n_{critic} = 1$ in conjunction with the usage of the Two-Timescale Update Rule (TTUR) \cite{NIPS2017_7240}. TTUR refers to the usage of two different learning rates for the generator and the discriminator, which guarantees that the networks will reach a local Nash equilibrium. The convergence under TTUR is shown to be faster and the quality of the images higher than in the case of the classic method of training. Thus, I use the Adam optimiser again with a learning rate of 0.0003 for the critic and 0.0001 for the generator. In the case of the other parameters of Adam, I use $\beta_{1} = 0$ and $\beta_2 = 0.99$ for both the generator and the discriminator. The generator regularisation parameter $\rho$ is set to $10$, $\lambda_{LP} = 150$ and $\alpha = 1$. The training is performed for 120,000 steps with a batch size of 64.

\subsection{Results}

Figure \ref{fig:wgan_samples} shows samples generated by the Conditional Wasserstein GAN. Figure \ref{fig:wgan_interp} shows images generated from interpolations in the conditional space. 

\begin{figure}[h]
    \centering
    \includegraphics[width=\textwidth]{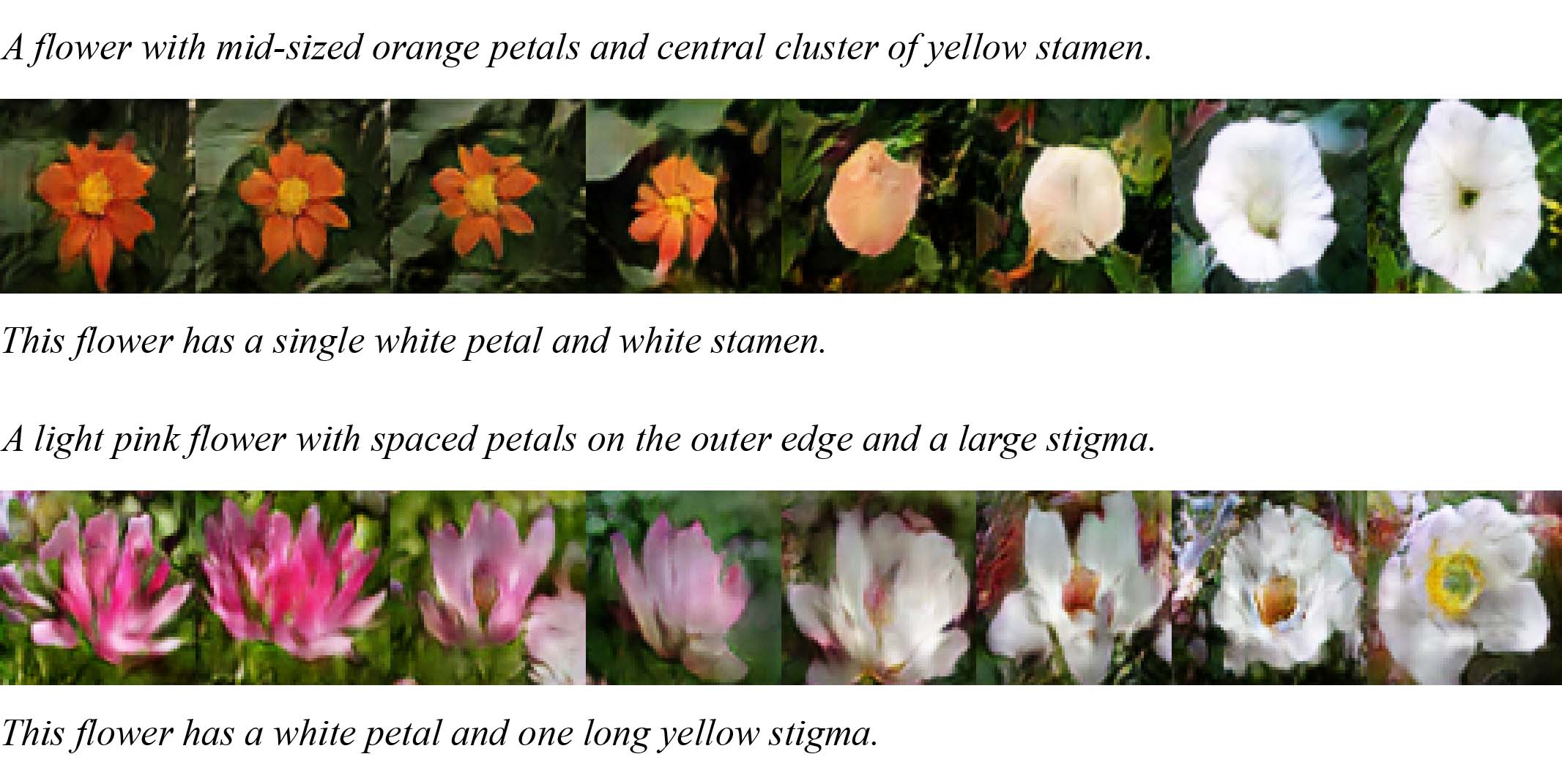}
    \caption[Conditional Wasserstein GAN interpolations]{Samples generated by the Conditional Wasserstein GAN from interpolations between two text embeddings.}
    \label{fig:wgan_interp}
\end{figure}

I subjectively assess that the quality of the generated images is comparable to the one of GAN-CLS and Stage I of StackGAN. My subjective evaluation is also confirmed by the Inception Scores of the models which are given in Chapter \ref{sec:eval}.

\section{Conditional Progressively Growing GANs} \label{sec:pggan}

In this section, I show how the recently introduced Progressive Growing GAN (PGGAN) \cite{pggan} can be turned into a conditional model for text to image synthesis. Moreover, I show how the Wasserstein critic loss I proposed in the previous section can improve this conditional model.

\subsection{Architecture and Training}

Because the training of this model is tightly integrated with its architecture, I treat them together in this section rather than separately.

The generator starts by concatenating a noise vector $\vec{z}$ with an augmented embedding $\vec{e}$. This concatenated vector is then projected into a tensor of dimension 4$\times$4$\times$512 which is then followed by two more convolutional layers with a filter size of 3$\times$3. Together, they constitute the first stage of the generator. The output of this stage is supplied as input to a stack of other stages, all separated by a nearest neighbour upsampling layer which upscales by a factor of two. All the generator stages excepting the first are composed of two convolutional layers with a kernel size of 3x3. In the end, the output tensor is passed through two more convolutional layers called the ``toRGB'' module which outputs the actual RGB image.

\begin{figure} 
    \centering
    \includegraphics[width=\textwidth]{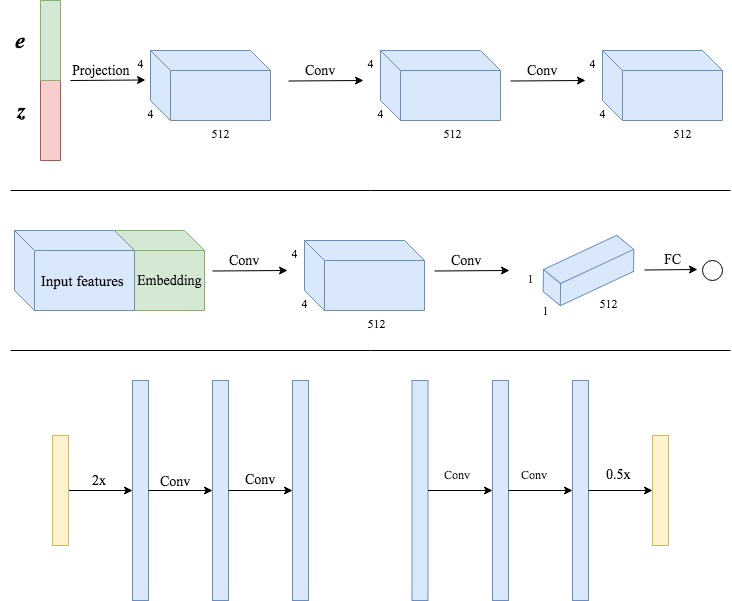}
    \caption[Conditional PGGAN stages]{Stages of the conditional PGGAN: (Top) First stage of the generator, (middle) First stage of the discriminator, (bottom left) the architecture of the other stages of the generator, (bottom right) the architecture of the other stages of the discriminator.}
    \label{fig:pggan-stages}
\end{figure}

The discriminator starts with a convolutional layer which produces a first set of convolutional features without affecting the spatial resolution. This layer is denoted as the ``fromRGB'' module. These features are given as input to a stack of stages which again are added step by step as training progresses concurrently with the generator stages. All stages have two convolutional layers, symmetric to the ones of the generator. The only exception is the first stage where the compressed embeddings are concatenated in depth to the input features of that stage similarly to the previous GANs. The concatenated block is processed by two more convolutional layers followed by a fully connected layer which produces the scalar discriminator output. The discriminator stages are separated by an average pooling layer which reduces the resolution by a factor of two. Figure \ref{fig:pggan-stages} shows the structure of all the stages.

\begin{figure}
    \centering
    \includegraphics[width=\textwidth]{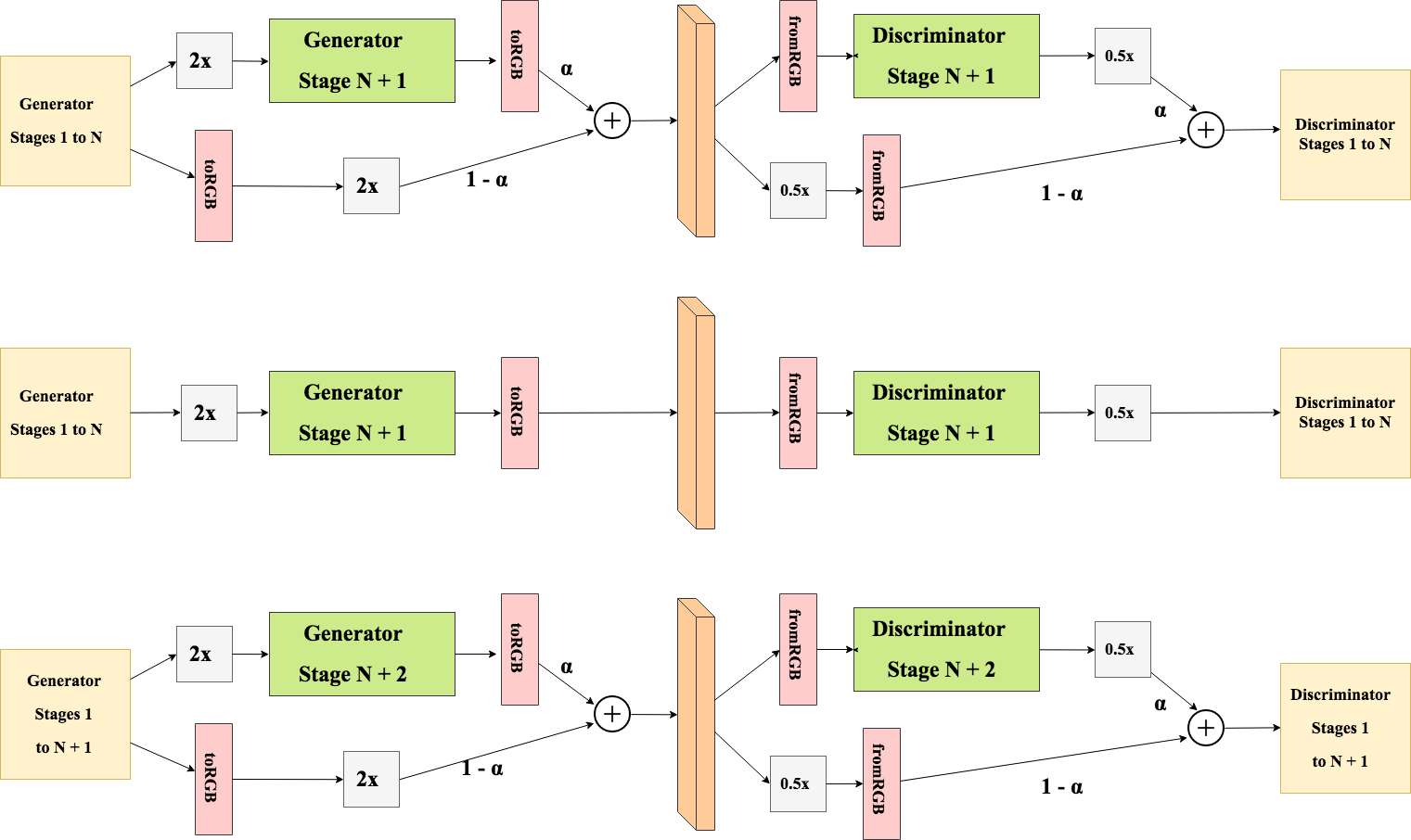}
    \caption[PGGAN interpolation]{Three consecutive training phases for the conditional PGGAN: the transition phase of an arbitrary $N + 1$ stage (top), the stabilisation phase of the same stage (middle), the transition phase of stage $N + 2$. The parameter $\alpha$ is linearly increased from zero to one during the transition phase. At the end of the transition phase when $\alpha = 1$, the new stages are fully attached to the previous stages. Next, the networks stabilise in their new configuration during the stabilisation phase. After the new stages are stabilised, the transition phase of the next stage begins.}
    \label{fig:pggan-train}
\end{figure}

The novelty of the model consists in the way the networks transform during training. The first stages of both networks are trained first using images of 4$\times$4 in resolution. Then the second stage is introduced concurrently for the discriminator and the generator as a residual layer with an associated weight $\alpha = 0$ to avoid perturbing the network. This stage doubles the resolution to 8$\times$8. The addition of any new stage starts with a transition phase. During a transition phase, $\alpha$ linearly increases to one, and the model smoothly learns to adapt to the new stages and the enlarged image size. This weight has the effect of interpolating between the scaled output of the previous stage and the output of the new stage in the case of the generator. For the critic, it is an interpolation of the inputs. When the transition phase is over, a stabilisation phase follows to stabilise the network in its new configuration with $\alpha=1$. Each transition and stabilisation phase lasts until the discriminator sees 600,000 real images and 600,000 generated images. This process of iteratively adding stages repeats until the desired resolution is reached or as longs as the GPU memory and the resolution of the images in the dataset allow it. When scaling up to a new stage, all the stages are trained, including the previous ones. The training process is also shown and further explained in Figure \ref{fig:pggan-train}. More details are given in Appendix \ref{appendix:cpggan}.

As shown in \cite{pggan}, this method of training is significantly faster than training the complete network from scratch because the majority of the training time is spent at the lower stages.

\subsection{The Need for a Stable Loss Function}

This architecture and unusual method of training do not work with any loss function given the instability of GANs. The PGGAN paper \cite{pggan} empirically shows PGGAN working with a least squares loss and a Wasserstein distance loss. While the least squares loss \cite{lsgan} is empirically known to be more stable than the classic GAN loss, the Wasserstein loss has technical reasons behind which guarantee its stability. 

For the least squares loss, as with the Wasserstein loss, the discriminator no longer outputs a probability, but an arbitrary real number. The generator and the discriminator optimise for making this real number close to some predefined labels $a, b, c$. The general form of the least squares loss is as follows:
\begin{equation}
\begin{split}
     L_D &= \mathbb{E}_{\rvec{X} \sim \mathbb{P}_r}[(D(\vec{x}) - b)^2] + \mathbb{E}_{\rvec{X} \sim \mathbb{P}_g}[(D(\vec{x}) - a)^2]
     \\ L_G &= \mathbb{E}_{\rvec{X} \sim \mathbb{P}_g}[(D(\vec{x}) - c)^2] 
\end{split}
\end{equation}

\begin{figure}[h]
    \centering
    \includegraphics[width=\textwidth]{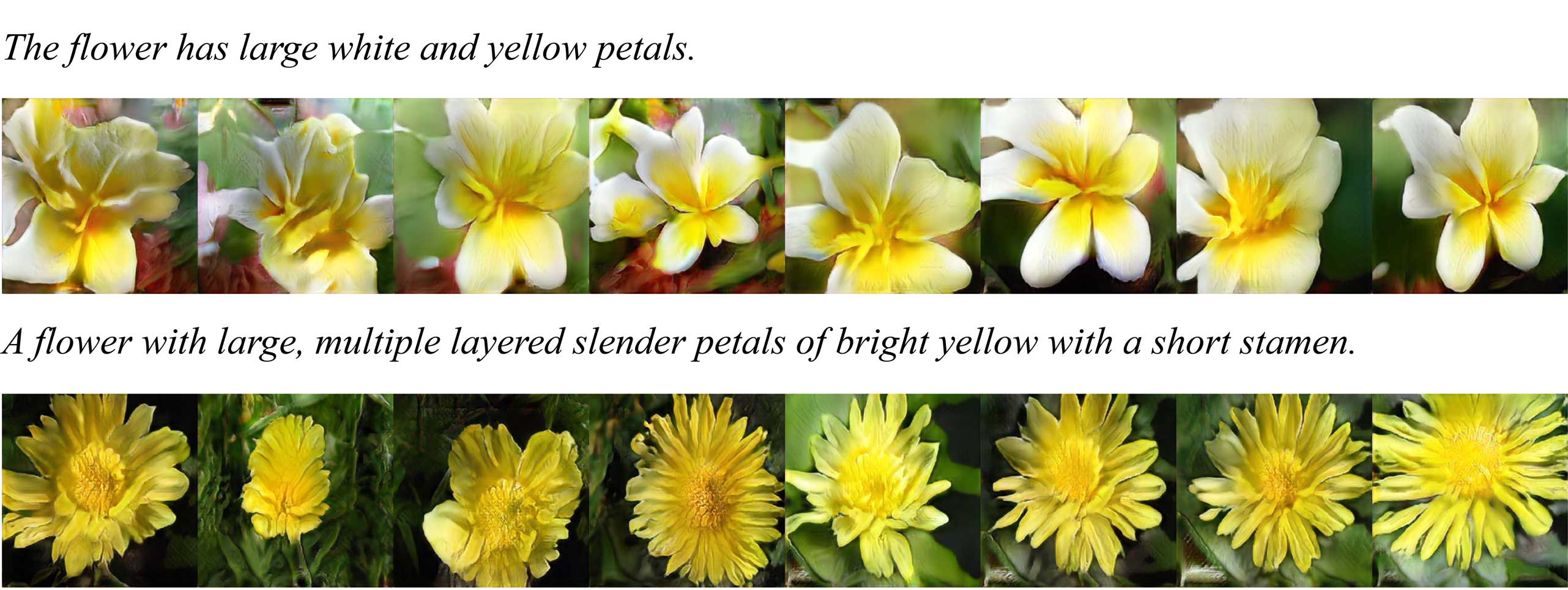}
    \caption[Least Squares CPGGAN samples]{Samples generated by the Conditional Least Squares PGGAN.}
    \label{fig:lspggan_samples}
\end{figure}

As shown in \cite{lsgan}, when $b - c = 1$ and $b - a = 2$, minimising these cost functions is equivalent to minimising the Pearson $\mathcal{X}^2$ divergence between $\mathbb{P}_r + \mathbb{P}_g$ and $2\mathbb{P}_g$. This justifies the choice of labels $a = -1, b = 1, c = 0$ which I use for my experiments. It is trivial to adapt this loss function to make the discriminator matching aware as follows:
\begin{equation}
\begin{split}
     L_D &= \mathbb{E}_{(\rvec{X}, \rvec{E}) \sim \mathbb{P}_{r-mat}}[(D(\vec{x}, \vec{e}) - b)^2] + \mathbb{E}_{(\rvec{X}, \rvec{E}) \sim \mathbb{P}_{ge}}[(D(\vec{x}, \vec{e}) - a)^2] 
     \\ &+ \mathbb{E}_{(\rvec{X}, \rvec{E}) \sim \mathbb{P}_{r-mis}}[(D(\vec{x}, \vec{e}) - a)^2]
     \\ L_G &= \mathbb{E}_{(\rvec{X}, \rvec{E}) \sim \mathbb{P}_{ge}}[(D(\vec{x}, \vec{e}) - c)^2] 
\end{split}
\end{equation}

\begin{figure}[h]
    \centering
    \includegraphics[width=\textwidth]{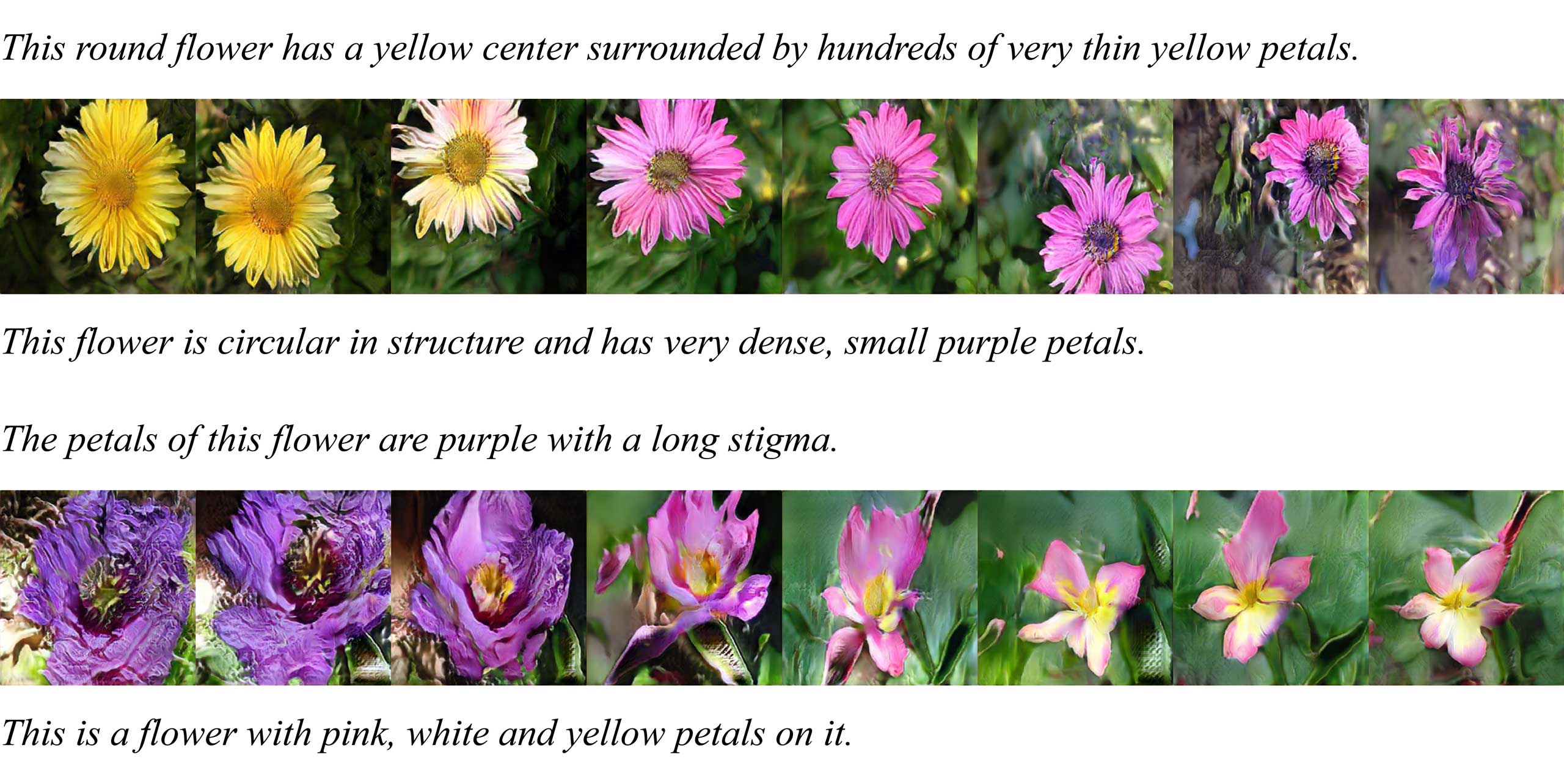}
    \caption[Least Squares CPGGAN interpolations]{Samples generated by the Conditional Least Squares PGGAN from text embedding interpolations.}
    \label{fig:lspggan_interp}
\end{figure}

Now, the discriminator will push towards $a$ not only the synthetic images but also images which do not match their description.

Nevertheless, I find the least squares loss to be unstable when the network reached the high-resolution stages, which is consistent with the findings from \cite{pggan}. As in the paper, I introduce multiplicative Gaussian noise between the layers of the discriminator to eliminate the instability. This hack does not address the cause of the problem, which is the loss function. The Conditional Progressive Growing GAN is a perfect use case for the Wasserstein based loss I proposed in Section \ref{sec:wgan-cls} because it is guaranteed to be stable. Results for both of these losses are discussed in the next section.

\subsection{Results}

Figure \ref{fig:lspggan_samples} includes 256$\times$256 samples generated by the Conditional Least Squares PGGAN (CLSPGGAN). Figure \ref{fig:lspggan_interp} includes images generated by the same model from text embedding interpolations. Figures \ref{fig:wpggan_samples} and \ref{fig:wpggan_interp} include the equivalent images generated by the Conditional Wasserstein PGGAN (CWPGGAN) which use the Wasserstein loss I proposed in section \ref{sec:wgan-cls}.

Images generated by CWPGGAN on the birds dataset are included in Appendix \ref{appendix:birds}.

\begin{figure}[h]
    \centering
    \includegraphics[width=\textwidth]{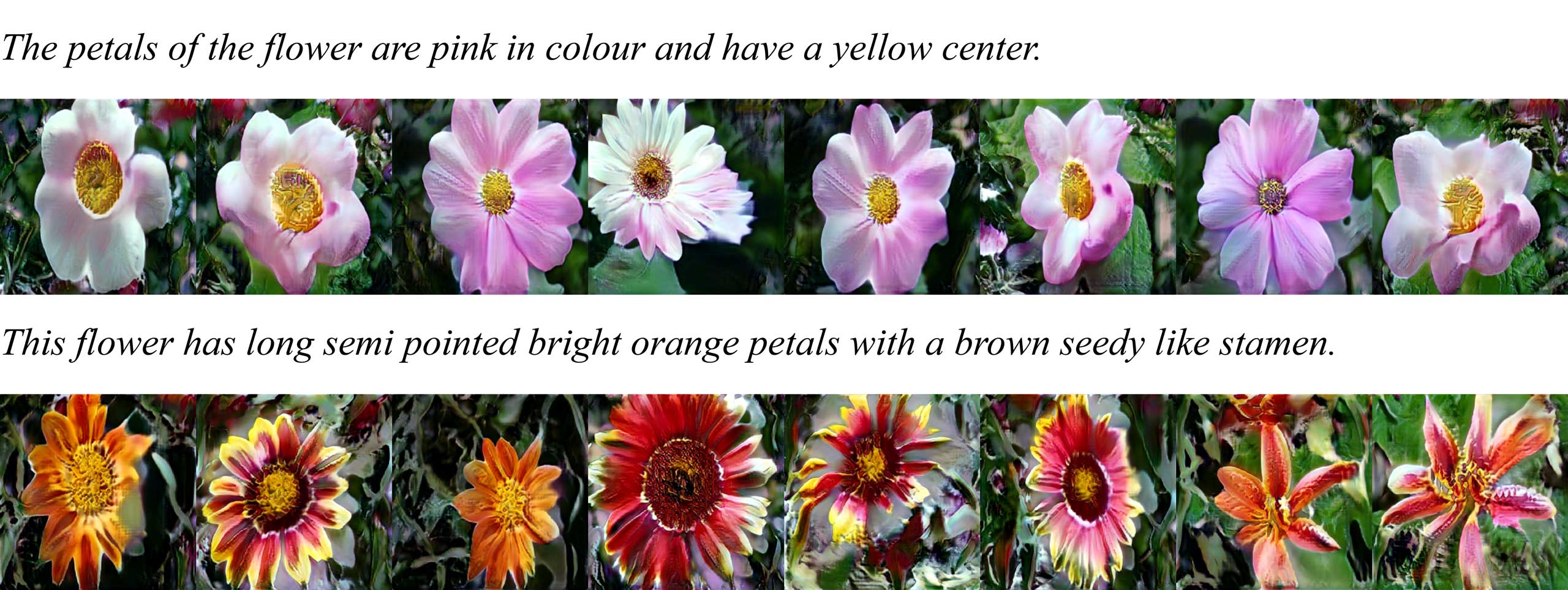}
    \caption[Conditional Wasserstein PGGAN samples]{Samples generated by the Conditional Wasserstein PGGAN}
    \label{fig:wpggan_samples}
\end{figure}
\begin{figure}[h]
    \centering
    \includegraphics[width=\textwidth]{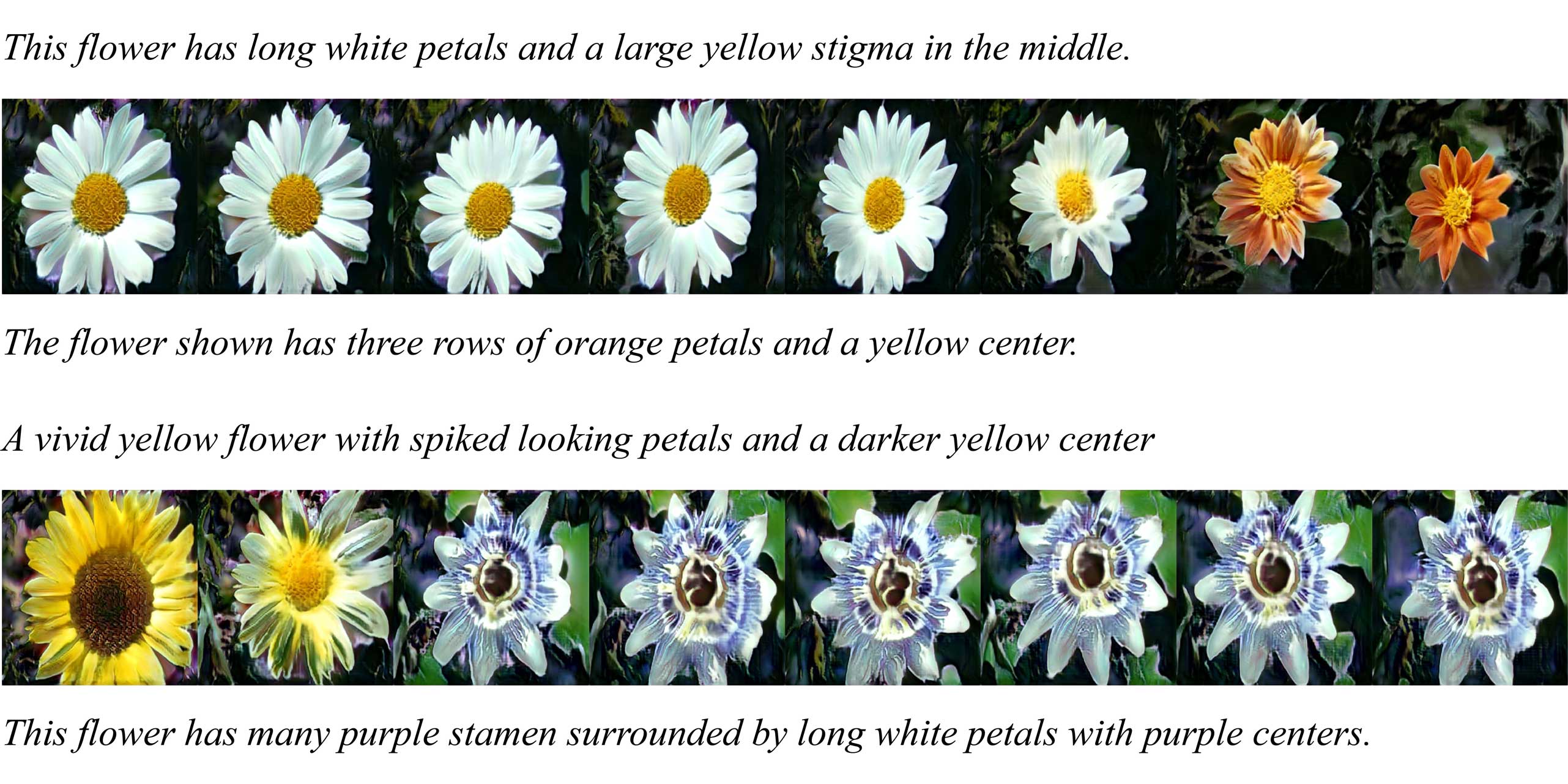}
    \caption[Conditional Wasserstein PGGAN interpolations]{Samples generated by the Conditional Wasserstein PGGAN from interpolations between two text embeddings.}
    \label{fig:wpggan_interp}
\end{figure}

Figures \ref{fig:lspggan_stages} and \ref{fig:wpggan_stages} show the images generated by each stage of Conditional PGGAN for the Least Squares loss and the Wasserstein loss respectively.

\begin{figure}[h]
    \centering
    \includegraphics[width=\textwidth]{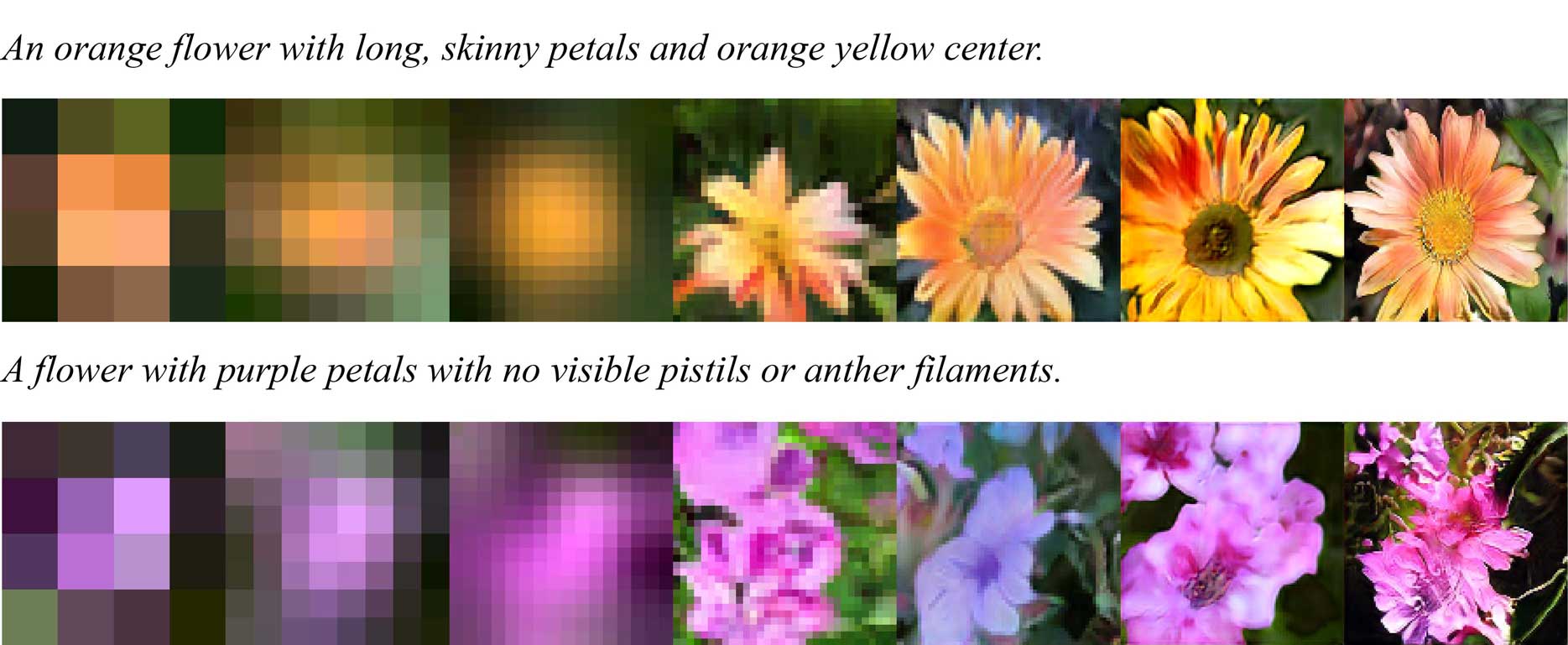}
    \caption[Conditional Least Squares PGGAN stages]{The image generated by each stage of the Least Squares Conditional PGGAN for the same text description. The images range from resolutions 4$\times$4 to 256$\times$256. Each stage doubles the resolution.}
    \label{fig:lspggan_stages}
\end{figure}
\begin{figure}[h]
    \centering
    \includegraphics[width=\textwidth]{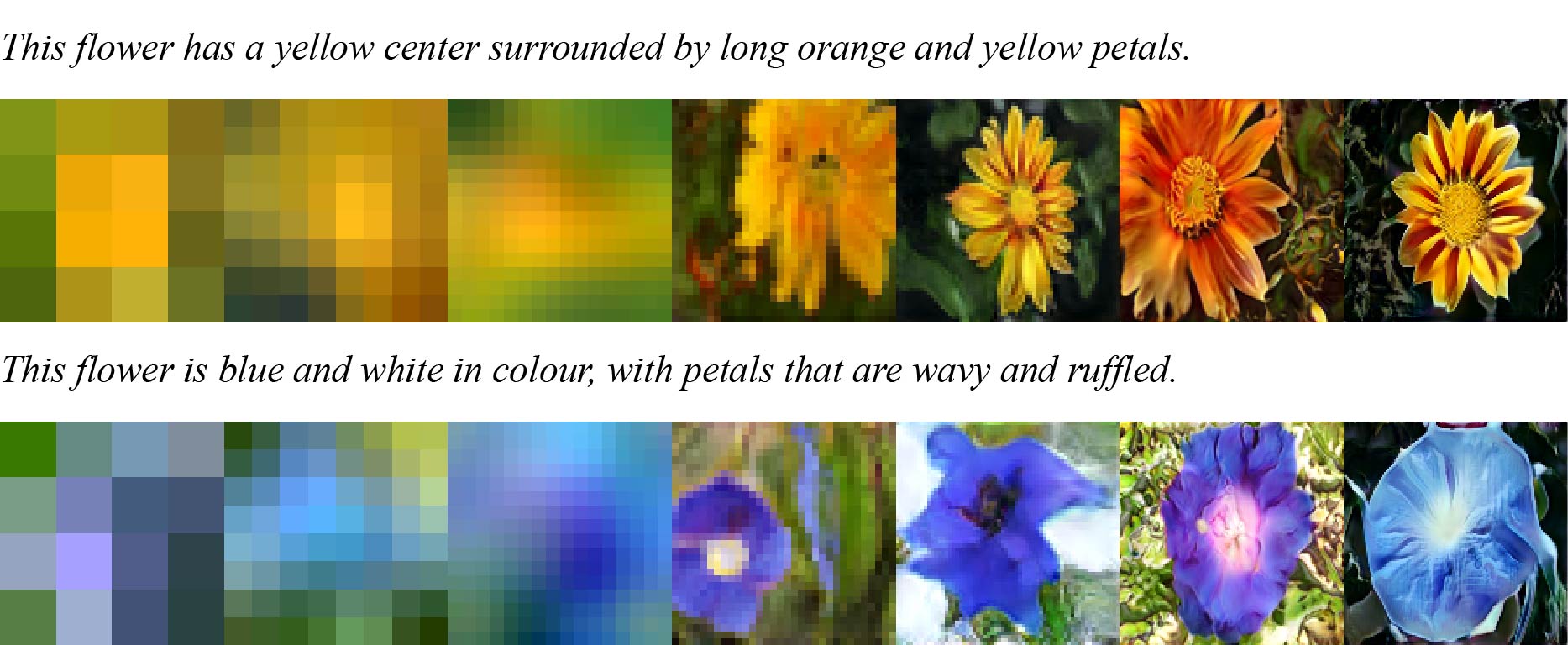}
    \caption[Conditional Wasserstein PGGAN stages]{The image generated by each stage of the Wasserstein Conditional PGGAN for the same text description. The images range from resolutions 4$\times$4 to 256$\times$256. Each stage doubles the resolution.}
    \label{fig:wpggan_stages}
\end{figure}

\chapter{Evaluation and Comparisons} \label{sec:eval}

In this chapter, I present the Inception Score \cite{Salimans2016ImprovedTF}, the current standard technique for evaluating GANs. Then, I show the score for all of the models introduced in this work as well as a side by side comparison of the images they produce.

\section{The Inception Score}

The evaluation of generative models is a current area of research. Because most generative models maximise the likelihood of the data, they are evaluated using the average log-likelihood as a metric. As previously discussed, GANs depart from this approach and thus perform better, but at the same time, this also makes their evaluation harder. A recently proposed way of evaluating GANs which generate images is the Inception Score \cite{Salimans2016ImprovedTF}. 

\begin{figure}
    \centering
    \includegraphics[width=\textwidth]{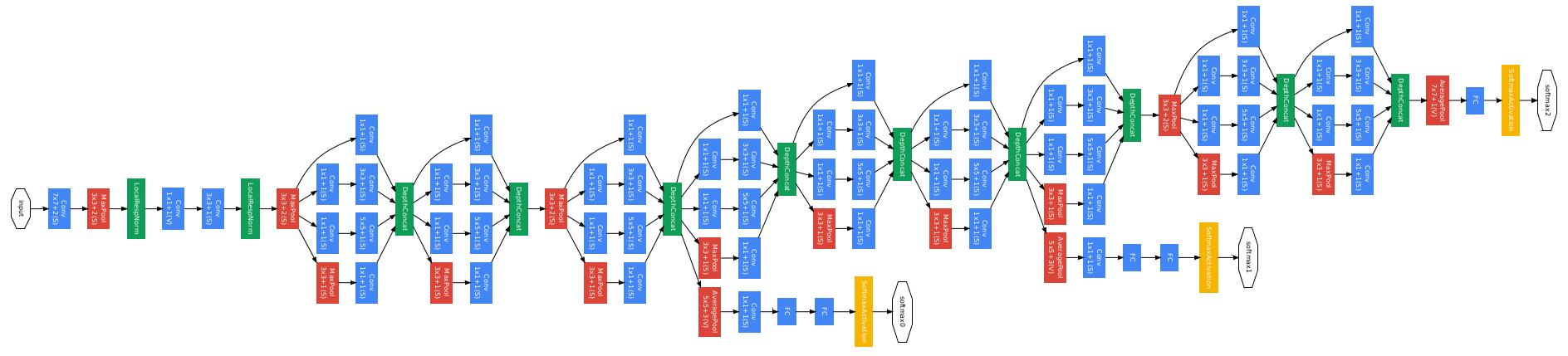}
    \caption[Inception architecture]{The architecture of the inception network (image taken from \cite{7298594inception}). The bigger blue blocks are convolutions, the smaller blue blocks are fully connected layers, the red blocks are pooling layers, and the yellow blocks are softmax layers which convert the layer input values in a valid probability distribution. The two bottom branches which separate from the main part of the network are auxiliary classifiers, which are used for better gradient propagation.}
    \label{fig:inception}
\end{figure}

The name of the score comes from Google's Inception classifier \cite{7298594inception} (Figure \ref{fig:inception}). Treating images as a random vector $\bm{X}$ and the image labels as a random variable $Y$, the Inception network produces a distribution $P_{Y \vert \bm{X}}$ where $P_{Y \vert \bm{X}}(y \vert \vec{x})$ is the probability assigned to image $\vec{x}$ to belong to class $y$. An Inception network is trained to produce such probabilities for the classes from the test dataset the GAN will be evaluated on. This assumes a dataset divided into classes. Then, the trained network classifies the images generated by the model being evaluated. The score is a function of the distribution of the predicted classes. There are two desired outcomes:

\begin{enumerate}
    \item The object in any image $\vec{x}$ should be undoubtedly identifiable. In other words, the conditional distribution $P_{Y \vert \bm{X}}$ should have low entropy.
    
    \item All the generated images should be as diverse as possible. That is, the images should not belong to just a small subset of classes but all the classes in the dataset. Equivalently, the distribution $P_Y$ should have high entropy.
\end{enumerate}

These two aspects motivate the form of the Inception Score from \ref{eq:inception} because if they hold, then the $KL$ divergence between the two mentioned distributions is high. The exponential function is used only for aesthetic reasons to bring the values of the score in a higher range of values. 

\begin{equation} \label{eq:inception}
    IS(G) = \exp(\mathbb{E}_{\bm{X} \sim P_g}[KL\infdiv{P_{Y \vert \bm{X}}(y \vert \vec{x})}{P_{Y}(y)}])
\end{equation}

The Inception score was shown to correlate well with human evaluation of image quality \cite{Salimans2016ImprovedTF}. This is the reason I chose not to conduct a human assessment for evaluating the models presented in this work. 

For training I use the Inception-V3 architecture \cite{Szegedy2016RethinkingTI}, a variant of the architecture shown in Figure \ref{fig:inception}. Instead of fully training the network, I only fine tune it. I train only the ``Logits'' and ``Mixed\_7c'' variable scopes and for the other layers, I use the publicly available weights trained on ImageNet \cite{imagenet_cvpr09}. This follows the approach from StackGAN \cite{han2017stackgan}. For computing the Inception Score, I use a group of $50,000$ generated images which are split randomly into ten equal sets as recommended in \cite{Salimans2016ImprovedTF}. The inception score is computed for each of these sets and the mean value together with the standard deviation are reported.

\subsection{Evaluation of Text-Image Matching}

The Inception score in its default form measures only the image quality, but it can also be used as an implicit measure of text-image matching. The Inception network is trained on the test dataset which (very importantly) contains classes disjoint from those in the training dataset. The generated images which are evaluated are produced exclusively from text descriptions from the test dataset. Because the training and test datasets contain disjoint classes, neither the text descriptions nor the images from the test dataset (or similar ones) are seen by the model in training. To generate high Inception Scores, the model must create images similar to the ones from the test dataset. The only possibility for the model to do this is to learn the visual semantics of the text descriptions correctly and to generate high-quality images which respect those descriptions. Thus, the reported Inception-Score is a measure of both image quality and text-image matching. The StackGAN paper \cite{han2017stackgan} uses the same approach in its evaluation.

\section{Inception Score Results}

I include the Inception Score means and standard deviations for all models on the Oxford-102 flowers dataset in Table \ref{tab:inception-scores}. The results show that the proposed Conditional Wasserstein GAN obtains comparable results to other state of the art models which produce 64$\times$64 images while maintaining the same generator as Stage I of StackGAN. Moreover, the Conditional Wasserstein GAN achieves this score in conditions of guaranteed training stability which is very important. The proposed Conditional Progressive Growing GAN achieves a better score than the other models for both resolutions on the flowers dataset. Moreover, the model obtains the best score in combination with the Wasserstein loss I proposed in Section \ref{sec:wgan-cls}.

\begin{table}[h]
    \centering
    \begin{tabular}{| l | l | l |}
    \hline
    Model & Resolution & Score \\ \hline
    Customised GAN-CLS & 64$\times$64 & 3.11 $\pm$ 0.03 \\ 
    StackGAN Stage I & 64$\times$64 & 3.42 $\pm$ 0.02 \\ 
    WGAN-CLS$^*$ & 64$\times$64 & 3.11 $\pm$ 0.02 \\ 
    WGAN-CLS with TTUR$^*$ & 64$\times$64 & 3.20 $\pm$ 0.01 \\ 
    CLSPGGAN$^*$ & 64$\times$64 & 3.44 $\pm$ 0.04 \\ 
    CWPGGAN$^*$ & 64$\times$64 & \textbf{3.70 $\pm$ 0.03} \\ \hline
    StackGAN Stage II & 256$\times$256 & 3.71 $\pm$ 0.04 \\ 
    CLSPGGAN$^*$ & 256$\times$256 & 3.76 $\pm$ 0.03 \\ 
    CWPGGAN$^*$ & 256$\times$256 & \textbf{3.86 $\pm$ 0.02} \\ \hline

    \end{tabular}
    \caption[Flowers dataset Inception Scores]{Inception Scores for the Oxford-102 flowers dataset. Models marked with $^*$ are the models proposed in this report.}
    \label{tab:inception-scores}
\end{table}




On the birds dataset, I run limited experiments for CWPGGAN and StackGAN (Appendix \ref{appendix:birds}). To quickly evaluate CWPGGAN against the other models, including the recently introduced StackGAN-v2 and AttnGAN, I used directly the scores given in the AttnGAN paper \cite{DBLP:journals/corr/abs-1711-10485} for the birds dataset. The flowers dataset is not used in the paper. 

Thus, I computed the Inception score of CWPGGAN using the same (publicly available) Inception network used in the evaluation part of the AttnGAN paper. The score obtained by CWPGGAN, as well as the score of the other models, are given in Table \ref{tab:inception-scores-birds-attngan}.

\begin{table}[h]
    \centering
    \begin{tabular}{| l | l | l |}
    \hline
    Model & Resolution & Score \\ \hline
    GAN-INT-CLS & 64$\times$64 & 2.88 $\pm$ 0.04 \\ 
    CWPGGAN$^*$ & 64$\times$64 & \textbf{3.18 $\pm$ 0.03} \\ \hline
    StackGAN & 256$\times$256 & 3.70 $\pm$ 0.04 \\ 
    StackGAN-v2 & 256$\times$256 & 3.82 $\pm$ 0.06 \\ 
    CWPGGAN$^*$ & 256x356 & 4.09 $\pm$ 0.03 \\ 
    AttnGAN & 256$\times$256 & \textbf{4.36 $\pm$ 0.04} \\ \hline

    \end{tabular}
    \caption[Birds dataset Inception scores]{Inception scores for the CUB-200-2011 birds dataset using the Inception network used for evaluation in \cite{DBLP:journals/corr/abs-1711-10485}. The Inception scores of all models, excepting CWPGGAN, are taken directly from \cite{DBLP:journals/corr/abs-1711-10485}. Models marked with $^*$ are the models proposed in this report.}
    \label{tab:inception-scores-birds-attngan}
\end{table}

CWPGGAN boosts by $7.07\%$ the best Inception Score on the birds dataset of the models which use only the sentence-level visual semantics. Moreover, CWPGGAN has the second best Inception Score for 256$\times$256 images out of all the existent state of the art models. The score of CWPGGAN and the quality of the images it produces is particularly impressive given the that it does not use any word-level visual semantics such as AttnGAN. This score is also achieved in conditions of guaranteed stability given by the proposed loss function in WGAN-CLS. Because AttnGAN is composed of a StackGAN-v2 with an attention model on top, these results are an indication for future research that CWPGGAN equipped with a similar attention model could produce even higher scores.

The results also prove that the proposed Wasserstein loss makes possible the usage of innovative architectures and training techniques which would not work with the standard loss function used by the existent text to image models.

\section{Side by Side Comparison of the Models}

Figures \ref{fig:img1_comp64}, \ref{fig:img2_comp64} and \ref{fig:img3_comp64} show a side by side comparison of the models which generate images with resolution 64$\times$64. Figures \ref{fig:img1_comp256}, \ref{fig:img2_comp256} and \ref{fig:img3_comp256} include a side by side comparison of the models which generate images with resolution 256$\times$256.

\begin{figure}[h]
    \centering
    \includegraphics[width=\textwidth]{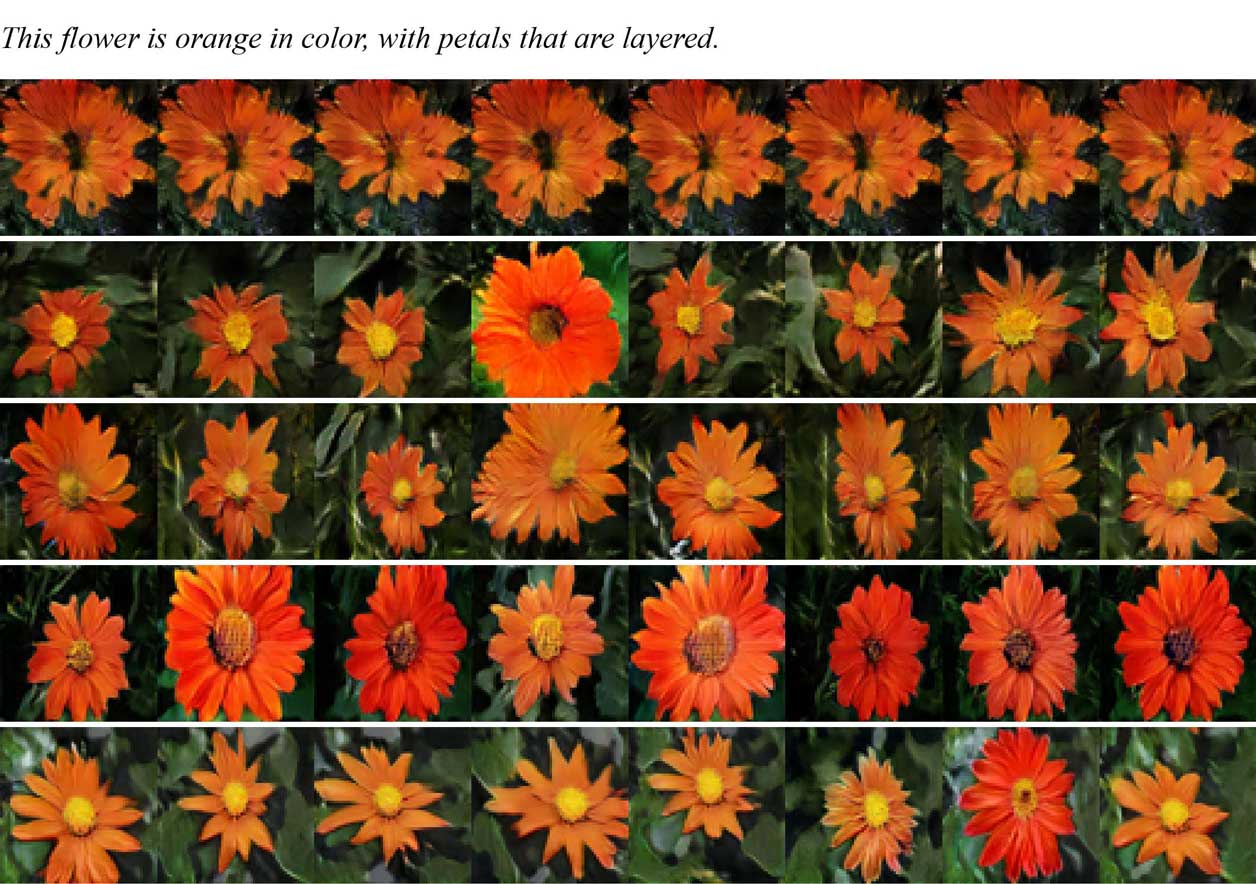}
    \caption[Side by side comparison of 64$\times$64 models (I)]{Each row contains 64$\times$64 images generated by a different model from the top text description. The order is: GAN-CLS (first row), WGAN-CLS (second row), StackGAN Stage I (third row), Conditional Least Squares PGGAN (forth row), Conditional Wasserstein PGGAN (fifth row).}
    \label{fig:img1_comp64}
\end{figure}

\begin{figure}[h]
    \centering
    \includegraphics[width=\textwidth]{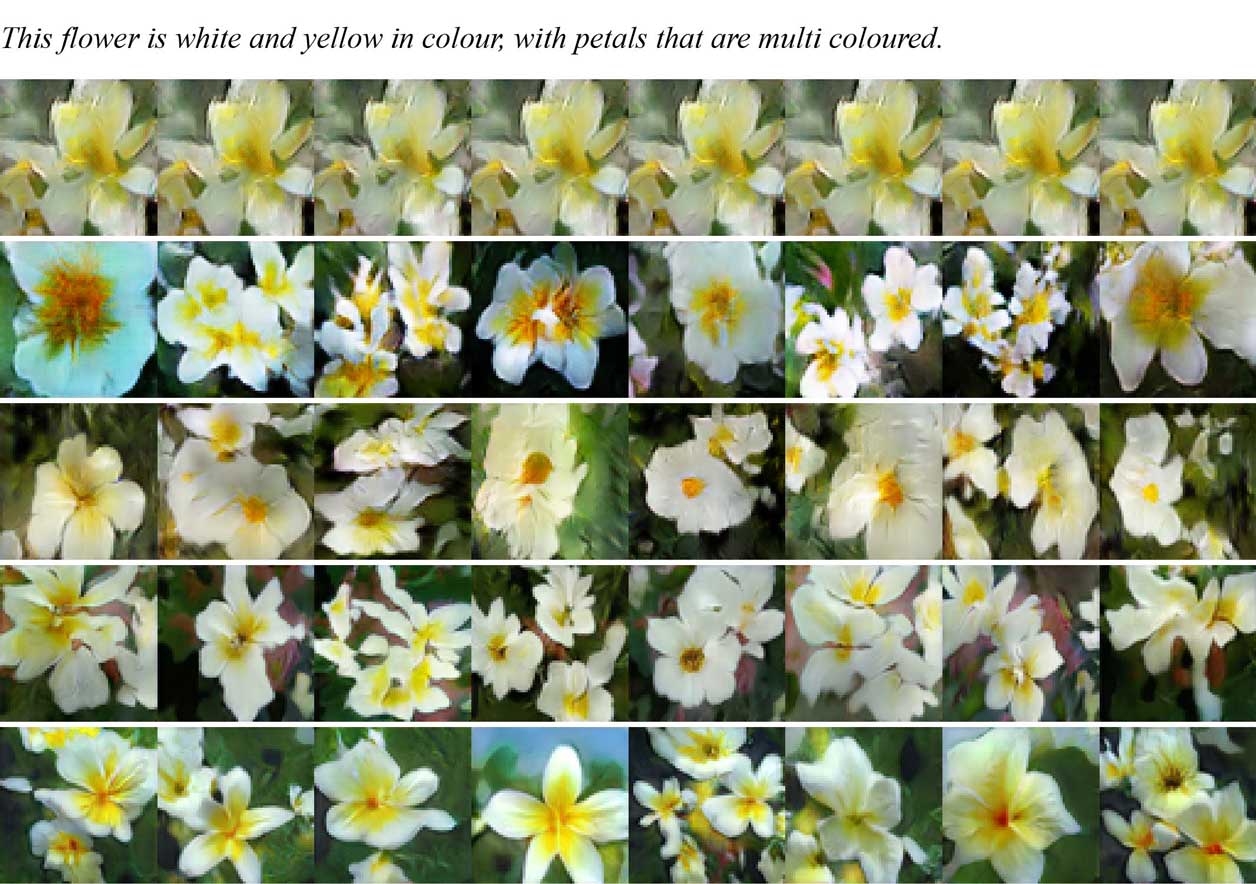}
    \caption[Side by side comparison of 64$\times$64 models (II)]{Each row contains 64$\times$64 images generated by a different model from the top text description. The order is: GAN-CLS (first row), WGAN-CLS (second row), StackGAN Stage I (third row), CLSPGGAN (forth row), CWPGGAN (fifth row).}
    \label{fig:img2_comp64}
\end{figure}

The lack of diversity of the images produced by GAN-CLS is evident. All the other models create a variety of images for the same text descriptions thanks to the condition augmentation module they are all equipped with. The quality of the images generated by WGAN-CLS is subjectively better than the one of GAN-CLS and comparable to the one of Stage I of StackGAN. The CPGGANs (64$\times$64) generate more structurally coherent images than the other models. The Wasserstein based CPGGAN generates even more diverse images, but the text-image matching of its images is slightly worse than the one of the other models. Figure \ref{fig:img3_comp64}, where CWPGGAN generates a few flowers which do not contain any shade of pink is one such example. 

\begin{figure}[h]
    \centering
    \includegraphics[width=\textwidth]{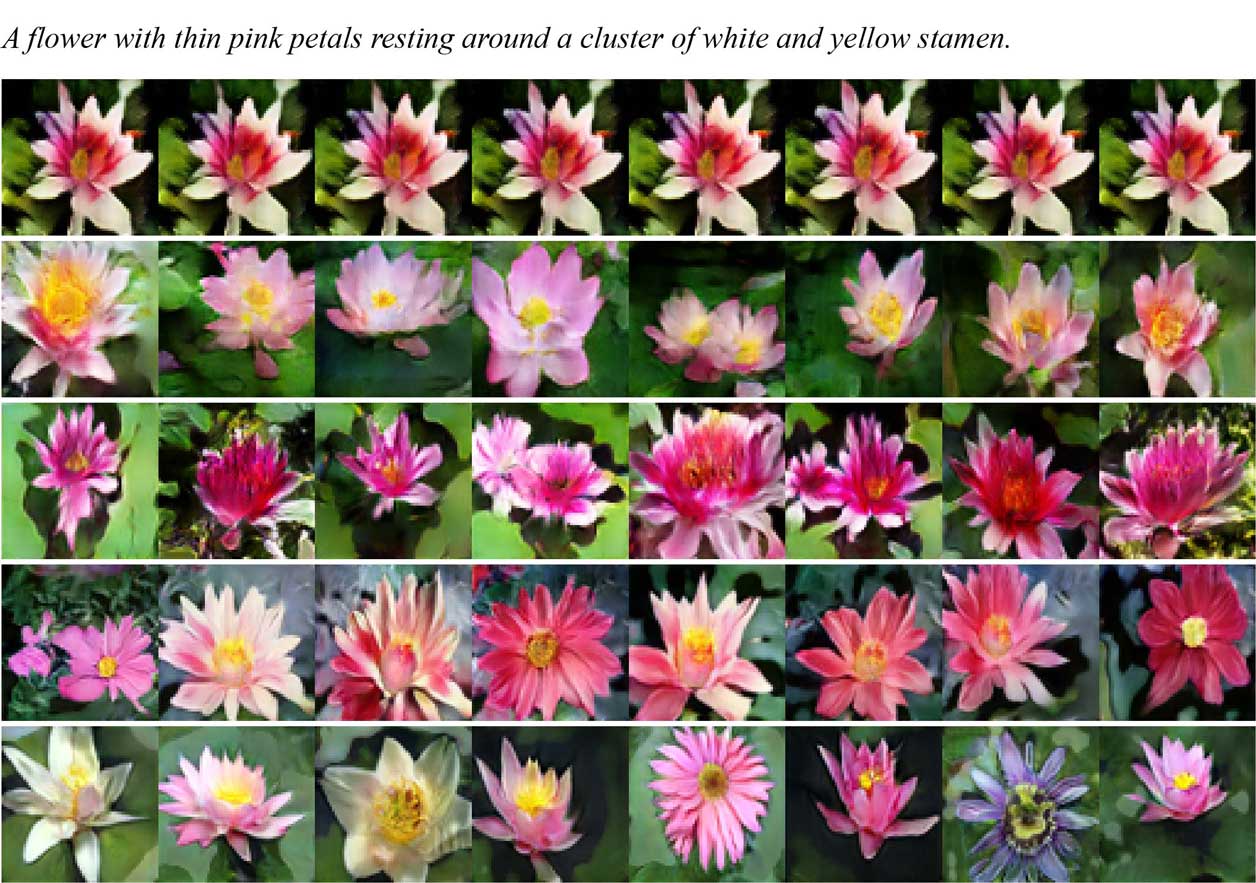}
    \caption[Side by side comparison of 64$\times$64 models (III)]{Each row contains 64$\times$64 images generated by a different model from the top text description. The order is: GAN-CLS (first row), WGAN-CLS (second row), StackGAN Stage I (third row), Conditional Least Squares PGGAN (forth row), Conditional Wasserstein PGGAN (fifth row).}
    \label{fig:img3_comp64}
\end{figure}

\begin{figure}[h]
    \centering
    \includegraphics[width=\textwidth]{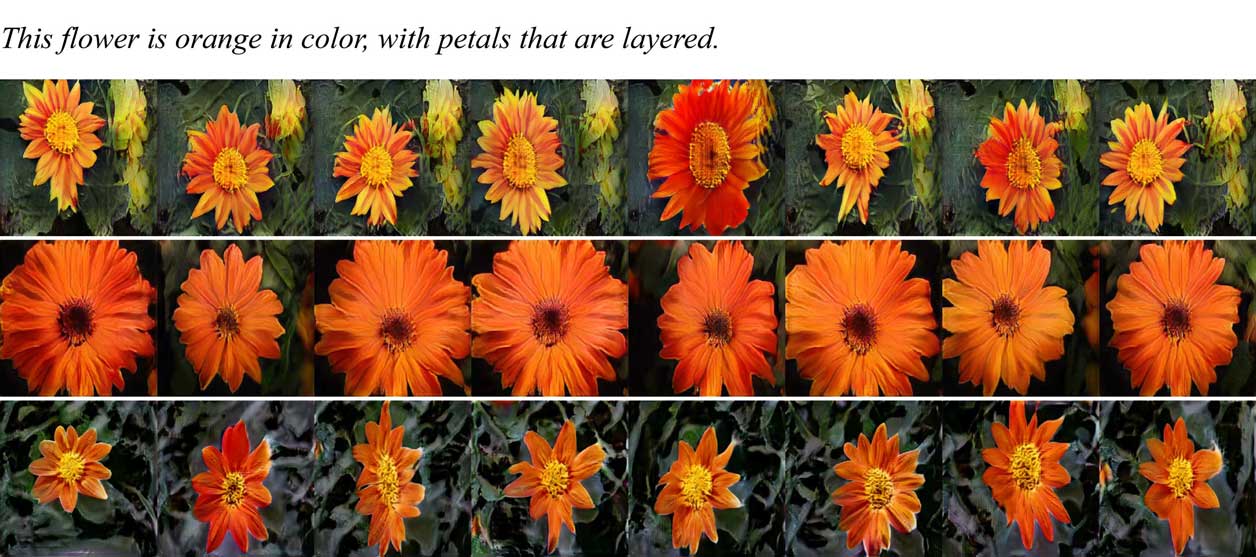}
    \caption[Side by side comparison of 256$\times$256 models (I)]{Each row contains 256$\times$256 images generated by a model from the top text description. The order is: StackGAN Stage II (first row), Conditional Least Squares PGGAN (second row), Conditional Wasserstein PGGAN (third row).}
    \label{fig:img1_comp256}
\end{figure}

The slightly worse text-image matching becomes more visible on the 256$\times$256 version of CWPGGAN (see \ref{fig:img3_comp256}). Nevertheless, the images are subjectively better than the images of the other models, which is also confirmed by the Inception Score. Note that, in the case of 256$\times$256 images, the CLSPGGAN generates slightly unrealistic textures (Figure \ref{fig:img1_comp256}) or images which lack local coherence (Figure \ref{fig:img3_comp256}). I believe this is due to the Gaussian noise hack which was used to fix its instability.

To test that the models do not simply memorise the images from the dataset and that they produce new images, a nearest neighbour analysis is given in Figure \ref{fig:neighb64} for 64$\times$64 images and Figure \ref{fig:neighb256} for 256$\times$256 images.

\begin{figure}[h]
    \centering
    \includegraphics[width=\textwidth]{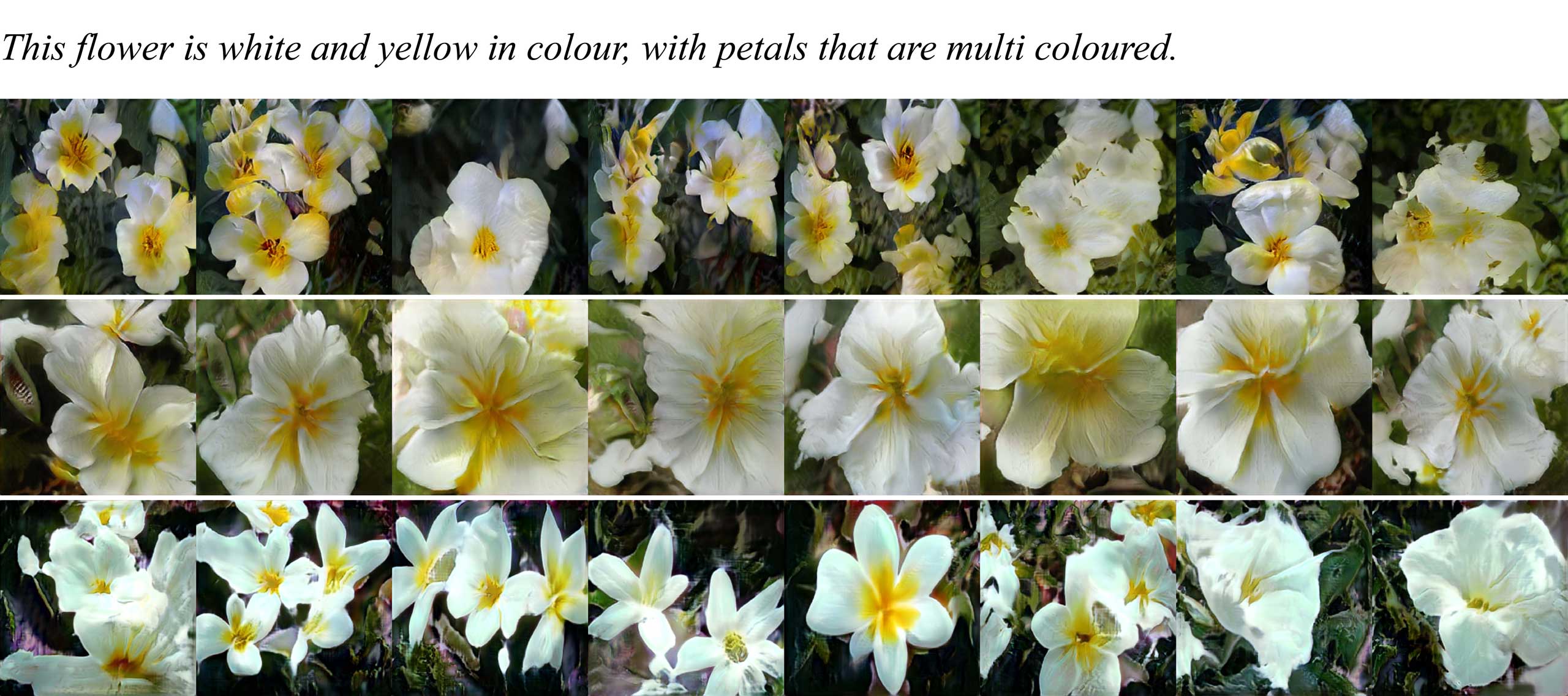}
    \caption[Side by side comparison of 256$\times$256 models (II)]{Each row contains 256$\times$256 images generated by a model from the top text description. The order is: StackGAN Stage II (first row), Conditional Least Squares PGGAN (second row), Conditional Wasserstein PGGAN (third row).}
    \label{fig:img2_comp256}
\end{figure}

\begin{figure}[h]
    \centering
    \includegraphics[width=\textwidth]{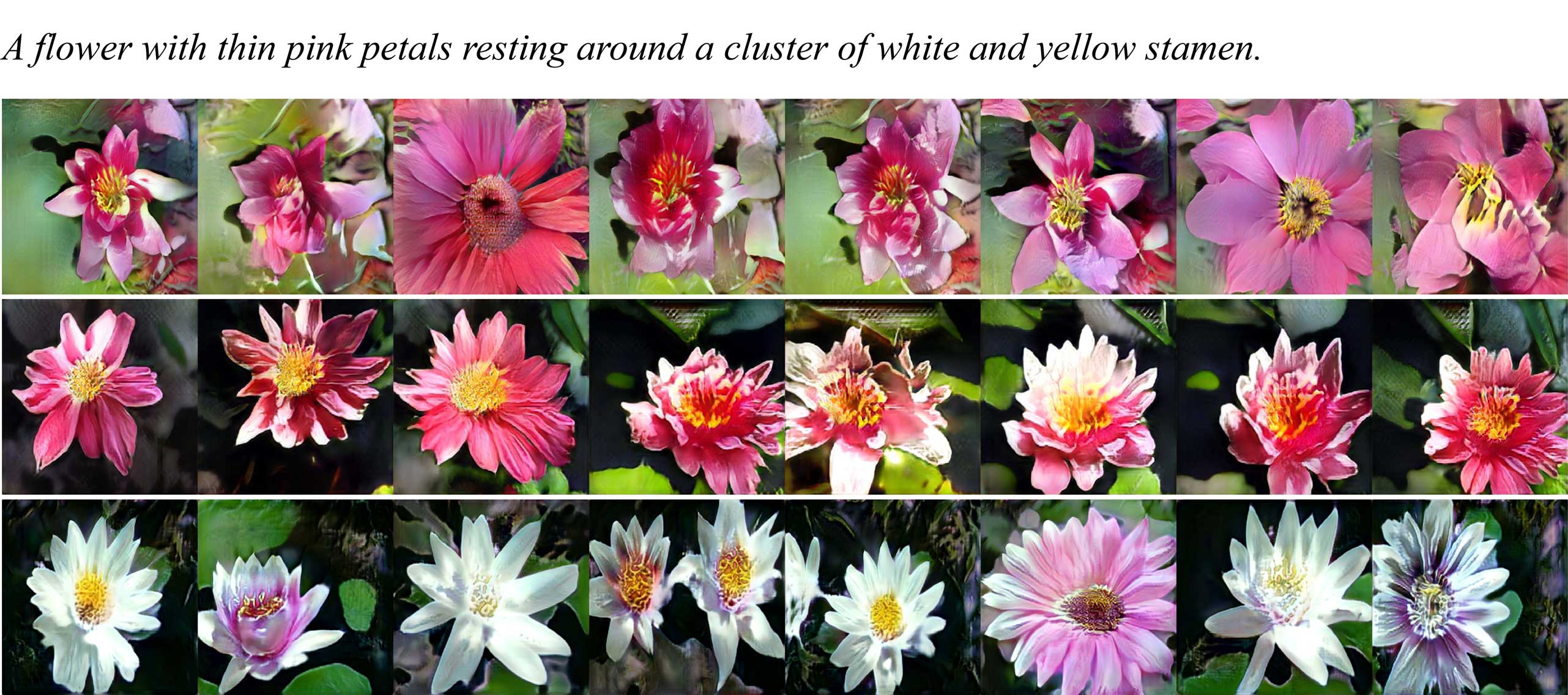}
    \caption[Side by side comparison of 256$\times$256 models (III)]{Each row contains 256$\times$256 images generated by a model from the top text description. The order is: StackGAN Stage II (first row), Conditional Least Squares PGGAN (second row), Conditional Wasserstein PGGAN (third row).}
    \label{fig:img3_comp256}
\end{figure}

\begin{figure}[ht]
    \centering
    \includegraphics[width=\textwidth]{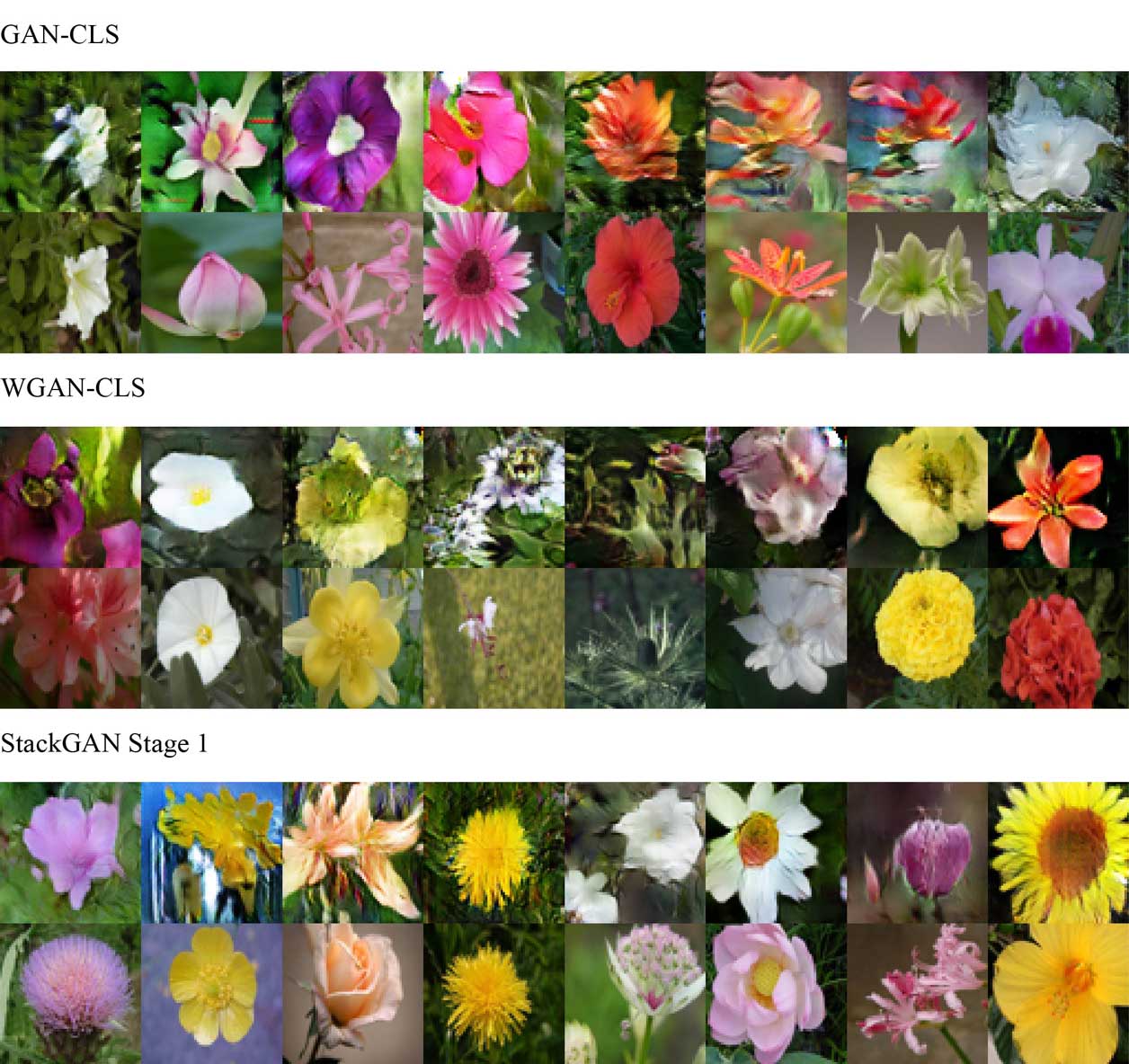}
    \caption[Nearest neighbour analysis for 64$\times$64 images]{For each model, the first row contains 64$\times$64 images produced by the model and the second row contains the nearest neighbour from the training dataset.}
    \label{fig:neighb64}
\end{figure}

\begin{figure}[ht]
    \centering
    \includegraphics[width=\textwidth]{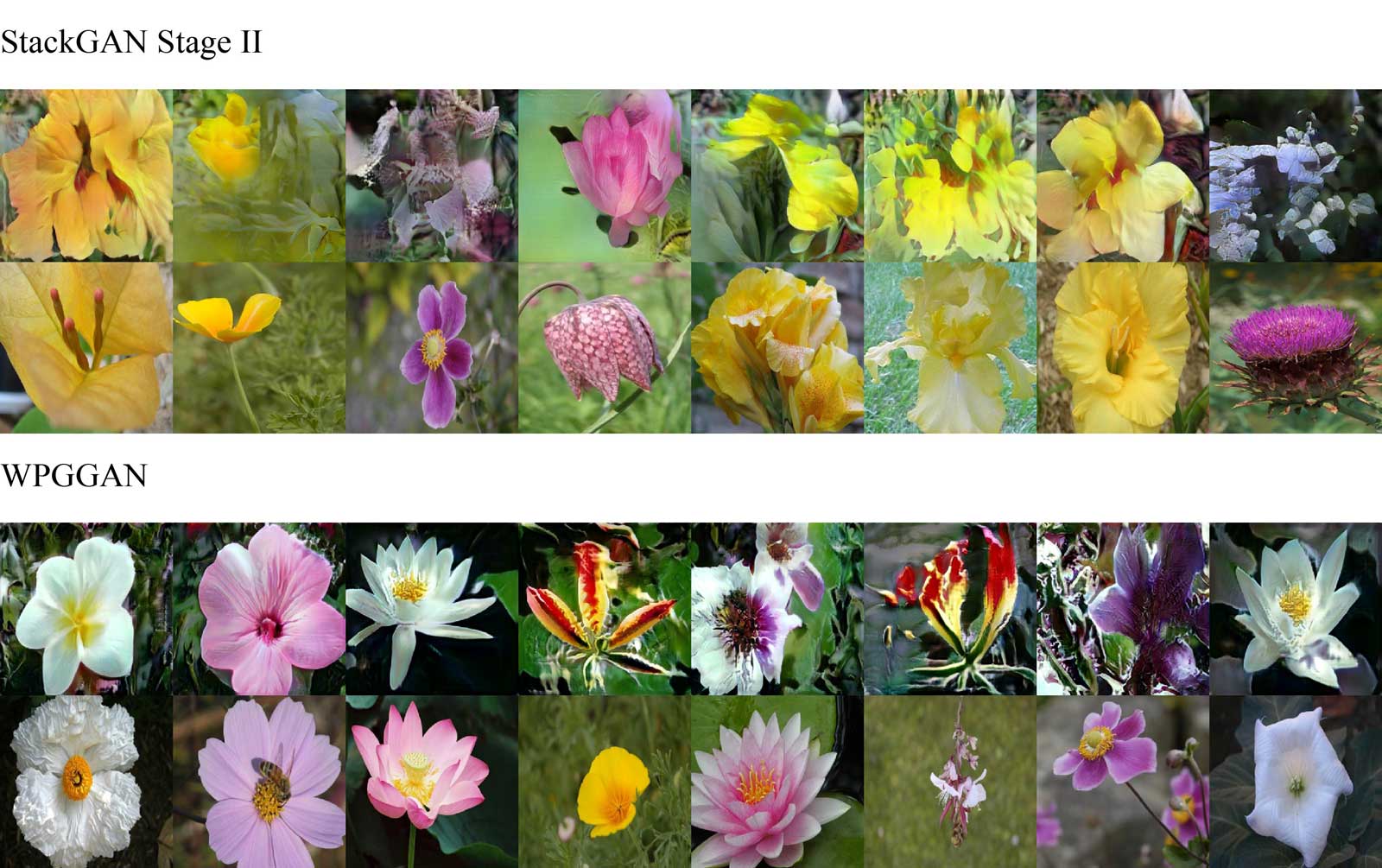}
    \caption[Nearest neighbour analysis for 256$\times$256 images]{For each model, the first row contains 256$\times$256 images produced by the model and the second row contains the nearest neighbour from the training dataset.}
    \label{fig:neighb256}
\end{figure}

A comparison between StackGAN and Conditional Wasserstein PGGAN is provided in Appendix \ref{appendix:birds}.
\chapter{Reflection and Conclusion} \label{sec:conclusions}

This chapter contains a reflection on the planning and management of this project and ends with a conclusion of the present work.

\section{Planning and Management}

The plan of my project is divided into two main parts:

\begin{enumerate}
    \item The first part, covering the first semester, was concerned with background reading, the understanding of the existent state of the art models, the reproduction of their results and the identification of their limitations and consequently of possible directions of research.
    \item The second part, covering the second semester, was concerned with finding solutions for the identified research problems. 
\end{enumerate}

\begin{table}[h]
    \centering
    \begin{tabular}{| l | c | c |}
        \hline 
        Milestone & Planned Weeks & Actual Weeks \\ \hline
        Background reading on GANs & 1-4 & 1-4 \\ \hline
        Reproduce GAN-CLS results \cite{reed2016generative} & 5-6 & 5-7 \\ \hline
        Reproduce StackGAN results \cite{han2017stackgan} & 7-10 & 7-11 \\ \hline
        Reproduce I2T2I results \cite{Dong2017I2T2ILT} & 11-13 & - \\ \hline
        Implement Inception Score evaluation & 14 & 12 \\ \hline
    \end{tabular}
    \caption[First semester milestones]{The milestones of the first part of the project. The plan is based on the weeks of the academic year.}
    \label{tab:milestones}
\end{table}

Table \ref{tab:milestones} includes the milestones for the first part of the project together with their timeline. Out of these milestones, I decided to skip the reproduction of the results of the I2T2I paper for two reasons. The StackGAN paper proposes a more elegant solution for textual data augmentation as discussed in Section \ref{sec:cond-aug}. The implementation of an image captioning system would have required significantly more background reading in the area of language models which is vast. Instead, I decided to start the research part of the project earlier, before the start of the second semester. After the first semester, I identified three research directions summarised in Table \ref{tab:research_directions}.

\begin{table}[h]
    \centering
    \begin{tabular}{| l | c | c |}
        \hline 
        Research direction & Planned Weeks & Actual Weeks \\ \hline
        Stable Conditional GAN & 15-20 & 13-21 \\ \hline
        Conditional GAN operating on multiple resolutions & 21-25 & 21-26 \\ \hline
        Explicit evaluation of text-image matching & 26 & 26 \\ \hline
    \end{tabular}
    \caption[Second semester research directions]{The identified research directions for the second semester. The plan is based on weeks of the academic year.}
    \label{tab:research_directions}
\end{table}

Out of these research directions, I obtained good results for the first two points on the list as described in Chapter \ref{sec:research}.

Due to the significant training time the presented models take and the limited computing resources (one Nvidia 1080Ti) I decided to focus my experiments on the flowers dataset and not on both the flowers and birds datasets as I originally intended. The focus on a slightly smaller and less complicated dataset such as Oxford-102 offered more time for testing ideas and rigorous evaluation. Nevertheless, I run a few experiments on the birds dataset, and the results can be found in Appendix \ref{appendix:birds}.




\section{Conclusion}

In this work, I present Generative Adversarial Networks and their application in the problem of text to image synthesis. I explain how the current state of the art models work at the intersection between Computer Vision and Natural Language Models and I reproduce the results of the papers which introduce them. Moreover, I bring my contribution to the field by proposing a novel Conditional Wasserstein GAN (WGAN-CLS) which enables conditional generation of images with a stable training procedure. The images this model generates are comparable to the current state of the art models. I show how this conditional Wasserstein loss function can be used in a more advanced model: the proposed Conditional Progressive Growing GAN. Other classical GAN loss functions would not work on such a model because of their instability during training. I show that Conditional Progressive Growing GANs, with the novel conditional Wasserstein loss, produce better results than the current state of the art models which use only sentence-level visual semantics.


\appendix
\chapter{Proof of Theorem~\ref{theo:1}}\label{appendix:A}

\fta*
\begin{proof}
First, we am going to find in function space the optimal strategy of the discriminator $D^{*}: \mathcal{X} \to [0, 1]$ for a fixed strategy $G: \mathcal{Z} \to \mathcal{X}$ of the generator. For this, we assume that $\mathbb{P}_r(\vec{x})$ and $\mathbb{P}_g(\vec{x})$ are non zero everywhere in order to avoid undefined behaviour. The proof uses variational calculus for finding function $D$ which maximises the functional $F[x, D] = V(D, G)$. For an introduction to calculus of variations please consult \cite{gelfand2012calculus}.

\begin{align*}
    V(D, G) &= \mathbb{E}_{\rvec{X} \sim \mathbb{P}_r}[\log(D(\vec{x}))] + \mathbb{E}_{\rvec{Z} \sim \mathbb{P}_{\rvec{Z}}}[\log(1 - D(G(\vec{z})))] 
    \\ &= \int_{\mathcal{X}}{\mathbb{P}_r(\vec{x})\log(D(\vec{x}))d\vec{x}} + \int_{\mathcal{Z}}{\mathbb{P}_{\rvec{Z}}(\vec{z})(\log(1 - D(G(\vec{z})))d\vec{z}} \tag{by the definition of expectation} 
    \\ &= \int_{\mathcal{X}}{\mathbb{P}_r(\vec{x})\log(D(\vec{x}))d\vec{x}} + \int_{\mathcal{X}}{\mathbb{P}_g(\vec{x})(\log(1 - D(\vec{x}))d\vec{x}} \tag{by the  Radon–Nikodym theorem of measure theory} 
    \\ &= \int_{\mathcal{X}}{[\mathbb{P}_r(\vec{x})\log(D(\vec{x})) + \mathbb{P}_g(\vec{x})(\log(1-D(\vec{x}))]d\vec{x}} \tag{by the additive property of integration} 
    \\ &= \int_{\mathcal{X}}{[\mathbb{P}_r(\vec{x})\log(D(\vec{x})) + \mathbb{P}_g(\vec{x})( \log(1-D(\vec{x}))]d\vec{x}}
    \\ &= \int_{\mathcal{X}}{L(\vec{x}, D)}
\end{align*}

We denote here the integrand by $L(x, D)$. To find $D(\vec{x})$ which maximises this functional, we apply the Euler-Lagrange equation from variational calculus, which can be further simplified because $D^\prime$ does not show up in $L$. For the next part, we will not include the argument $x$ of the functions $\mathbb{P}(\vec{x})$ and write them as $\mathbb{P}$ to save space.

\begin{align*}
    \pdv{L}{D} - \dv{}{x} \pdv{L}{D^\prime} = 0 &\Leftrightarrow \pdv{L}{D} - 0 = 0
    \\ &\Leftrightarrow \pdv{}{D}[\mathbb{P}_r\log(D) + \mathbb{P}_g(\log(1 - D)] = 0
    \\ &\Leftrightarrow \frac{\mathbb{P}_r}{D} - \frac{\mathbb{P}_g}{1 - D} = 0
    \\ &\Leftrightarrow \mathbb{P}_r(1 - D) = \mathbb{P}_gD
    \\ &\Leftrightarrow \mathbb{P}_r = (\mathbb{P}_g + \mathbb{P}_r)D
    \\ &\Leftrightarrow D = \frac{\mathbb{P}_r}{\mathbb{P}_g + \mathbb{P}_r}
\end{align*}

Now we can write the cost function of $G$ as $C(G) = V(D^{*}, G)$.
\begin{equation}\label{eq:A1}
\begin{split}
    C(G) = \mathbb{E}_{\rvec{X} \sim \mathbb{P}_r}\log(\frac{\mathbb{P}_r}{\mathbb{P}_r + \mathbb{P}_g}) + \mathbb{E}_{\rvec{X} \sim \mathbb{P}_g}\log(\frac{\mathbb{P}_g}{\mathbb{P}_r + \mathbb{P}_g})
\end{split}
\end{equation}

We now show that the global minimum of $C(G)$ is obtained when $\mathbb{P}_g = \mathbb{P}_r$. When this equation holds, $D = \frac{1}{2}$. Plugging this in $C(G)$ we obtain:
\begin{equation}\label{eq:A2}
\begin{split}
    C(G) &= \mathbb{E}_{\rvec{X} \sim \mathbb{P}_r}\log(\frac{1}{2}) +  \mathbb{E}_{\rvec{X} \sim \mathbb{P}_g}\log(\frac{1}{2})
    \\ &= -\mathbb{E}_{\rvec{X} \sim \mathbb{P}_r}\log(2) -  \mathbb{E}_{\rvec{X} \sim \mathbb{P}_g}\log(2) = -\log(4)
\end{split}
\end{equation} 
By adding equations \ref{eq:A1} and \ref{eq:A2} we see that $-\log(4)$ is indeed the minimum value of $C(G)$
\begin{align*}
    C(G) -\mathbb{E}_{\rvec{X} \sim \mathbb{P}_r}\log(2) - \mathbb{E}_{\rvec{X} \sim \mathbb{P}_g}\log(2) &= \mathbb{E}_{\rvec{X} \sim \mathbb{P}_r}\log(\frac{\mathbb{P}_r}{\mathbb{P}_r + \mathbb{P}_g})  \\ &+  \mathbb{E}_{\rvec{X} \sim \mathbb{P}_g}\log(\frac{\mathbb{P}_g}{\mathbb{P}_r + \mathbb{P}_g}) \\ &- \log(4)
\end{align*} 
By leaving only $C(G)$ on the left hand side we obtain: 
\begin{align*}
        C(G) &= -\log(4) + KL\infdiv{\mathbb{P}_r}{\frac{\mathbb{P}_r + \mathbb{P}_g}{2}} + KL\infdiv{\mathbb{P}_g}{\frac{\mathbb{P}_r + \mathbb{P}_g}{2}} 
        \\ &= -\log(4) + JS\infdiv{\mathbb{P}_r}{\mathbb{P}_g}
\end{align*}
Where $KL$ and $JS$ are the Kullback-Leibler divergence and the Jensen–Shannon divergence respectively. Because the Jensen–Shannon divergence is a metric it has the non-negativity and identity of indiscernibles properties. Thus, its minimum of 0 is realised only when $\mathbb{P}_r = \mathbb{P}_g$.

\end{proof}

Note that as in maximum likelihood models, no knowledge of the unknown $\mathbb{P}_r$ is needed to optimise $JS\infdiv{\mathbb{P}_r}{\mathbb{P}_g}$ (only samples). The expectation over $\rvec{X} \sim \mathbb{P}_r$ can be approximated using a sample mean.
\chapter{Notions of Deep Learning} \label{appendix:nn}

The goal of this chapter is to explain some of the Deep Learning notions and terminology used throughout this report. On the one hand, these notions are very commonly used and cannot be avoided. On the other hand, describing them in the body of the report would distract the reader from the main ideas.

\section{Neural Networks}

Neural Networks can be viewed as universal parametric function approximators. A function $f: A \to B$ can be approximated by a parametric function $f_{\vec{\theta}}: A \to B$ where $\vec{\theta}$ are the parameters of the network. A key idea in deep learning is that learning complicated functions can be done by using the composition of multiple but simple non linear functions. The stack of layers of a network is a composition of such functions. Assuming $f_{\vec{\theta}}$ has $n$ layers, these can be denoted by $f^{(1)}_{\vec{\theta}}, \dots, f^{(n)}_{\vec{\theta}}$. Then, $f_{\vec{\theta}} = f^{(n)}_{\vec{\theta}}(\dots(f^{(1)}_{\vec{\theta}}))$

\subsection{Backpropagation}

In order to approximate $f$, a loss function $L$ which describes how far the approximation $f_{\vec{\theta}}$ is from $f$ is used. The approximation can be improved by decreasing $L$. The minimisation of $L$ offers a way to adjust the parameters $\vec{\theta}$ to improve the approximation. This process is called back-propagation. For any parameter $\vec{\theta}_i$ of the network, the partial derivative $\pdv{L}{\vec{\theta}_{i}}$ can be computed using the (multivariate) chain rule. For this partial derivative to exist, it is required that the functions each layer implements are differentiable (almost everywhere). Thus, the parameters can be updated using the following procedure: $\vec{\theta}_{i} \gets \vec{\theta}_i - \alpha \pdv{L}{\vec{\theta}_{i}}$ where $\alpha$ is the learning rate. The minus sign is introduced because the parameter must be moved in the opposite direction of the sign of the derivative to approach the minimum of $L$.

Most often, mini-batch gradient descent is used. The network does not take as input a single example, but rather a batch of samples and the loss is computed with respect to this batch. When the parameter update is performed, it is calculated using the average derivative of that parameter where the average is taken over all the examples in the mini-batch. Thus, bigger mini-batches help reduce the variance of the updates but introduce additional computational cost at the same time.

\subsection{Activation Functions}

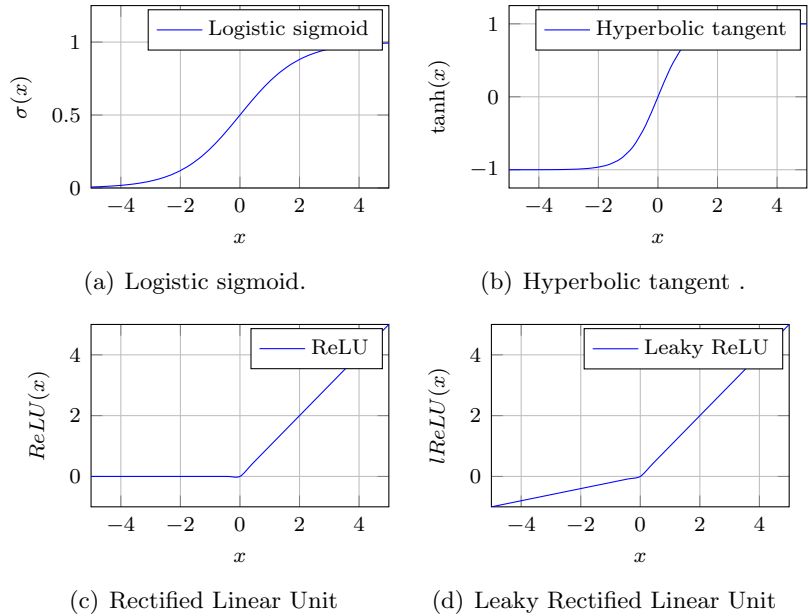
\begin{figure}[h]
    \centering
    \subfigure[Logistic sigmoid.]{
            \begin{tikzpicture}
            \begin{axis}[width=5.5cm,height=4cm,ylabel=$\sigma(x)$,xlabel=$x$,ymin=0,ymax=1.25,xmin=-5,xmax=5]
                \addplot[blue,smooth] {1/(1+exp(-x))};
                \addlegendentry{Logistic sigmoid}
            \end{axis}
        \end{tikzpicture}
    }
    \subfigure[Hyperbolic tangent .]{
        \begin{tikzpicture}
            \begin{axis}[width=5.5cm,height=4cm,ylabel=$\tanh(x)$,xlabel=$x$,ymin=-1.25,ymax=1.25,xmin=-5,xmax=5]
                \addplot[blue,smooth] {tanh(x)};
                \addlegendentry{Hyperbolic tangent}
            \end{axis}
        \end{tikzpicture}
    }\\
    \subfigure[Rectified Linear Unit]{
            \begin{tikzpicture}
            \begin{axis}[width=5.5cm,height=4cm,ylabel=$ReLU(x)$,xlabel=$x$,ymin=-1,ymax=5,xmin=-5,xmax=5]
                \addplot[blue,smooth] {max(0, x)};
                \addlegendentry{ReLU}
            \end{axis}
        \end{tikzpicture}
    }
    \subfigure[Leaky Rectified Linear Unit]{
            \begin{tikzpicture}
            \begin{axis}[width=5.5cm,height=4cm,ylabel=$lReLU(x)$,xlabel=$x$,ymin=-1,ymax=5,xmin=-5,xmax=5]
                \addplot[blue,smooth] {max(0.2*x, x)};
                \addlegendentry{Leaky ReLU}
            \end{axis}
        \end{tikzpicture}
    }
        \caption[Activation functions]{The commonly used activation functions. Softmax is not depicted it here because it is a multivariate vector valued function and it is harder to visualise.}
        \label{fig:act-fct}
\end{figure}

Usually, $f^{(i)}_{\vec{\theta}} = h(g^{(i)}_{\vec{\theta}})$ where $g^{(i)}_{\vec{\theta}}$ is a simple (linear) function followed by a non linearity $h(x)$. $h(x)$ is called an activation function. Without the activation functions, the network would not gain any additional capacity because the composition of multiple linear functions is still a linear function. In other words, multiple linear layers stacked together have the same capacity as a network with a single linear layer.

The activation functions (Figure \ref{fig:act-fct}) commonly used in practice and in this report are:

\begin{itemize}
    \item The sigmoid function $\sigma(x) = \frac{1}{1 + e^{-x}}$. It is also called the logistic function. It can be used to bring a variable in the range $[0, 1]$.
    \item The hyperbolic tangent activation function $\tanh(x) = 2\sigma(2x) - 1$ is a re-scaled sigmoid which brings the values in the range $[-1, 1]$. It is generally used when generating images to bring the values of the pixels in the range $[-1, 1]$.
    \item The softmax function takes as input a vector $\vec{x}$ and outputs a vector $\xi(\vec{x})$ whose values are in range $[0, 1]$ and sum up to one. It is usually used in classifiers to obtain a valid probability distribution over the classes some input could belong to. It is defined as  $\xi(\vec{x})_j = \frac{\vec{x}_j}{\Sigma_{k}\vec{x}_k}$ 
    \item A very popular and simple activation is the Rectified Linear Unit: $ReLU(x) = max(0, x)$ \cite{DBLP:journals/corr/HeZR015}. It is used in the intermediate layers to introduce non linearity in the model. Because for $x > 0$ the derivative is constant and non-zero, this activation prevents the gradient from saturating. 
    \item A generalisation of ReLUs are Leaky ReLUs $lReLU(x) = max(kx, x)$ where $k \in [0, 1]$ is usually close to $0$. $k = 0.2$ is a common value.
\end{itemize}

\section{Normalisation Techniques}

\subsection{Batch Normalisation}

Batch Normalisation \cite{Ioffe:2015:BNA:3045118.3045167} is a widely used technique for speeding up the learning and improving the performance of deep learning models. Ideally, it is desired that the input to a model to be whitened, to have zero mean and unit variance. Whitening the data was shown decades ago \cite{LeCun:1998:EB:645754.668382} to improve the speed of the training. Nevertheless, for deep learning, it is not enough because between layers inputs which are not normalised appear. A layer could supply to the next layer inputs with high variance and a mean far from zero. This phenomenon is called internal covariance shift. The fix is to whiten the data given as input to every layer using the batch statistics as in Equation \ref{eq:batch-norm}. 

\begin{equation} \label{eq:batch-norm}
    \hat{x}_{i} = \frac{x_{i} - \mu}{\sqrt{\sigma^2 + \epsilon}} \text{ and } y_{i} = \gamma \hat{x}_{i} + \beta
\end{equation}

Here, $x_{i}$ is an activation for the $i^{th}$ example in the minibatch and $\hat{x}_{i}$ is its whitened version. $\mu$ and $\sigma^2$ are the mean and variance of that activation over the entire batch. The trainable parameters $\gamma$ and $\beta$, ensure that this transformation is also able to represent the identity transform. They act as denormalisation parameters and can reverse the whitening if needed. 

Batch normalisation is based on the assumption that the batch statistics approximate the dataset statistics. Thus, the disadvantage of batch normalisation is that for small batch sizes the approximation is not so good and the performance drops. For more details, please check \cite{Ioffe:2015:BNA:3045118.3045167}.

GANs are empirically known to be more stable on architectures which use batch normalisation in the generator and the discriminator. 

\subsection{Layer Normalisation}

Layer normalisation performs the same type of whitening as batch normalisation with the exception that the normalisation is performed over all the hidden units in a layer and not by using the mini-batch statistics.

Because this normalisation technique is independent of the size of the mini-batch, it has the advantage that it does not impose a lower bound on the batch size. Nevertheless, layer normalisation brings only marginal improvements in convolutional layers and is better suited for Recurrent Neural Networks and fully connected layers. 

\section{Convolutional Layers}

Convolutional Layers are the building block of Convolutional Neural Networks. They operate on third order tensors, or informally, a 3D array of values and produce as output another 3D array (not necessarily with the same dimensions). Images are one such tensor with dimensions width $\times$ height $\times$ 3 in the case of RGB images. Convolutional layers are composed of a number $f$ of filters which can be adjusted. A filter has a reduced spatial resolution such as 3x3, and its depth is always equal to the depth of the input tensors. Each filter is convoluted with regions of the input tensor by sliding it across the width and height of the tensor. The result of each such convolution operation is a scalar. After sliding the filter over the spatial dimensions of the tensor, the output scalars form together a matrix. Each of the $f$ filters produces one such matrix. All these matrices stacked together form the output tensor. Convolutional layers also have other parameters besides the number of filters and the size of the filter. One of them is the stride which determines by how many units the filter is moved in each direction during sliding. Another one is the amount of zero padding which refers to padding the borders of the input tensor with zeros. These four hyper-parameters: the number of filters, the filter size, the stride and the amount of zero padding are used to manipulate the exact shape of the output tensor (Figure \ref{fig:convolution}).

\begin{figure}[h]
    \centering
    \includegraphics[width=\textwidth]{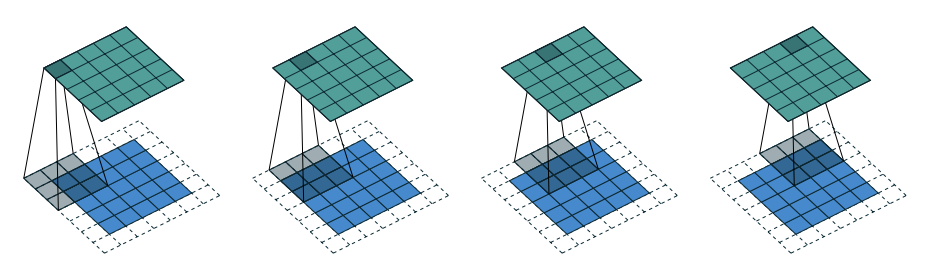}
    \caption[Convolution]{A convolutional filter with padding, stride one and filter size of 3x3. Image is taken from \cite{dumoulin2016guide}.}
    \label{fig:convolution}
\end{figure}

Each of the filters of a convolutional layer tries to learn a useful visual feature such various types of edges. The filters from the deeper levels of the network recognise more complex structures from the input image.

\begin{figure}[h]
    \centering
    \includegraphics[width=\textwidth]{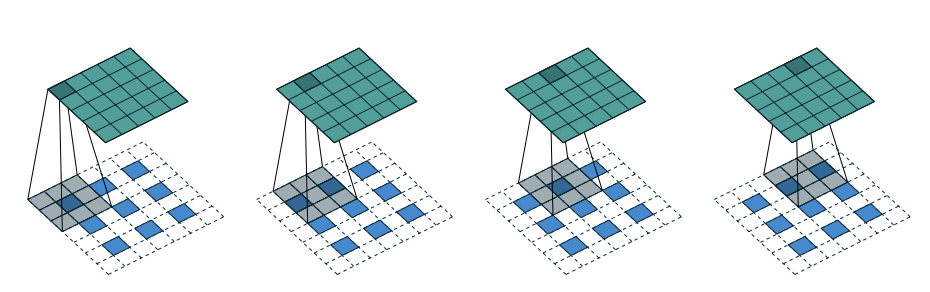}
    \caption[Transposed convolution]{The transposed convolution operation (also called deconvolution) performing upsampling. The resolution of the output (5x5) is higher than the one of the input (3x3). Image is taken from \cite{dumoulin2016guide}.}
    \label{fig:deconvolution}
\end{figure}

Convolutional layers can reduce or maintain the spatial resolution of the input tensor. Nevertheless, sometimes a reverse operation is needed to perform upsampling. A deconvolutional layer does precisely that. Deconvolutional layers can be thought of (and inefficiently implemented) as regular convolutional layers with the exception that the input pixels are moved apart, and zeros are inserted between them (see Figure \ref{fig:deconvolution}).

For more details on convolutional layers and their arithmetic, please consult \cite{dumoulin2016guide}.

\section{Residual Layers}

A good rule of thumb in deep learning is that more layers do not always translate to better performance. In fact, only increasing the depth of deep learning models has been shown to cause a decrease in the performance of the network \cite{resnet}. 

\begin{figure}[h]
    \centering
    \includegraphics[width=\textwidth]{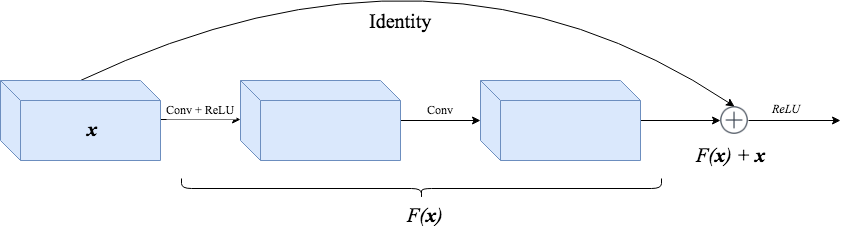}
    \caption[Residual layer]{A residual layer with two intermediate convolutional layers. The curved arrow represents the identity skip connection. The output of the two layers $F(\vec{x})$ and the input $\vec{x}$ are added at the end. An activation functions is applied after the addition.}
    \label{fig:res-layer}
\end{figure}

Residual Layers \cite{resnet} have eliminated this problem and led to better results. Figure \ref{fig:res-layer} shows the architecture of a residual layer. In this architecture, the network has to learn a function $F$ with respect to the identity mapping, rather than the zero mapping. This approach has two advantages:

\begin{enumerate}
    \item If the identity mapping is needed, the network can easily represent it by setting the value of the two intermediate weight layers to zero.
    \item The shortest path from any layer to the output layer is shorter than in a normal network. The identity skip connection helps prevent the gradient from becoming almost zero, especially for the first layers which are very far from the output.
\end{enumerate}

For more details please check \cite{resnet, He2016IdentityMI, Veit2016ResidualNB}.
\chapter{Additional Explanations of Wasserstein GANs} \label{appendix:wgan}

I include here a few clarifications and intuitive explanations for the technical terms used in Section \ref{sec:wgan-cls}.

\section{Manifolds}

I will not formally introduce manifolds here as it would be hard to present them rigorously in such a limited space. They are in essence generalisations in $n$ dimensions of surfaces. They are spaces with the property that the neighbourhood of any point resembles a Euclidean space. A typical example is that of Earth (the surface of a sphere) which locally (on Earth) looks flat, but the global structure is different. Low dimensional manifolds can be described by fewer dimensions than the space they live in. For example, a sheet of paper can be defined by two coordinates even if it is stretched in strange ways and it is perceived as part of the 3D world. It fundamentally remains a 2D object.

\section{Lipschitz Functions}

Lipschitz functions are functions whose rate of growth is limited. Formally, given two metric spaces $(\mathcal{A}, d_{\mathcal{A}})$ and $(\mathcal{B}, d_{\mathcal{B}})$, a function $f:\mathcal{A} \to \mathcal{B}$ is K-Lipschitz continuous for some constant $K$ if for any $x_1, x_2 \in \mathcal{A}$, relation \ref{eq:lipschitz} holds. 

\begin{equation} \label{eq:lipschitz}
    d_{\mathcal{B}}(f(x_1), f(x_2)) \leq K \cdot d_{\mathcal{A}}(x_1, x_2)
\end{equation}

This work is concerned with Euclidean spaces, so $d_{\mathcal{A}}$ and $d_\mathcal{B}$ are the usual Euclidean distances in their respective spaces. 
\chapter{Samples Generated on the Birds Dataset} \label{appendix:birds}

This section includes samples of generated images produced by Conditional Wasserstein PGGAN and StackGAN on the birds dataset as well as side by side comparisons between them. The birds dataset is more difficult than the flowers dataset. The dataset contains more images and more categories. The birds have different posses and are usually surrounded by vegetation which sometimes obstructs the view. Moreover, the descriptions are more detailed and refer to fine-grained details from the images. The higher difficulty of the dataset better reflects the difference between models.

\begin{figure}[h]
    \centering
    \includegraphics[width=\textwidth]{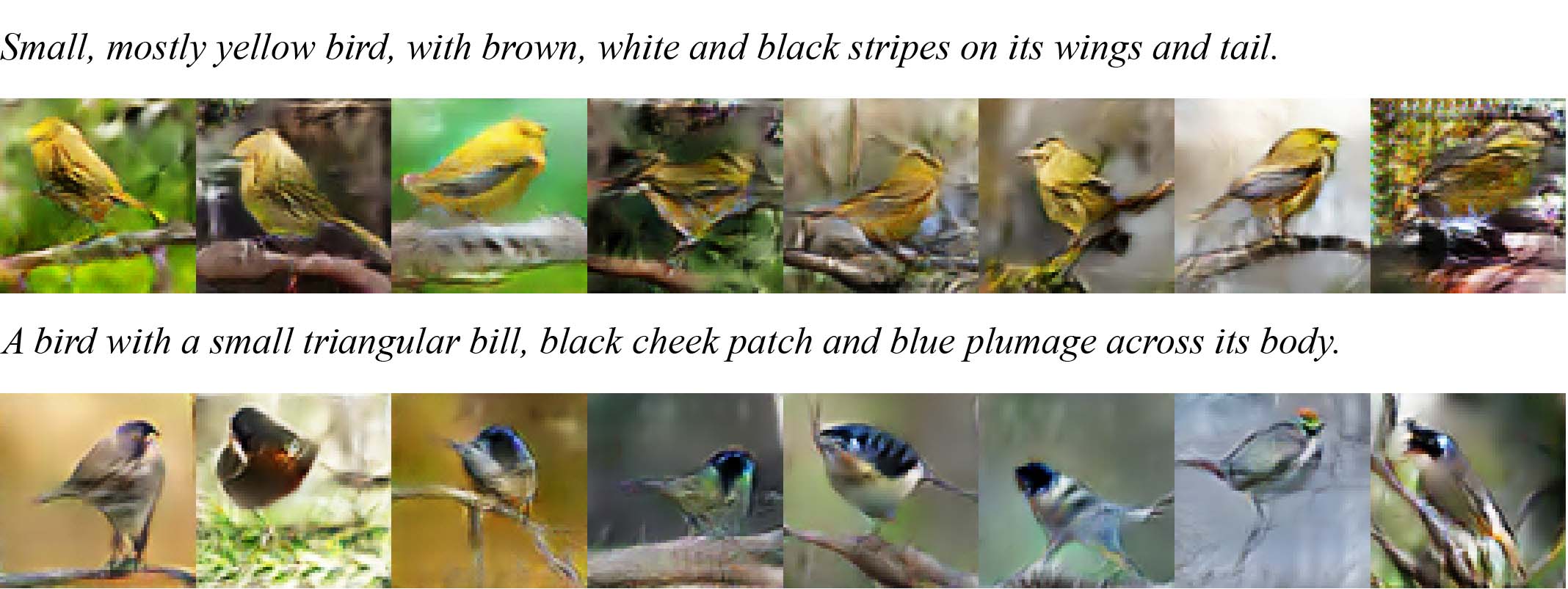}
    \caption[StackGAN Stage I birds dataset samples]{Samples generated by Stage I of StackGAN on the birds dataset.}
\end{figure}

\begin{figure}[h]
    \centering
    \includegraphics[width=\textwidth]{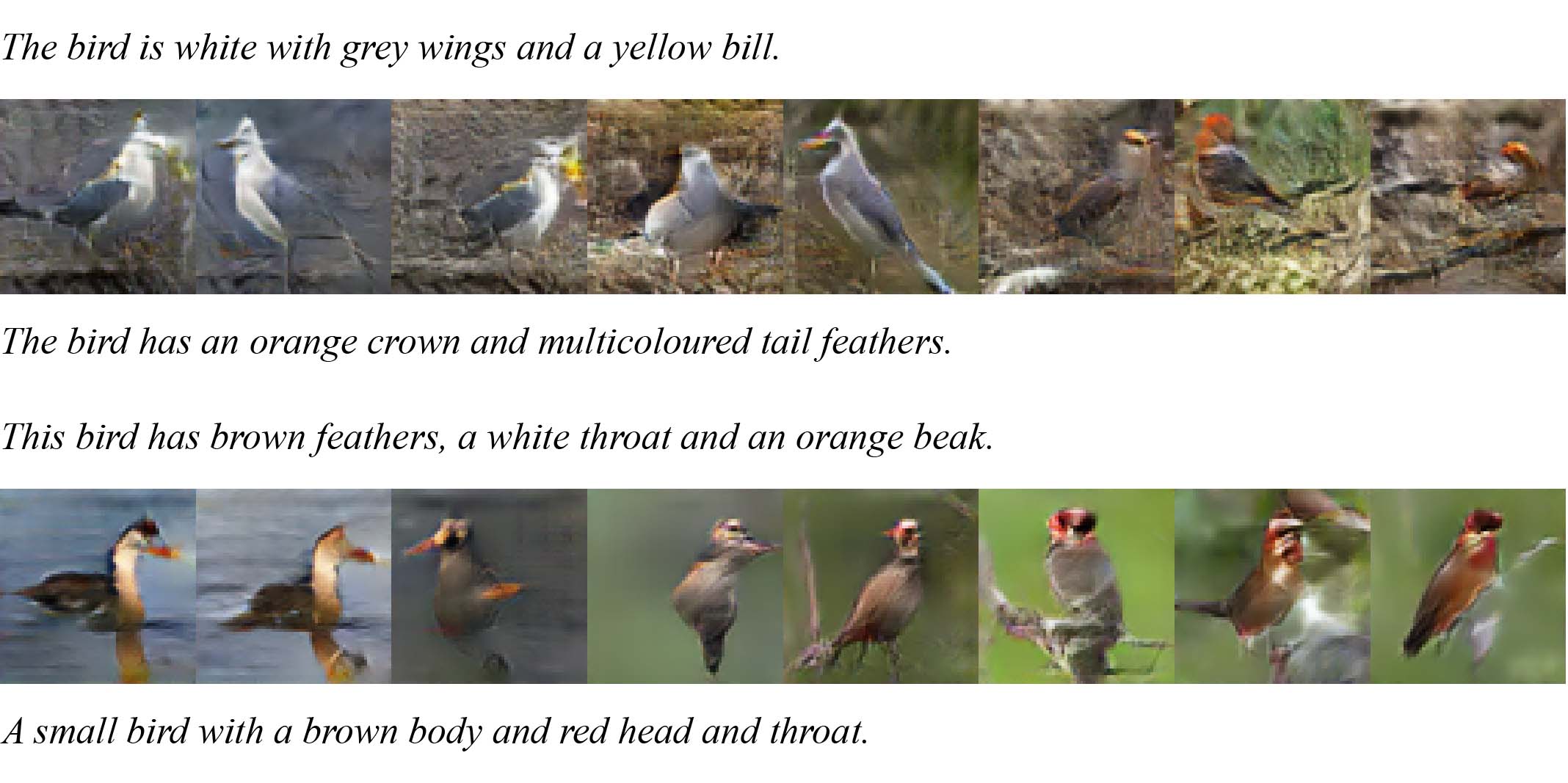}
    \caption[StackGAN Stage I birds dataset interpolations]{Samples generated by Stage I of StackGAN from interpolations of text embedding from the birds dataset.}
\end{figure}

\begin{figure}[h]
    \centering
    \includegraphics[width=\textwidth]{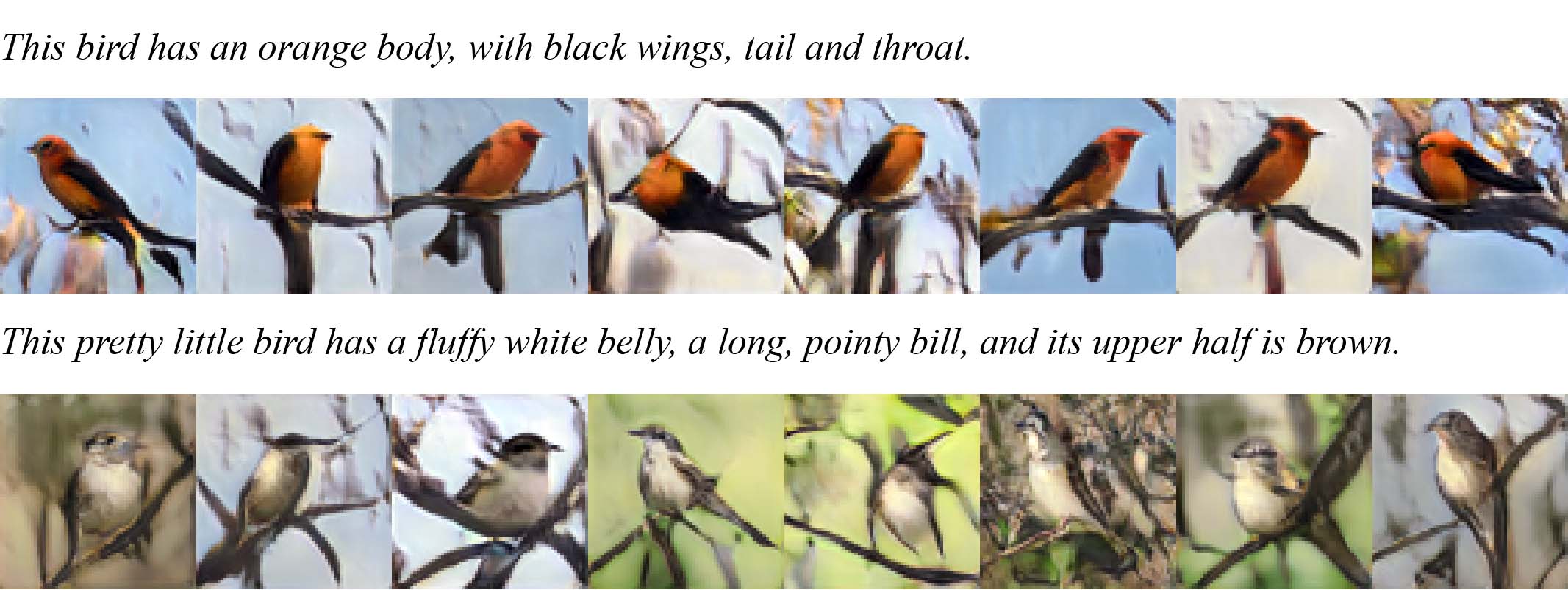}
    \caption[CWPGGAN birds dataset 64$\times$64 samples]{Samples (64$\times$64) generated by CWPGGAN on the birds dataset.}
\end{figure}

\begin{figure}[h]
    \centering
    \includegraphics[width=\textwidth]{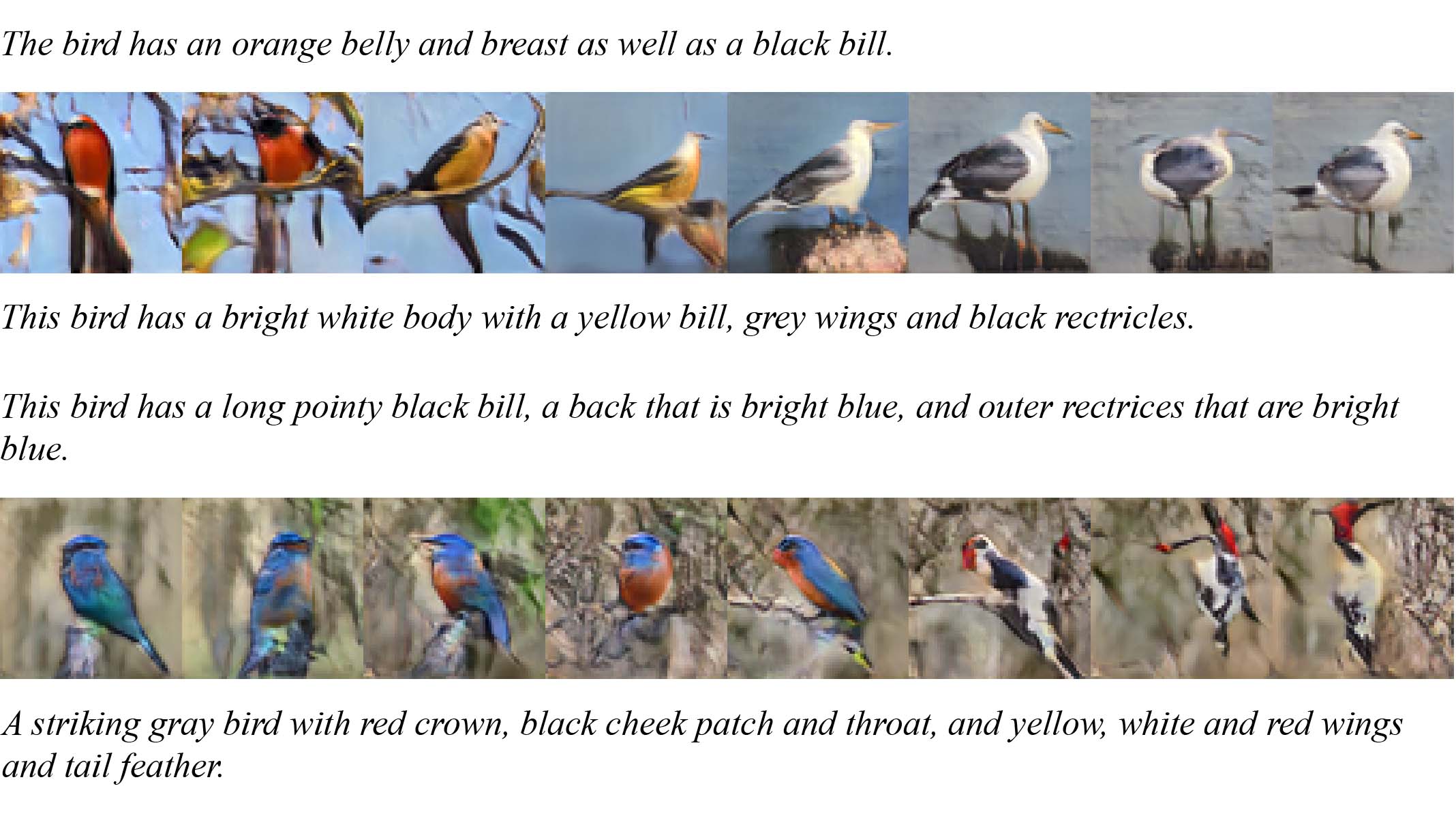}
    \caption[CWPGGAN birds dataset 64$\times$64 interpolations]{Samples (64$\times$64) generated by CWPGGAN from interpolations of text embedding from the birds dataset.}
\end{figure}

\begin{figure}[h]
    \centering
    \includegraphics[width=\textwidth]{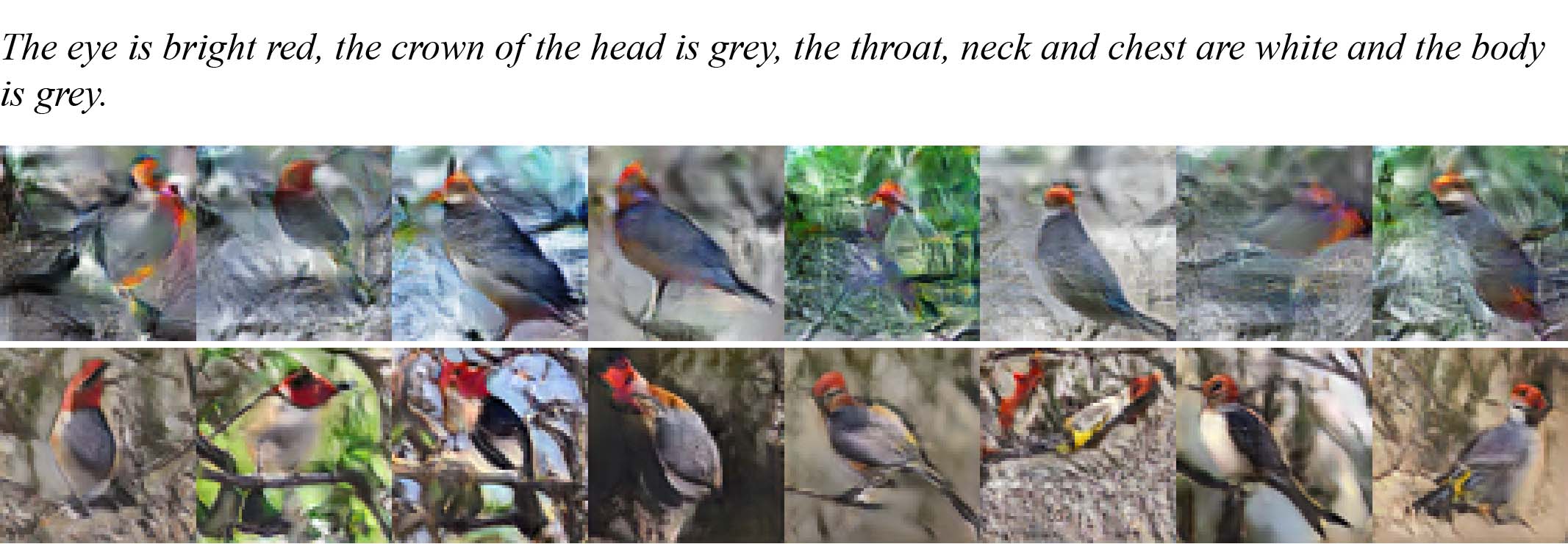}
    \caption[Comparison of 64$\times$64 models on the birds dataset (I)]{Comparison of 64$\times$64 models on the birds dataset: Stage I of StackGAN (first row), CWPGGAN (second row).}
\end{figure}

\begin{figure}[h]
    \centering
    \includegraphics[width=\textwidth]{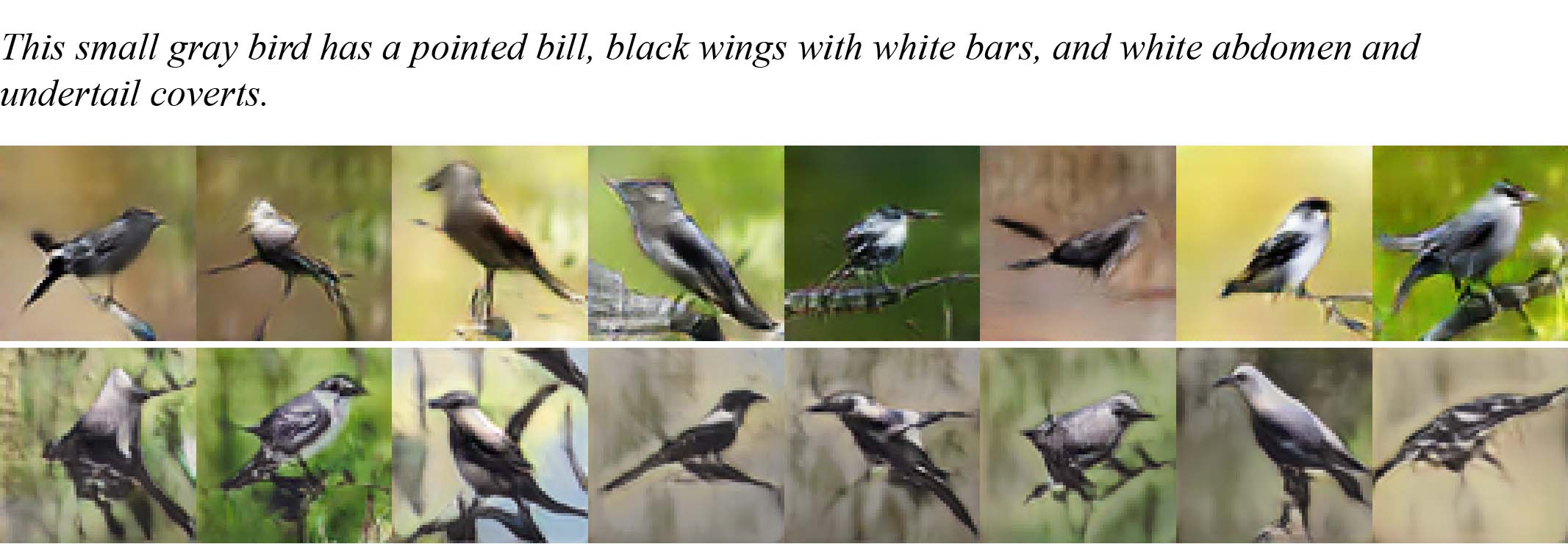}
    \caption[Comparison of 64$\times$64 models on the birds dataset (II)]{Comparison of 64$\times$64 models on the birds dataset: Stage I of StackGAN (first row), CWPGGAN (second row).}
\end{figure}

\begin{figure}[h]
    \centering
    \includegraphics[width=\textwidth]{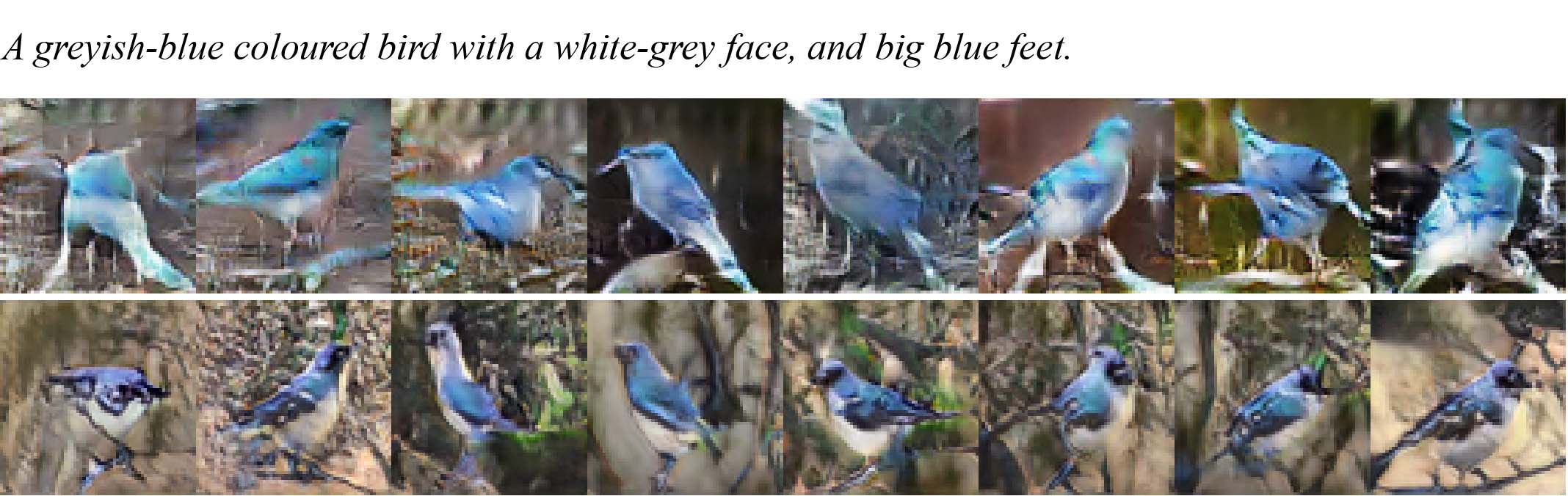}
    \caption[Comparison of 64$\times$64 models on the birds dataset (III)]{Comparison of 64$\times$64 models on the birds dataset: Stage I of StackGAN (first row), CWPGGAN (second row).}
\end{figure}

\begin{figure}[h]
    \centering
    \includegraphics[width=\textwidth]{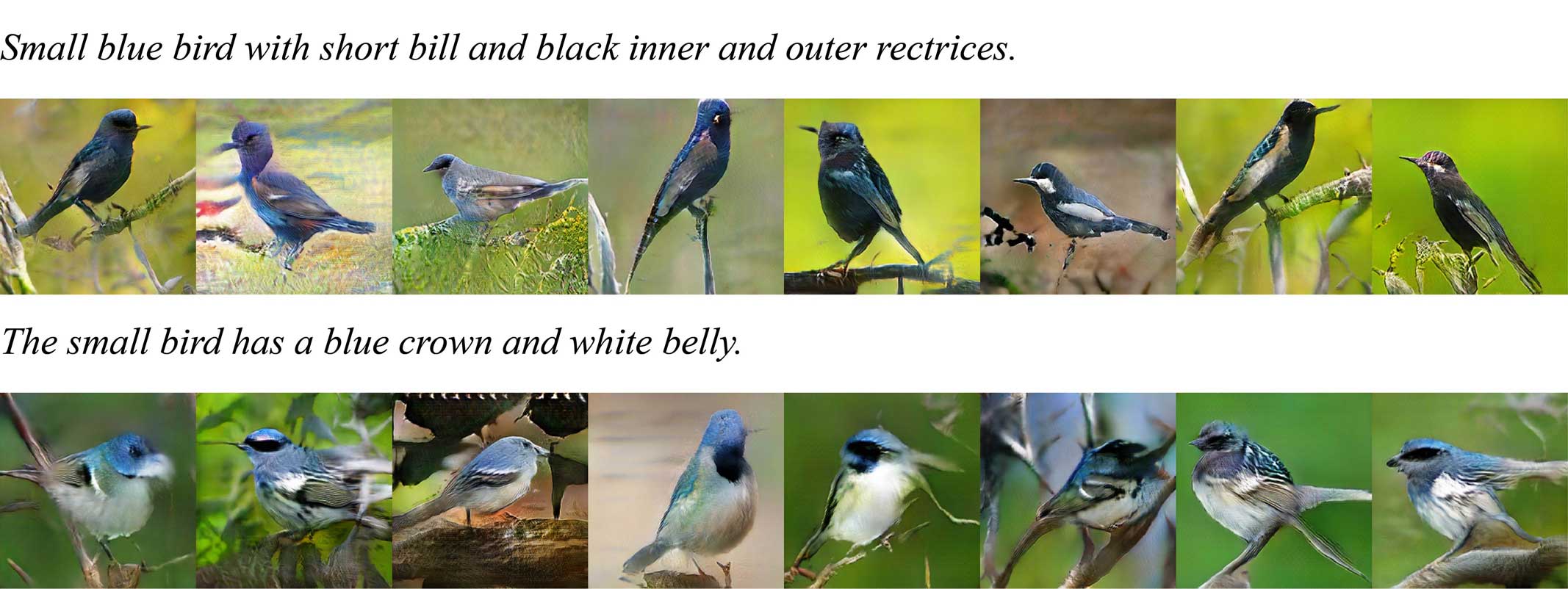}
    \caption[StackGAN Stage II birds dataset samples]{Samples (256$\times$256) generated by Stage II of StackGAN on the birds dataset.}
\end{figure}

\begin{figure}[h]
    \centering
    \includegraphics[width=\textwidth]{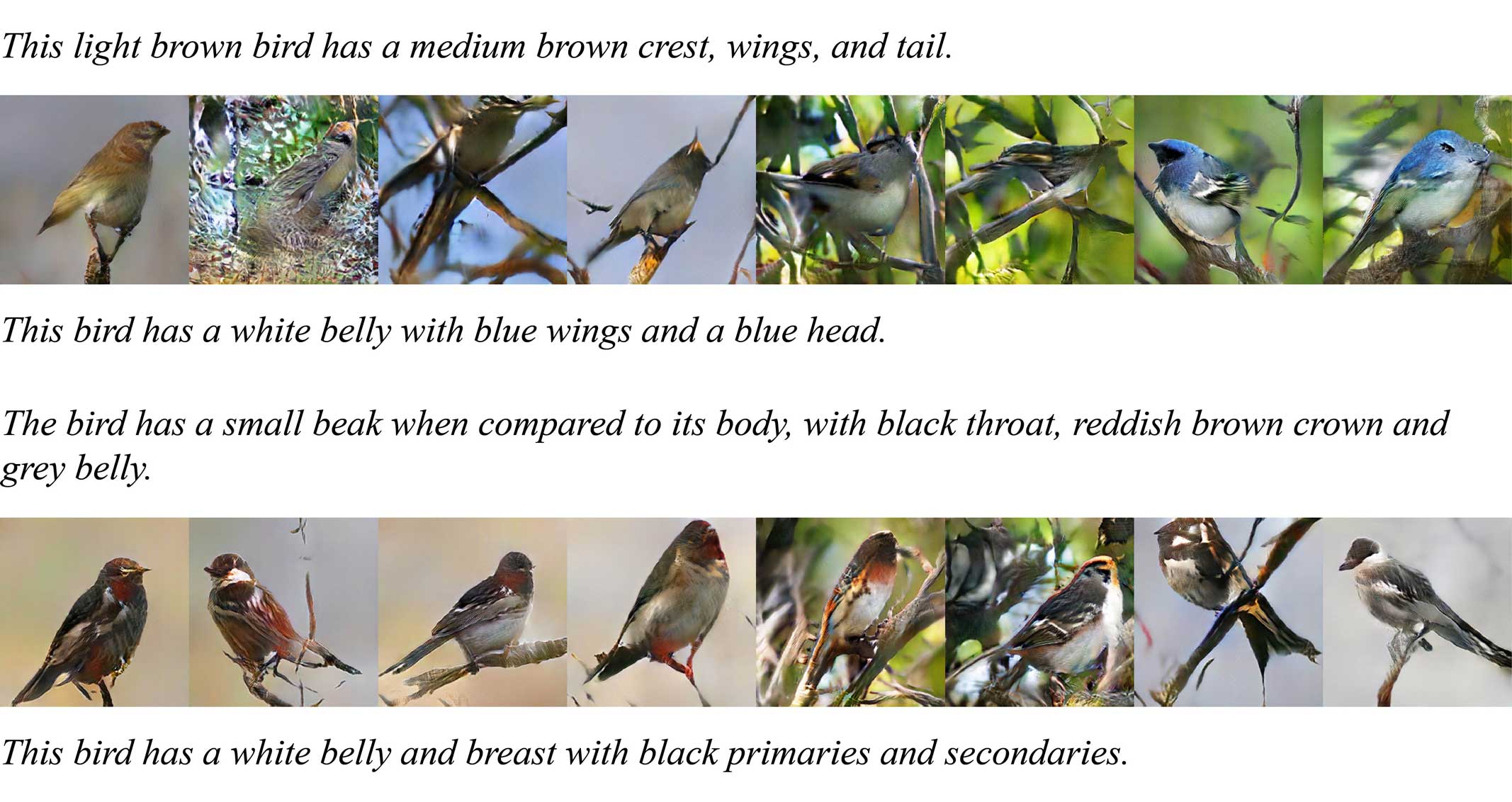}
    \caption[StackGAN Stage II birds dataset interpolations]{Samples generated by Stage II of StackGAN from interpolations of text embedding from the birds dataset.}
\end{figure}

\begin{figure}[h]
    \centering
    \includegraphics[width=\textwidth]{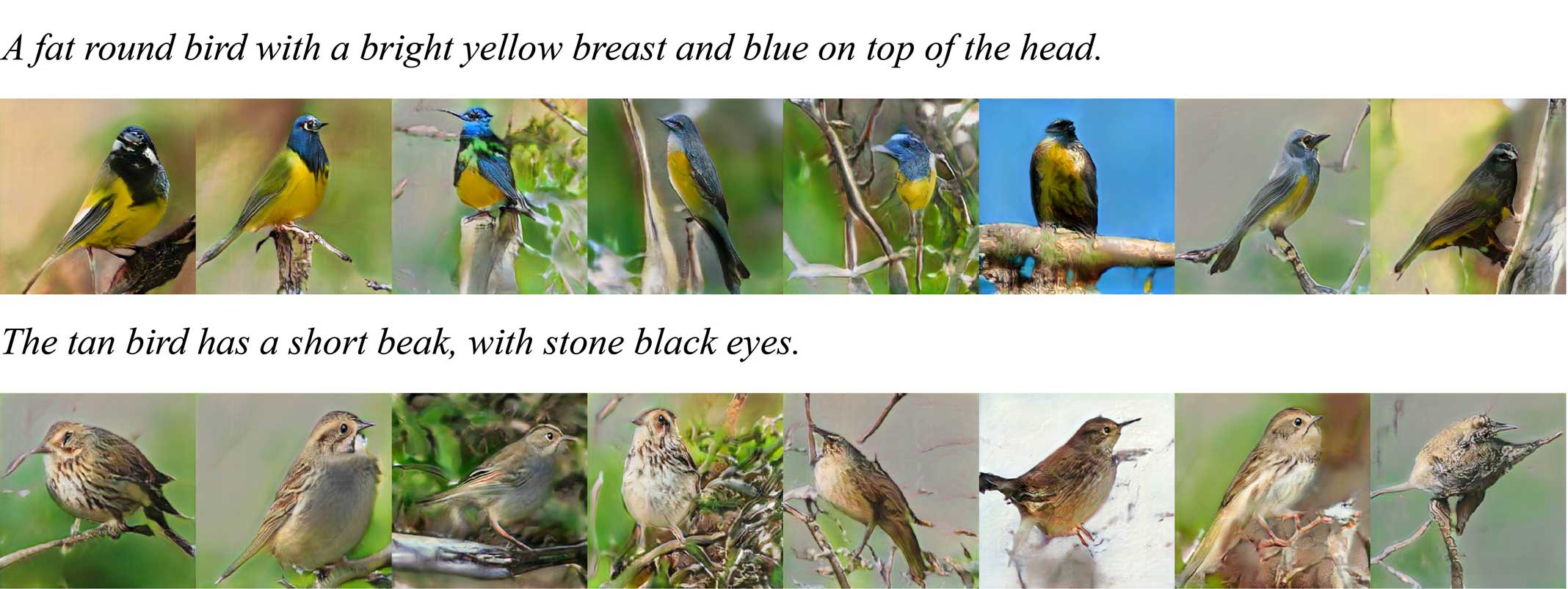}
    \caption[CWPGGAN birds dataset 256$\times$256 samples]{Samples (256$\times$256) generated by CWPGGAN on the birds dataset.}
\end{figure}

\begin{figure}[h]
    \centering
    \includegraphics[width=\textwidth]{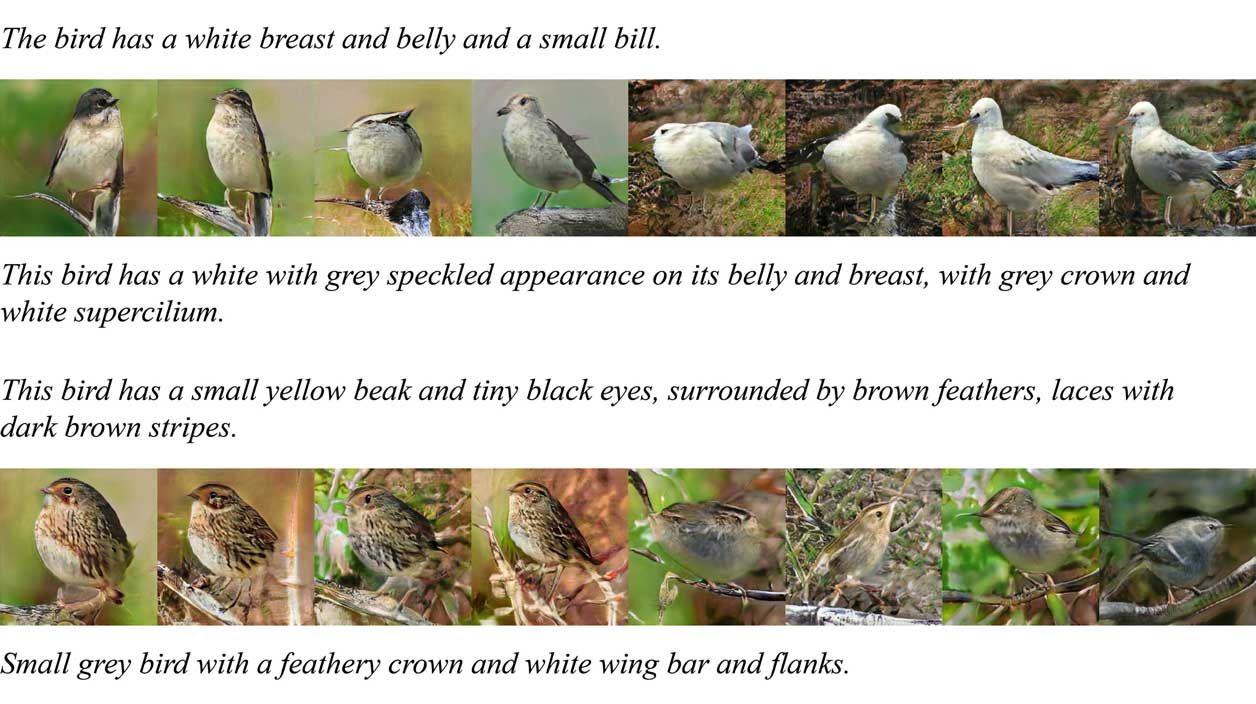}
    \caption[CWPGGAN birds dataset 256$\times$256 interpolations]{Samples (256$\times$256) generated by CWPGGAN from interpolations of text embedding from the birds dataset.}
\end{figure}

\begin{figure}[h]
    \centering
    \includegraphics[width=\textwidth]{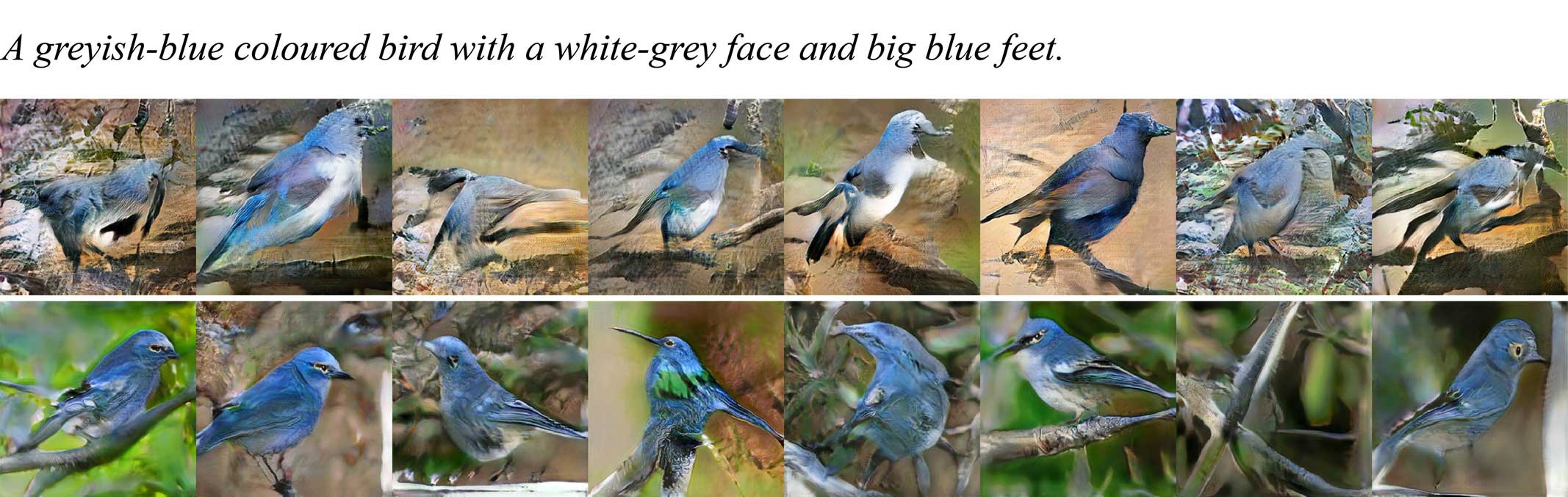}
    \caption[Comparison of 256$\times$256 models on the birds dataset (I)]{Comparison of 256$\times$256 models on the birds dataset: Stage II of StackGAN (first row), CWPGGAN (second row).}
\end{figure}

\begin{figure}[h]
    \centering
    \includegraphics[width=\textwidth]{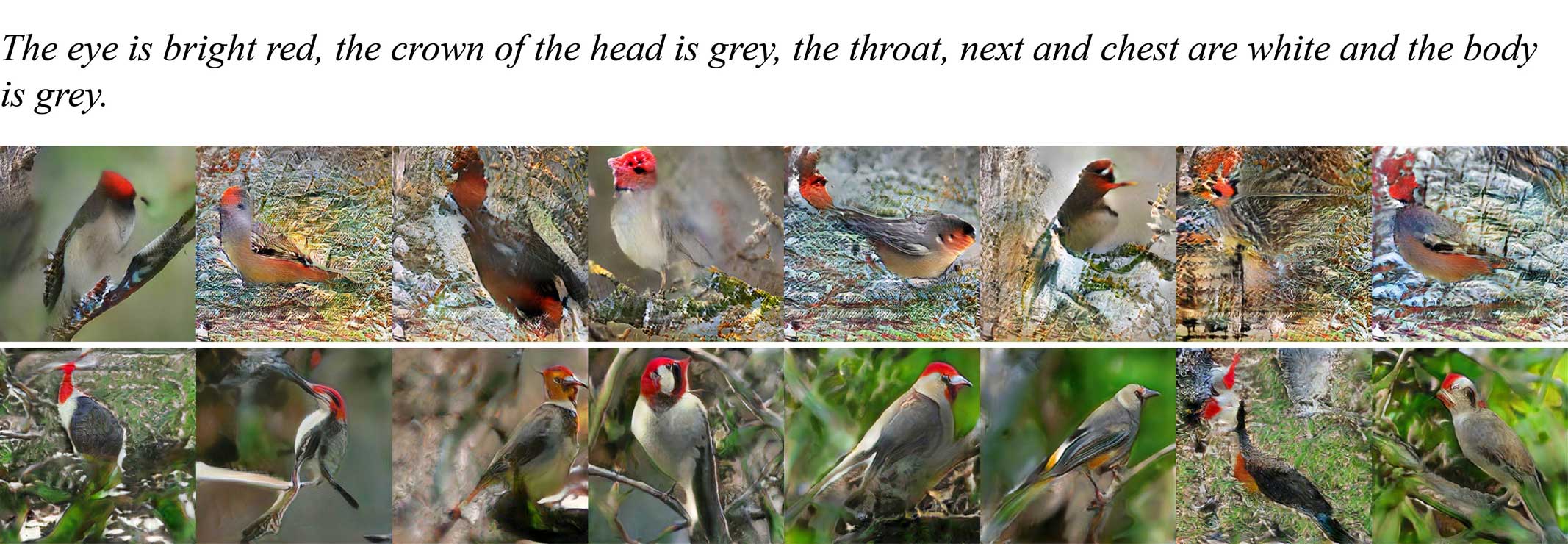}
    \caption[Comparison of 256$\times$256 models on the birds dataset (II)]{Comparison of 256$\times$256 models on the birds dataset: Stage II of StackGAN (first row), CWPGGAN (second row).}
\end{figure}

\chapter{Conditional PGGANs} \label{appendix:cpggan}

This section offers more details about the Conditional Least Squares and Wasserstein Progressive Growing GAN implementations. These details are omitted in the body of the work.

\begin{table}[h]
    \centering
    \resizebox{\linewidth}{!}{
    \begin{tabular}{| l  l  l |}
        \hline
        Generator & Act & Output shape \\ \hline
        $\vec{z} || \vec{e}$ & - & 256 \\ 
        Projection & linear & 4 $\times$ 4 $\times$ 512 \\
        Conv 3$\times$3 & ReLU & 4 $\times$ 4 $\times$ 512 \\
        Conv 3$\times$3 & ReLU & 4 $\times$ 4 $\times$ 512 \\ \hline
        Upsample & - & 8 $\times$ 8 $\times$ 512 \\
        Conv 3$\times$3 & ReLU & 8 $\times$ 8 $\times$ 512 \\
        Conv 3$\times$3 & ReLU & 8 $\times$ 8 $\times$ 512 \\ \hline
        Upsample & - & 16 $\times$ 16 $\times$ 512 \\
        Conv 3$\times$3 & ReLU & 16 $\times$ 16 $\times$ 512 \\
        Conv 3$\times$3 & ReLU & 16 $\times$ 16 $\times$ 512 \\ \hline
        Upsample & - & 32 $\times$ 32 $\times$ 512 \\
        Conv 3$\times$3 & ReLU & 32 $\times$ 32 $\times$ 512 \\
        Conv 3$\times$3 & ReLU & 32 $\times$ 32 $\times$ 512 \\ \hline
        Upsample & - & 64 $\times$ 64 $\times$ 512 \\
        Conv 3$\times$3 & ReLU & 64 $\times$ 64 $\times$ 256 \\
        Conv 3$\times$3 & ReLU & 64 $\times$ 64 $\times$ 256 \\ \hline
        Upsample & - & 128 $\times$ 128 $\times$ 256 \\
        Conv 3$\times$3 & ReLU & 128 $\times$ 128 $\times$ 128 \\
        Conv 3$\times$3 & ReLU & 128 $\times$ 128 $\times$ 128 \\ \hline
        Upsample & - & 256 $\times$ 256 $\times$ 128 \\
        Conv 3$\times$3 & ReLU & 256 $\times$ 256 $\times$ 64 \\
        Conv 3$\times$3 & ReLU & 256 $\times$ 256 $\times$ 64 \\ \hline
        Conv 2$\times$2 & ReLU & 256 $\times$ 256 $\times$ 9 \\
        Conv 1$\times$1 & linear & 256 $\times$ 256 $\times$ 3 \\ \hline
    \end{tabular}
    \begin{tabular}{| l  l  l |}
        \hline
        Discriminator & Act & Output shape \\ \hline
        Input image & - & 256 $\times$ 256 $\times$ 3 \\
        Conv 1$\times$1 & ReLU & 256 $\times$ 256 $\times$ 32 \\
        Conv 3$\times$3 & ReLU & 256 $\times$ 256 $\times$ 32 \\
        Conv 3$\times$3 & ReLU & 256 $\times$ 256 $\times$ 64 \\
        Downsample & - & 128 $\times$ 128 $\times$ 64 \\ \hline
        Conv 3$\times$3 & ReLU & 128 $\times$ 128 $\times$ 64 \\
        Conv 3$\times$3 & ReLU & 128 $\times$ 128 $\times$ 128 \\
        Downsample & - & 64 $\times$ 64 $\times$ 128 \\ \hline
        Conv 3$\times$3 & ReLU & 64 $\times$ 64 $\times$ 128 \\
        Conv 3$\times$3 & ReLU & 64 $\times$ 64 $\times$ 256 \\
        Downsample & - & 32 $\times$ 32 $\times$ 256 \\ \hline
        Conv 3$\times$3 & ReLU & 64 $\times$ 64 $\times$ 128 \\
        Conv 3$\times$3 & ReLU & 64 $\times$ 64 $\times$ 256 \\
        Downsample & - & 32 $\times$ 32 $\times$ 256 \\ \hline
        Conv 3$\times$3 & ReLU & 32 $\times$ 32 $\times$ 256 \\
        Conv 3$\times$3 & ReLU & 32 $\times$ 32 $\times$ 512 \\
        Downsample & - & 16 $\times$ 16 $\times$ 512 \\ \hline
        Conv 3$\times$3 & ReLU & 16 $\times$ 16 $\times$ 512 \\
        Conv 3$\times$3 & ReLU & 16 $\times$ 16 $\times$ 512 \\
        Downsample & - & 8 $\times$ 8 $\times$ 512 \\ \hline
        Conv 3$\times$3 & ReLU & 8 $\times$ 8 $\times$ 512 \\
        Conv 3$\times$3 & ReLU & 8 $\times$ 8 $\times$ 512 \\
        Downsample & - & 4 $\times$ 4 $\times$ 512 \\ \hline
        Embedding concatenation  & ReLU & 4 $\times$ 4 $\times$ 640 \\
        Conv 3$\times$3 & ReLU & 4 $\times$ 4 $\times$ 512 \\
        Conv 4$\times$4 & ReLU & 1 $\times$ 1 $\times$ 512 \\
        Fully connected & linear & 1 $\times$ 1 $\times$ 1 \\ \hline
      
    \end{tabular}
    }
    \caption[Conditional PGGAN architecture]{All the layers of the Conditional PGGAN generator (left) and discriminator (right). All the stages are separated by horizontal lines. }
    \label{tab:pggan_layers}
\end{table}

\section{More on Architecture and Training}

To be able to fit the model in memory at the high-resolution stages, small batch sizes have to be used. I use a batch size of 16 until the 64$\times$64 resolution is reached (including this resolution). For the higher resolutions, I use a batch size of 8. Because the performance of batch normalisation depends on bigger batch sizes, I use layer normalisation in the generator. In the discriminator, I use layer normalisation only for CLSPGGAN. The detailed architecture of the Conditional PGGAN is given in Table \ref{tab:pggan_layers}.

For training the Conditional Wasserstein PGGAN I use $n_{critic} = 1$ and a learning rate of $0.0001$ both for the generator and the discriminator. The other parameters are $\alpha = 1$, $\lambda = 150$, $\rho = 8$. The Adam optimiser is used with $\beta_{1} = 0$ and $\beta_2 = 0.99$.

For training the Conditional Least Squares PGGAN I use a learning rate of $0.0001$ both for the generator and the discriminator and $\rho = 8$. The Adam optimiser is used with $\beta_{1} = 0.5$ and $\beta_2 = 0.9$.

\addcontentsline{toc}{chapter}{Bibliography}
\bibliography{refs}        
\bibliographystyle{plain}  

\end{document}